\newtheorem{theorem}{Theorem}[section]
\newtheorem{assumption}{Assumption}[section]
  \providecommand\BibTeX{{%
    \normalfont B\kern-0.5em{\scshape i\kern-0.25em b}\kern-0.8em\TeX}}}
\begin{document}

\title{EntropyStop: Unsupervised Deep Outlier Detection with Loss Entropy}

\author{Yihong Huang}
\affiliation{%
  \institution{East China Normal University}
 \city{Shanghai}
  \country{China}}
\email{hyh957947142@gmail.com}

\author{Yuang Zhang}
\affiliation{%
  \institution{East China Normal University}
 \city{Shanghai}
  \country{China}}
\email{51255902045@stu.ecnu.edu.cn}

\author{Liping Wang}
\authornote{Corresponding author.}
\affiliation{%
  \institution{East China Normal University}
 \city{Shanghai}
  \country{China}}
\email{lipingwang@sei.ecnu.edu.cn}

\author{Fan Zhang}
\affiliation{%
  \institution{Guangzhou University}
 \city{Guangzhou}
  \country{China}}
\email{fanzhang.cs@gmail.com}

\author{Xuemin Lin}
\affiliation{%
  \institution{Shanghai Jiao Tong University}
 \city{Shanghai}
  \country{China}}
\email{xuemin.lin@gmail.com}

\renewcommand{\shortauthors}{
Yihong Huang, Yuang Zhang, Liping Wang, Fan Zhang, \& Xuemin Lin}
\definecolor{newtext}{rgb}{0, 0, 0}
\newcommand\newtext[1]{\textcolor{newtext}{#1}}

\begin{abstract}
Unsupervised Outlier Detection (UOD) is an important data mining task. With the advance of deep learning, deep Outlier Detection (OD) has received broad interest. Most deep UOD models are trained exclusively on clean datasets to learn the distribution of the normal data, which requires huge manual efforts to clean the real-world data if possible. Instead of relying on clean datasets, some approaches directly train and detect on unlabeled contaminated datasets, leading to the need for methods that are robust to such challenging conditions. Ensemble methods emerged as a superior solution to enhance model robustness against contaminated training sets. However, the training time is greatly increased by the ensemble mechanism. 

In this study, we investigate the impact of outliers on training, aiming to halt training on unlabeled contaminated datasets before performance degradation. Initially, we noted that blending normal and anomalous data causes AUC fluctuations—a label-dependent measure of detection accuracy. To circumvent the need for labels, we propose a zero-label entropy metric named Loss Entropy for loss distribution, enabling us to infer optimal stopping points for training without labels. Meanwhile, a negative correlation between entropy metric and the label-based AUC score is demonstrated by theoretical proofs. Based on this, an automated early-stopping algorithm called EntropyStop is designed to halt training when loss entropy suggests the maximum model detection capability. 
We conduct extensive experiments on ADBench (including 47 real datasets), and the overall results indicate that AutoEncoder (AE) enhanced by our approach not only achieves better performance than ensemble AEs but also requires under \newtext{2\%} of training time. Lastly, loss entropy and EntropyStop are evaluated on other deep OD models, exhibiting their broad potential applicability.

\end{abstract}

\begin{CCSXML}
<ccs2012>
   <concept>
       <concept_id>10010147.10010257.10010258.10010260.10010229</concept_id>
       <concept_desc>Computing methodologies~Anomaly detection</concept_desc>
       <concept_significance>500</concept_significance>
       </concept>
   <concept>
       <concept_id>10010147.10010257.10010293.10010294</concept_id>
       <concept_desc>Computing methodologies~Neural networks</concept_desc>
       <concept_significance>500</concept_significance>
       </concept>
   <concept>
       <concept_id>10010147.10010257</concept_id>
       <concept_desc>Computing methodologies~Machine learning</concept_desc>
       <concept_significance>500</concept_significance>
       </concept>
 </ccs2012>
\end{CCSXML}

\ccsdesc[500]{Computing methodologies~Anomaly detection}
\ccsdesc[500]{Computing methodologies~Neural networks}
\ccsdesc[500]{Computing methodologies~Machine learning}

\keywords{Anomaly Detection, Outlier Detection, Unsupervised Learning, Internal Evaluation}


\maketitle

\section{Introduction}

Outlier Detection (OD) is a fundamental machine learning task, which aims to detect the instances that significantly deviate from the majority \cite{od-survey}. In some contexts, outliers are also named as anomalies, deviants, novelties, or exceptions \cite{od-survey}. Due to various applications of OD in high-impact domains (e.g. financial fraud \cite{financial-example}), numerous researchers are devoted to proposing algorithms to tackle OD \cite{ADbench,Ts-benchmark,god-benchmark}. According to the availability of labels, OD tasks and solutions can be categorized into Supervised OD, Semi-Supervised OD, and Unsupervised OD \cite{inlier-priority}. With the rapid development of deep learning, deep OD algorithms are proposed increasingly \cite{deep-od-survey-2021,deep-od-survey-2019,deep-od-survey-3}. Compared to traditional algorithms, deep ODs can handle various kinds of complex data and high-dimensional data more effectively.

\begin{figure}[!htbp]
  \centering
  \includegraphics[scale=0.47]{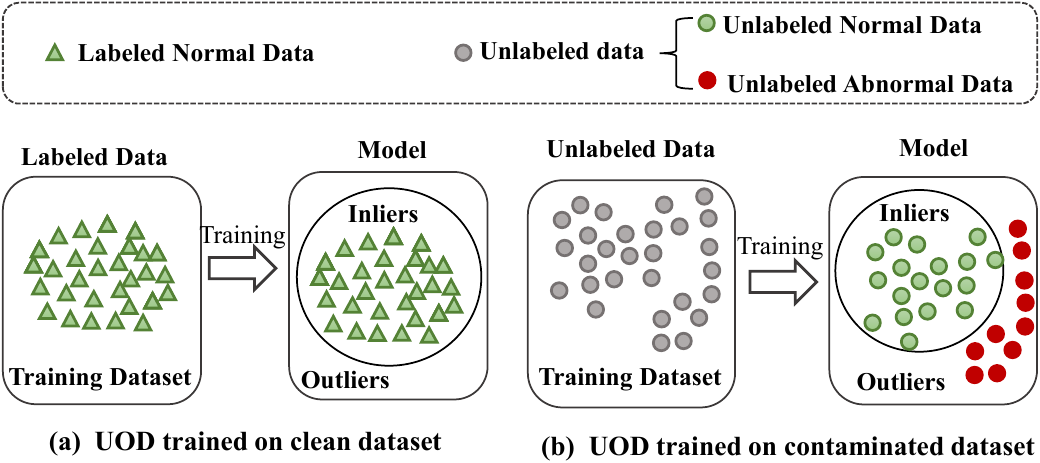}
  \caption{Two paradigms of unsupervised OD}
  \label{Fig:two-uod}
\end{figure}

Unsupervised OD (UOD) aims to identify outliers in a contaminated dataset (i.e., a dataset consisting of both normal data and outliers) without the availability of labeled data \cite{inlier-priority}. 
While the study on deep UOD is extensive, it is crucial to distinguish between two fundamentally different paradigms within this domain. The first type, as shown in Fig. \ref{Fig:two-uod}(a), refers to the algorithms that are trained exclusively on clean datasets, e.g. DeepSVDD \cite{deep-svdd}, NeuTraL AD \cite{NTL}, ICL \cite{ICL}, AnoGAN \cite{AnoGAN}. These UOD algorithms operate on the premise that the training set is devoid of outliers, allowing the trained models to be applied to new test datasets containing potential anomalies. This approach necessitates the manual collection of large normal data, which imposes a burden before OD.

Conversely, the second type of UOD algorithms, shown in Fig. \ref{Fig:two-uod}(b), are designed to operate directly on the dataset that contains outliers, e.g., RandNet \cite{randnet}, ROBOD \cite{robod}, RDP \cite{RDP}, RDA \cite{RDA}, IsolationForest \cite{IsolationForest}, GAAL \cite{gan-ensemble}.  These models are capable of identifying outliers within the training set itself or, after being trained on a contaminated dataset, can be deployed to detect anomalies in new data—provided that the distribution of the new data aligns with that of the original training set.  
The focus of our work is on the second paradigm where the UOD models are trained on contaminated datasets, which is more challenging. In this paper, we will discuss the purely unsupervised scenario where there is no available label for both training and validation. 
For the sake of convenience, the term Unsupervised OD mentioned in the remainder of this article, unless specifically stated otherwise, will refer to OD in the purely unsupervised setting.
\vspace{-2mm}
\begin{figure}[!htbp]
  \centering
  \includegraphics[width=0.23\textwidth]{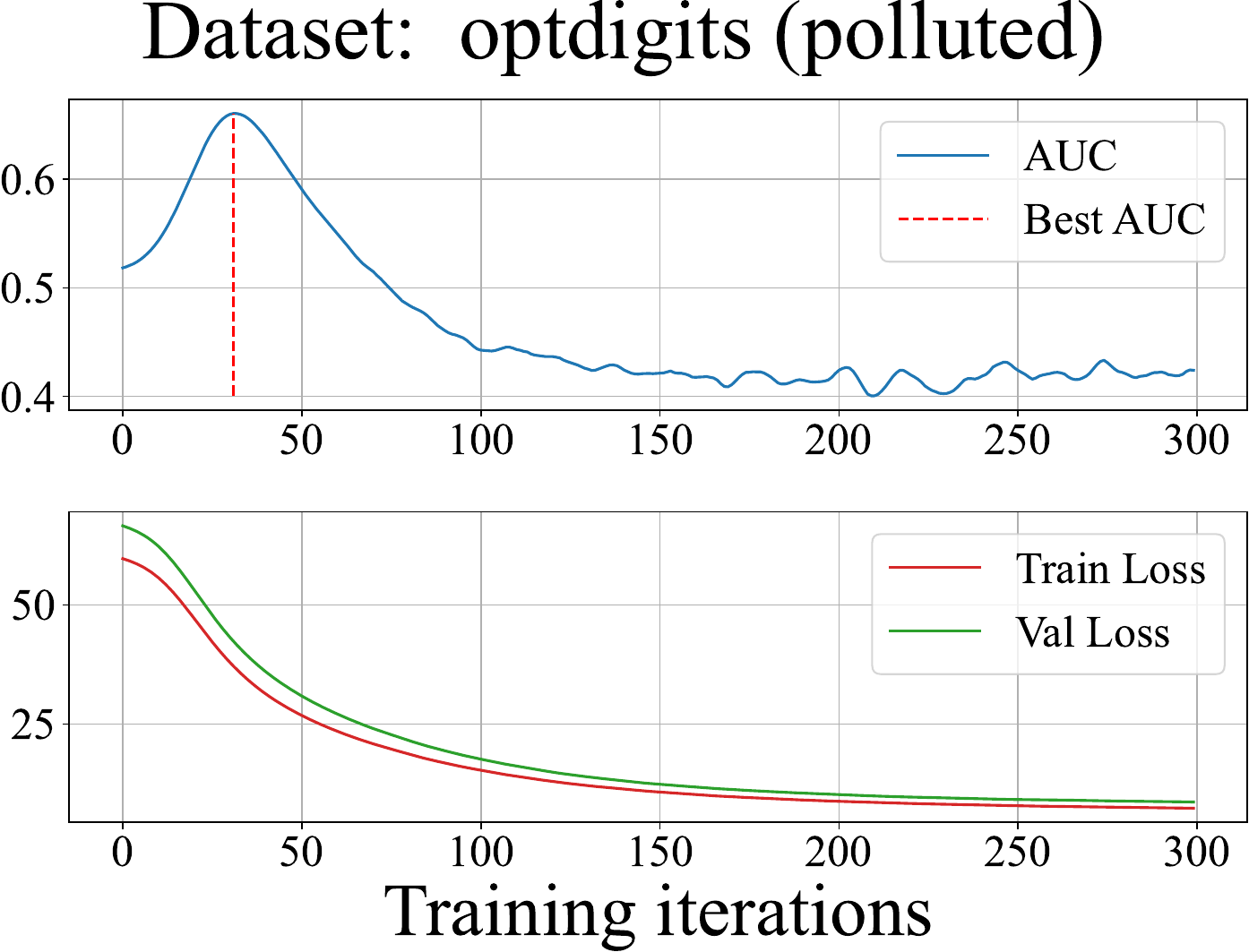}
  \includegraphics[width=0.23\textwidth]{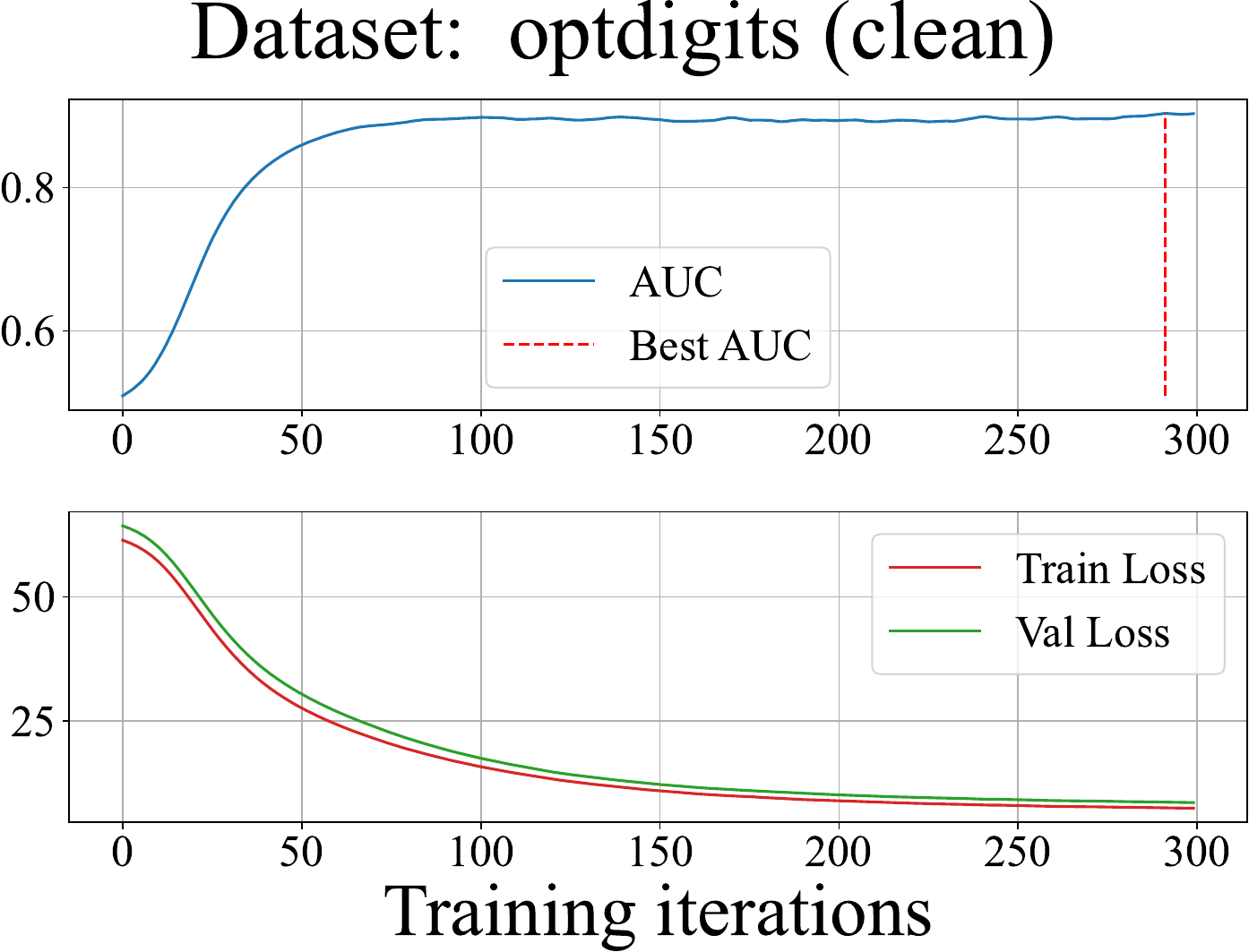}
  \includegraphics[width=0.23\textwidth]{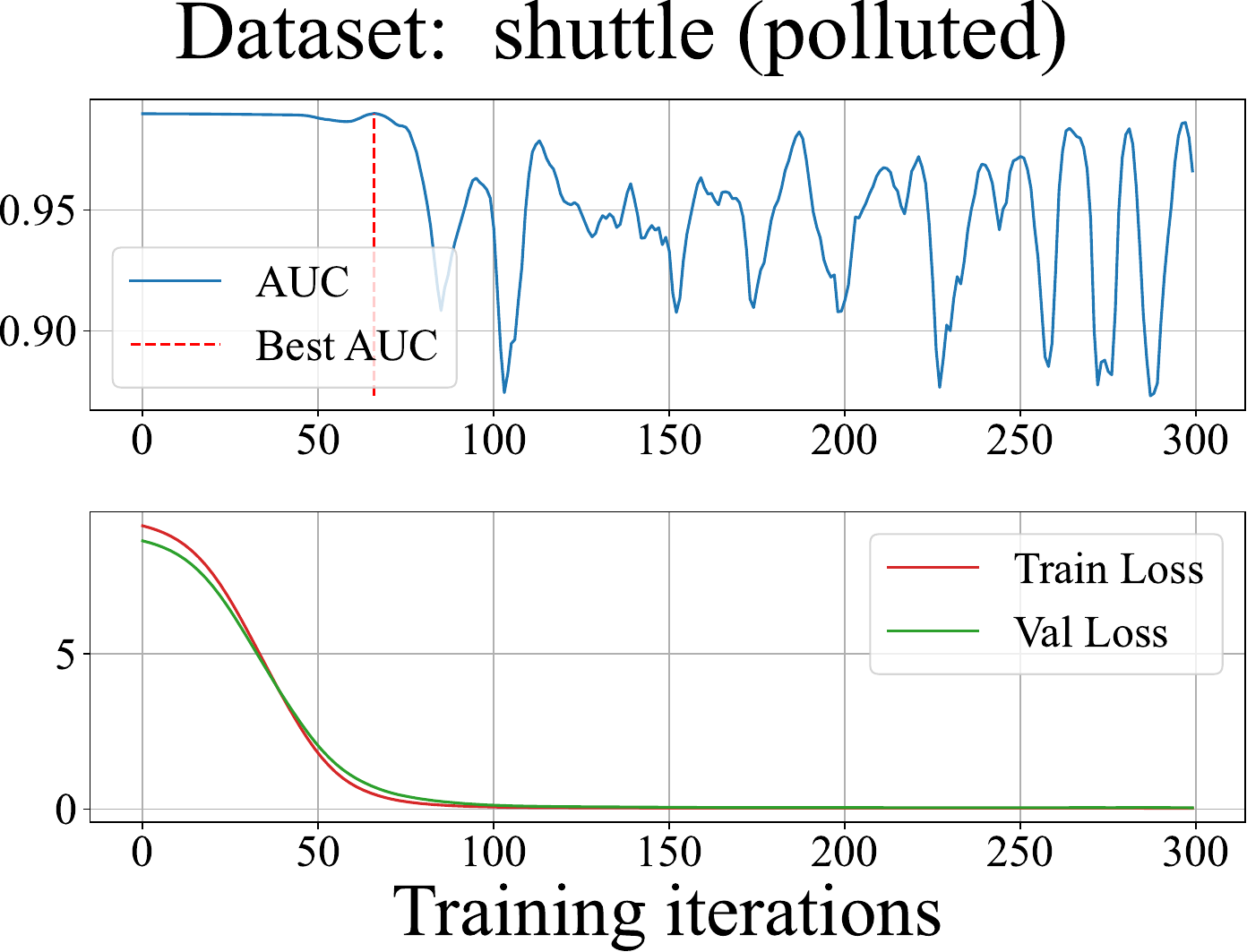}
  \includegraphics[width=0.23\textwidth]{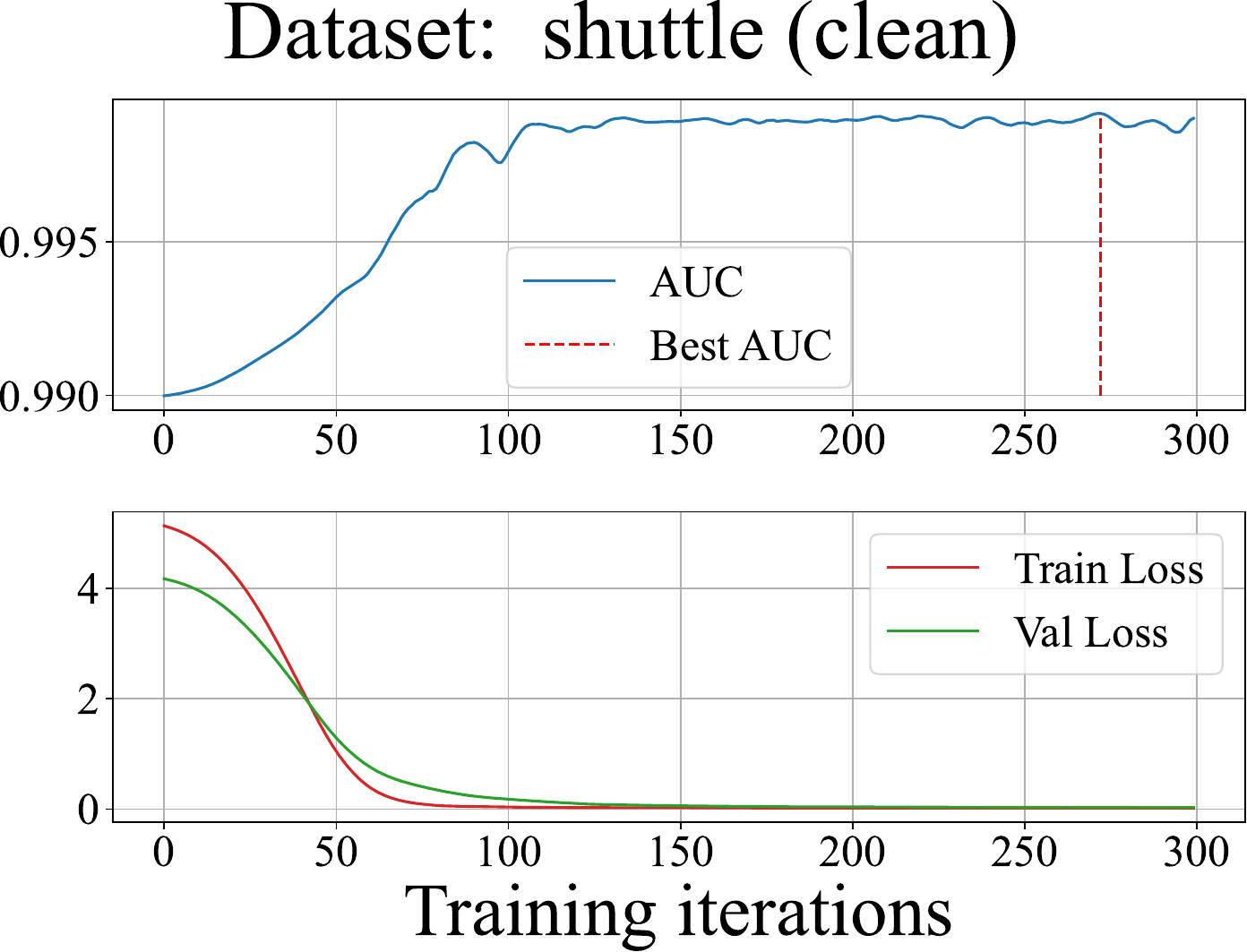}
  \caption{UOD training process of AutoEncoder on 2 datasets}
  \label{Fig:loss-auc}
\end{figure}
\vspace{-1mm}

\noindent \textbf{\textit{Challenge}}. \newtext{It is well-known that the model trained on a contaminated dataset will result in a much worse performance. Fig. \ref{Fig:loss-auc} shows the trend of Area Under Curve (AUC) \cite{auc} and loss throughout the unsupervised training process of an AutoEncoder (AE) across two datasets.
The dataset is divided into training and validation sets. "Polluted" indicates the presence of outliers in the datasets, whereas "clean" signifies that the datasets contain only normal samples. The AUC is calculated using the labels by Eq. \ref{eq: auc}. Note that we assume labels of the validation set in the "polluted" setting are not available here for evaluation, adhering to a purely unsupervised paradigm.
Fig. \ref{Fig:loss-auc} shows that when the AE model is trained on a clean dataset, the AUC increases steadily until convergence. However, on the contaminated dataset, the AUC exhibits significant fluctuations, and no such noteworthy features are observed in the loss curve. Such fluctuations in AUC can be attributed to the AE model's objective of minimizing loss across both normal and anomalous data, leading to a scenario where a reduction in total loss does not necessarily equate to enhanced detection capabilities. }The compulsory divergence between unsupervised training objective and application objective leads to the observed volatility in AUC.

To solve the above issue, the SOTA deep UOD models adopt an ensemble learning approach \cite{gan-ensemble,RDP,randnet,robod} to enhance the model's robustness to outliers. Their  strategy is to train multiple OD models (such as AEs) and use the results of voting to enhance the robustness and improve detection performance.
To generate diverse voting outcomes, models are intentionally overfitted to the dataset through varying configurations, such as different random seeds or hyperparameters (HPs) \cite{robod}, and extended training periods \cite{randnet}.
However, overtraining a large number of models imposes significant time and computation costs.

\vspace{1mm}
\noindent \textbf{\textit{Our Solution}}. The current dilemma: the presence of anomalies diminishes training effectiveness, while existing ensemble solutions improve performance at the sacrifice of efficiency. Different from current methods, we propose a novel approach through data distribution analysis. This work employs early stopping to mitigate the negative effects of outliers in the training sets, thus improving training efficiency and effectiveness. Specifically, this work delves into the impact of outliers on the model training process. We first identify the existence of a loss gap (i.e., the expected difference in training loss between outliers and inliers) and introduce a novel metric, the entropy of loss distribution in different training iterations, to reflect changes in the detection capability during training. We theoretically demonstrate that under certain assumptions, \newtext{an increase in AUC is likely to cause a decrease in loss entropy, with the converse also holding.} Notably, unlike AUC, the computation of entropy does not require labels. In this case, we can utilize the entropy curve to mirror changes in the AUC curve (examples are given in Fig. \ref{Fig:auc-entropy-corre}). Surprisingly, our experiments reveal a strong correlation between the two metrics across numerous real-world datasets. Leveraging this, we propose a label-free early stopping algorithm that uses entropy minimization as a cue for optimal training cessation. 


Our experiments across 47 real datasets \cite{ADbench} observed that AE models often achieve high AUC relatively early in training, and our entropy-based early stopping algorithm effectively identifies these moments to automatically halt training. The results demonstrate that our method significantly enhances the detection performance of AE, while significantly reducing training time compared to AE ensemble solutions.
Lastly, we discovered that the entropy-based early stopping algorithm can also be extended to other deep UOD models, exhibiting their broad potential applicability. 
The contributions of this paper are as follows:
\begin{itemize}
    \item We conduct an in-depth analysis of the impact of outliers (anomalies) during the training process of deep UOD models, based on the principle of imbalance between normal and anomalous instances.
    \item We propose a novel metric (loss entropy), i.e., the entropy of loss distribution, to reflect changes in modeling AUC with no labels. To the best of our knowledge, this is the first indicator that can predict changes in model performance without labels, validated across a multitude of real datasets.
    \item We develop an automated early-stopping algorithm that can automatically help UOD models avoid fitting on anomalous data and reduce training time.
    \item We conduct extensive experiments to demonstrate the efficacy of our metric and algorithm, validating the superior performance compared to ensemble solutions while requiring a minor fraction of time.
\end{itemize}
To foster future research, we
open source all codes at 
\newtext{\url{https://github.com/goldenNormal/EntropyStop-KDD2024}}.


\section{Related Work}
\subsection{Unsupervised Outlier Detection}
Unsupervised outlier detection (UOD) is a vibrant research area, which aims at detecting outliers in datasets without any label during the training \cite{od-survey}. Solutions for unsupervised OD can be broadly categorized into shallow (traditional) \cite{IsolationForest, lof,knn} and deep (neural network) methods. Compared to traditional counterparts, deep methods handles large, high-dimensional and complex data better \cite{deep-od-survey-2019,deep-od-survey-2021,deep-od-survey-3}. 
Most deep UOD models \cite{ICL,NTL,deep-svdd,AnoGAN} are trained \textit{exclusively on clean datasets} to learn the distribution of the normal data. A fundamental premise of this methodology presupposes the availability of clean training data to instruct the model on the characteristics of "normal" instances. However, this assumption frequently encounters practical challenges, as datasets are often enormous and may inadvertently include anomalies that the model seeks to identify \cite{LOE}. In response to this dilemma, certain studies \cite{dagmm,RDA,RDP} venture into developing deep UOD algorithms that operate directly on contaminated datasets. Model ensemble approaches are proposed for their outstanding performance and robustness, coupled with a diminished sensitivity to hyperparameters (HPs) \cite{robod,randnet,gan-ensemble}. Additionally, efforts are made to adapt models originally trained on clean datasets to contaminated ones through outlier refinement processes \cite{LOE,outlier_refine_2015,outlier_refine_2021}. Nevertheless, to our best knowledge, the existing UOD studies do not capture the changes in model performance during the training to enable effective early stopping. 


\subsection{Early Stopping Techniques}
Early stopping is an effective and broadly used technique in machine learning. Early stopping algorithms are designed to monitor and stop the training when it no longer benefits the final performance. A well-known application of early stopping is to use it as a regularization method to tackle overfitting problems with cross-validation, which can be traced back to the 1990s \cite{ES_1st}. Recently, with a deeper understanding of learning dynamics, early stopping is also found practical in noisy-labeled scenarios \cite{ES_noise_0, ES_noise_1, ES_noise_2, ES_noise_3}. According to these previous studies, overfitting to the noisy samples in the later stage of training decreases the model's performance, and can be mitigated by early stopping. Previous works show the outstanding ability of early stopping to deal with noisy learning environments. However, existing researches focus on supervised or semi-supervised settings, while early stopping in unsupervised \newtext{contaminated training set} is significantly more challenging.
To our best knowledge, we are the first to apply a label-free and distribution-based heuristic to explore the potential of early stopping in Unsupervised OD \newtext{on contaminated training sets.}

\section{Preliminary}
\noindent \textbf{Problem Formulation} (Unsupervised OD).
\textit{Considering a data space $\mathcal{X}$,
an unlabeled dataset $D = \{\textbf{x}_j\}_{j=1}^n$ consists of an inlier set $D_{in}$ and an outlier set $D_{out}$, which originate from two different underlying distributions $\mathcal{X}_{in}$ and $\mathcal{X}_{out}$, respectively \cite{uod-definition}. The goal is to learn an outlier score function $f(\cdot)$ to calculate the outlier score value $v_j = f(\textbf{x}_j)$ for each data point $\textbf{x}_j \in D$. Without loss of generality, a higher $f(\textbf{x}_j)$ indicates more likelihood of $\textbf{x}_j$ to be an outlier.
}

\noindent \textbf{Unsupervised Training  Formulation for OD.} Given a UOD model $M$, at each iteration, a batch of instances $D^{b} = \{x_0,x_1,...,x_{n}\}$ is sampled from the data space $\mathcal{X}$. The loss $\mathcal{L}$ for model $M$ is calculated over $D^{b}$ as follows:
$$
\mathcal{L}(M; D^{b}) = \frac{1}{|D^{b}|} \sum_{x \in D^{b}} \mathcal{J}_M(x) =  \frac{1}{|D^{b}|} \sum_{x \in D^{b}} f_M(x) = \frac{1}{|D^{b}|} \sum_i v_i  
$$
where $\mathcal{J}_M(\cdot)$ denotes the unsupervised loss function of $M$ while  $\mathcal{L}$ denotes the loss based on which the model $M$ updates its parameters by minimizing $\mathcal{L}$, with assumption that the learning rate $\eta$ is sufficiently small. In addition, we assume the unsupervised loss function $\mathcal{J}_M(\cdot)$ and outlier score function $f_M(\cdot)$ are exactly the same in our context. If this does not hold, at least the Assumption \ref{assum:align} should be 
 satisfied in our context.
Throughout the training process, no labels are available to provide direct training signals, nor are there validation labels to evaluate the model's performance. 
Since $f_M(x) > 0$ typically holds, we assume $f_M(x) >0$.

 \begin{assumption}[Alignment] 
$$
\forall \textbf{x}_i, \textbf{x}_j \sim X, \quad f_M(\textbf{x}_i) < f_M(\textbf{x}_j) \iff \mathcal{J}_M(\textbf{x}_i) < \mathcal{J}_M(\textbf{x}_j)
$$
\label{assum:align}
\end{assumption}

\noindent \textbf{Objective: }
The objective is to train the model $M$ such that it achieves the best detection performance on $\mathcal{X}$. Specifically, we aim to maximize the probability that an inlier from $\mathcal{X}_{in}$ has a lower outlier score than an outlier from $\mathcal{X}_{out}$, i.e., 
\begin{align}
        P(v^{-} < v^{+}) = P(f_M(x_{in}) < f_M(x_{out})| x_{in} \sim \mathcal{X}_{in}, x_{out} \sim \mathcal{X}_{out})
        \label{eq:detection-auc}
\end{align}
as large as possible, where $f_M(\cdot)$ is the outlier score function learned by model $M$. Let $\mathcal{O}_{in}$ and $\mathcal{O}_{out}$ represent the distributions of $f_M(x)$, where $x$ is drawn from $\mathcal{X}{in}$ and $\mathcal{X}{out}$, respectively. Therefore, $v^{-} \sim \mathcal{O}_{in}$ and $v^{+} \sim \mathcal{O}_{out}$ denotes the corresponding random variable of outlier score.

\noindent \textbf{The relationship between $P(v^- < v^+)$ and AUC  .} 
AUC \cite{auc} is a widely-used metric to evaluate the outlier detection performance, which can be formulated as:
\begin{equation}
 AUC(M,D) = \frac{1}{|D_{in}| |D_{out}|} \sum_{\textbf{x}_i \in D_{in}} \sum_{\textbf{x}_j \in D_{out}}\mathbb{I}(f_M(\textbf{x}_i) < f_M(\textbf{x}_j)) 
 \label{eq: auc}
 \end{equation}
 where $\mathbb{I}$ is an indicator function. 
Note that in practice, AUC is discretely computed on a real dataset, and the expression $P(v^- < v^+)$ is the continuous form of AUC. $P(v^- < v^+)$ signifies the model's inherent capability to distinguish between inliers and outliers from a view of the data distribution instead of a certain dataset.

\section{Methodology}

In this section, we elucidate how early stopping can enhance the training effectiveness of unsupervised OD models on contaminated datasets. Initially, we introduce the concept of \textit{loss gap} and explain the prevalence of \textit{inlier priority}, which refers to the phenomenon that the average loss of normal samples invariably remains lower than that of anomalous samples. Subsequently, we introduce a novel metric, Loss Entropy ($H_L$), which mirrors changes in the model's detection capability. Notably, the calculation of $H_L$ does not involve labels, making it a purely internal evaluation metric.  Finally, leveraging the proposed $H_L$, we design an early stopping algorithm $EntropyStop$ that can cease training automatically when the $H_L$ is sufficiently small.

\subsection{Loss Gap and Inlier Priority}
\subsubsection{\textbf{Loss Gap}} Firstly, we propose the concept of loss gap, which can reflect the fitting  difference between inliers and outliers.
Given that a batch of dataset $D^{b}$ can be divided into two parts, $D_{in}^b$ and $D_{out}^b$, the average loss for both the normal and abnormal part can be calculated as $\mathcal{L}_{in}$ and $\mathcal{L}_{out}$, respectively. The term "loss gap" refers to the gap between the two average loss values. Thus, we define the loss gap as follows:
\begin{equation}
    \mathcal{L}_{{in}} = \frac{1}{|D_{in}^b|} \sum f_M(\textbf{x}_i), \quad \textbf{x}_i \in D_{in}^b
    \label{def:inlier-loss-on-D}
\end{equation}
\begin{equation}
    \mathcal{L}_{out} = \frac{1}{|D_{out}^b|} \sum f_M(\textbf{x}_i), \quad \textbf{x}_i \in D_{out}^b
    \label{def:outlier-loss-on-D}
\end{equation}
\begin{equation}
  L_{gap} = \mathcal{L}_{out} - \mathcal{L}_{in}  
  \label{loss gap}
\end{equation}

\subsubsection{\textbf{The prevalence of the inlier priority}}
\label{inlier-priority-sec}
Typically, $L_{gap} > 0$ is usually observed  during the training, which is called as \textit{\textbf{inlier priority}} in the literature \cite{inlier-priority}.
The reason can be attributed as follows.
Outliers refer to points that deviate significantly from the vast majority, such as noise. A characteristic of outliers is their scarcity and the significant distinction in their pattern from most points. 
\newtext{In some scenarios, although outliers can be similar to inliers in attributes, they are still  relatively scarce and have patterns and distributions that are different from the majority of the dataset. This distinction can be utilized by UOD algorithms, assigning higher scores to outliers.}
Therefore, the model tends to generate greater losses for outlier samples compared to normal ones. Consequently, it is often observed during training that the loss associated with outlier samples exceeds that of normal samples, indicating a gap in loss values. This gap helps  outlier detectors identify outliers with greater loss.  Examples of loss gap are shown in Fig. \ref{Fig:inlier-priority} that there is a gap between $\mathcal{L}_{in}$ and $\mathcal{L}_{out}$ while $\mathcal{L}_{in} < \mathcal{L}_{out}$  holds during the training.  $\mathcal{L}_{in} < \mathcal{L}_{out}$ can also be explained in following two perspectives \cite{inlier-priority, DRAE-inlier-priority}:

\begin{figure}[!htbp]
  \centering
  \includegraphics[scale=0.255]{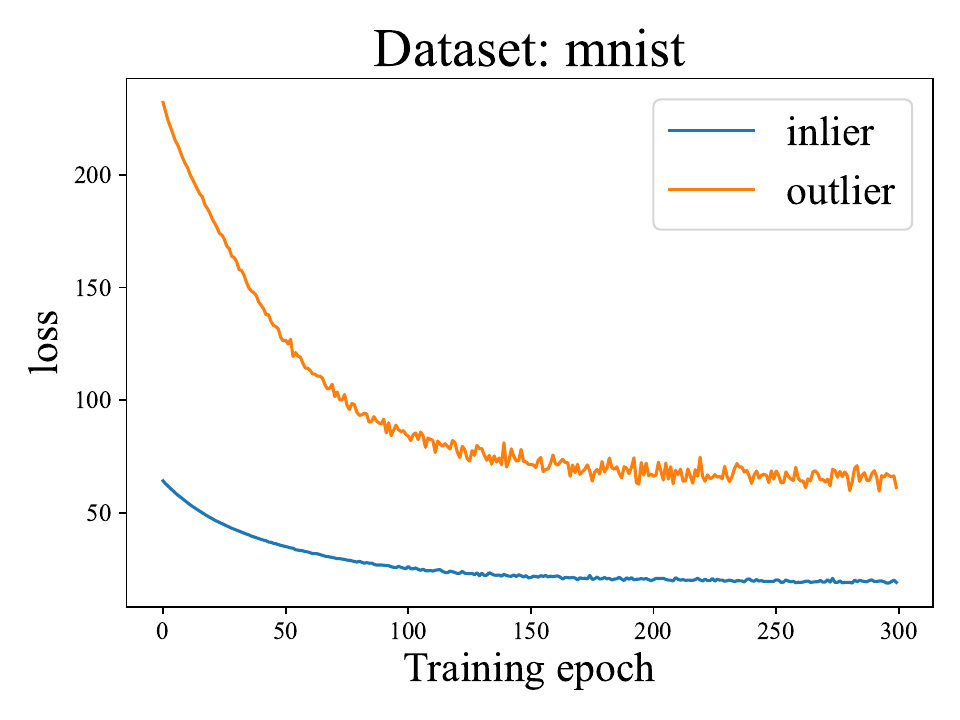}
  \includegraphics[scale=0.255]{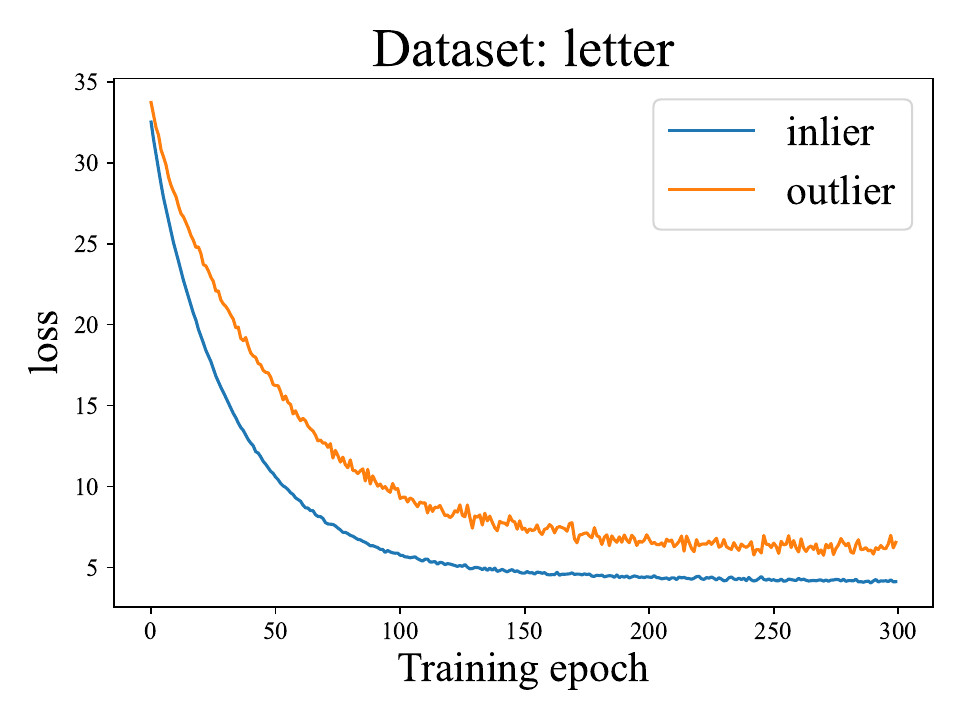}
  \caption{Loss Gap for inliers and outliers in AE models on MNIST and Letter datasets}
  \label{Fig:inlier-priority}
\end{figure}

\noindent \textbf{From the loss perspective:} The overall loss $\mathcal{L}$ can be represented by the weighted sum of $\mathcal{L}_{in}$ and $\mathcal{L}_{out}$:
\begin{equation}
    \mathcal{L} = \frac{|D_{out}^b|}{n}\mathcal{L}_{out} + \frac{|D_{in}^b|}{n} \mathcal{L}_{in}
    \label{priority-by-number}
\end{equation}
Due to the scarcity of outliers (i.e., $|D_{in}^b| \gg |D_{out}^b|$), the weight of $\mathcal{L}_{in}$ is  larger. Thus, the model puts more efforts to minimize  $\mathcal{L}_{in}$.

\vspace{1mm}
\noindent \textbf{From the gradient perspective:}
The learnable weights $\Theta$ of $M$ are updated by gradient descent:
\begin{equation}
    \overline{g} = \frac{1}{n} \sum g_i = \frac{1}{n} \sum \frac{\mathrm{d}f_M(\textbf{x}_i)}{\mathrm{d}\Theta}
    \label{average-gradient}
\end{equation}
where $g_i$ is the gradient contributed by $\textbf{x}_i$. 
The \newtext{normalized} reduction in loss for the \(i^{th}\) sample 
 is as follows:
\begin{equation}
    \Delta \mathcal{L}_i = \frac{<g_i, \overline{g}>}{|\overline{g}|} = |g_i| cos\theta(g_i,\overline{g})
    \label{effect-gradient}
\end{equation}
where $\theta(g_i,\overline{g})$ is the angle between  two gradient vectors. 
In most cases, outliers are arbitrarily scattered throughout the feature space, resulting in counterbalancing gradient directions; while inliers are densely distributed, and their gradient directions are relatively more consistent. Therefore, $\theta(g_i,\overline{g})$ for an inlier is  often smaller than that of an outlier, leading to a larger $\Delta \mathcal{L}_i$ for $\textbf{x}_i\in D_{in}$.






\newtext{
 In this case, we can conclude that
 if $\mathcal{L}_{in} \approx \mathcal{L}_{out}$, then
 $\Delta \mathcal{L}_{in} > \Delta \mathcal{L}_{out}$, resulting in $L_{gap} > 0$ (i.e.,\textit{inlier priority}).
 Our subsequent proposed metric, loss entropy, is based on \textit{inlier priority}, as it works as a foundational assumption for the theoretical proof and  intuition understanding of our metric.
}

\begin{figure*}[!htbp]
  \centering
  \includegraphics[width=500.00pt]{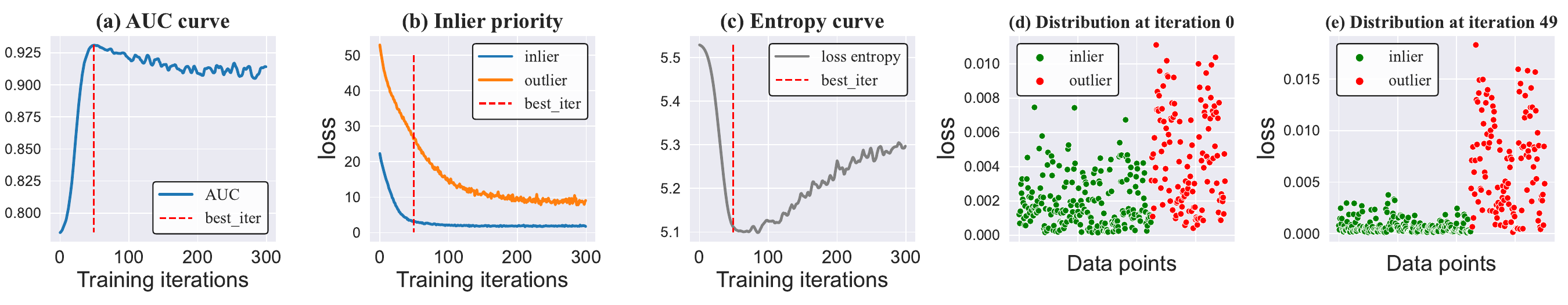}
  \caption{An example of the training process. The AE model is trained on the dataset Ionosphere with 300 iterations. In this example, the lowest $H_L$ exactly matches the optimal AUC at the $49^{th}$ iteration.  The y-axis of two  scatter plot (i.e. the $4^{th}$ figure and the $5^{th}$ figure) is normalized data loss value $u_i$.}
  \label{Fig:loss-distribution}
\end{figure*}

\subsection{Loss Entropy $H_L$: The Novel Internal Evaluation Metric}

Next, we introduce a metric that can be computed without labels. Importantly, this metric will be used to gain insights into changes in the model's AUC during the training process. In this subsection, we first define the metric and then look into how it works with both intuitive understanding and theoretical proofs.

\subsubsection{\textbf{Definition:}}
Loss Entropy, $H_L$, is the entropy of the loss distribution output by the model, and it can be defined as follows:
\begin{equation}
    u_i =  \frac{f_M(x_i)}{\sum_{x \in D_{eval}} f_M(x)}, x_i \in D_{eval} 
    \label{entropy-possbility-def}
\end{equation}
\begin{equation}
    H_L = - \sum_{i} (u_i \log{u_i}), \quad s.t. \sum_i u_i = 1, u_i\geq0
    \label{entropy}
\end{equation}
Eq. \ref{entropy-possbility-def} denotes the operation to convert the outlier scores to the loss distribution while Eq. \ref{entropy} denotes the operation to compute the entropy for the loss distribution.
\newtext{
Compared with computing on the entire dataset $D$, computing on a subset is significantly more efficient while maintaining nearly intact performance. Since input samples in each batch are stochastically selected, fixing another constant set to calculate $H_L$ eliminates the influence of the stochasticity of the input batch. Therefore, we randomly sample $N_{eval}$  instances from $D$ to create the evaluation dataset  $D_{eval}$, ensuring both the efficiency and consistency of computing $H_L$.}

To ensure the integrity and consistency, when calculating entropy, we disable randomization techniques such as dropout. These techniques, however, may remain active during training. This approach mitigates potential variability in loss entropy estimation, thereby providing a more stable measurement.

\subsubsection{\textbf{Intuition Understanding}}
First, we will present the intuition behind how entropy works.
The basic assumption is that if $\mathcal{L}_{in}$ decreases much more than $\mathcal{L}_{out}$ (i.e. $\Delta \mathcal{L}_{in} \gg \Delta \mathcal{L}_{out}$) , then the model learns more  useful signals, leading to an increase in model's detection performance. Contrarily, the model learns more harmful signals if $\Delta \mathcal{L}_{in} \ll \Delta \mathcal{L}_{out}$.

Due to the intrinsic class imbalance, the shape of loss distribution can give insights into which part of signals the model learns more. 
Specifically, if $\Delta \mathcal{L}_{in} \gg \Delta \mathcal{L}_{out}$,  then  the majority of loss (i.e. $\{f_M(x_i), x_i \in D_{in}\}$) has a dramatic decline while the minority of loss (i.e. $\{f_M(x_i), x_i \in D_{out}\}$) remains relatively  large, leading to  a  steeper distribution. Conversely, when $ \Delta \mathcal{L}_{in} \ll \Delta \mathcal{L}_{out}$,  the   distribution will become flatter. Thus, the changes in the shape of the distribution can give some valuable insights into the variation in the latent detection capability. 

Interestingly, entropy itself can be utilized to gauge the shape of a distribution. When the distribution is more balanced, entropy tends to be higher, whereas a steeper distribution (i.e., when certain events have a higher probability of occurring) exhibits lower entropy \cite{entropy}. Thus, entropy inherently captures the variations in the shape of the loss distribution.

An example is shown in Fig. \ref{Fig:loss-distribution} to exhibit our intuition.  The red dashed vertical line  marks the $49^{th}$ iteration where AUC reaches its peak.  As shown in the figure, (1) The lowest $H_L$ exactly matches the optimal AUC in this example. (2) The change in the loss distribution from the $0^{th}$ iteration to the $49^{th}$ iteration corroborates our analysis that the loss of inliers drops intensely while the loss of outliers remains large.

\subsubsection{\textbf{Theoretical Proof:}}

We will demonstrate that an increase in the AUC is more likely to result in a decrease in $H_L$, under the assumption that \textit{inlier priority} holds.

\begin{theorem}
    \label{theorem-loss-gap-entropy}
        \newtext{When $\mathcal{L}_{in} < \mathcal{L}_{out}$ and the AUC increases, the $H_L$ is more likely to decrease.}
\end{theorem}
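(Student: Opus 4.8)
The plan is to make precise the heuristic in the ``Intuition Understanding'' subsection: an AUC increase under \emph{inlier priority} should correspond to the inlier losses shrinking relative to the outlier losses, which makes the normalized loss vector $(u_i)$ more concentrated, hence lowers its entropy. First I would set up a clean comparison model: think of the loss distribution as a mixture of two groups, the inlier scores drawn from $\mathcal{O}_{in}$ (mean related to $\mathcal{L}_{in}$) and the outlier scores from $\mathcal{O}_{out}$ (mean related to $\mathcal{L}_{out}$), with $|D_{in}^b|\gg|D_{out}^b|$. An increase in $P(v^-<v^+)$, i.e.\ in AUC, must (under a mild monotonicity/stochastic-dominance assumption on how the two score distributions move during a gradient step) be realized by the inlier score distribution moving down and/or the outlier score distribution moving up, i.e.\ by the ratio $\mathcal{L}_{out}/\mathcal{L}_{in}$ increasing. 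I would state this linking step explicitly, perhaps as an intermediate claim: AUC $\uparrow$ $\Rightarrow$ (with high probability, or: in the absence of pathological mass rearrangements) $L_{gap}=\mathcal{L}_{out}-\mathcal{L}_{in}$ increases while $\mathcal{L}_{in}$ does not increase.

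Next I would translate ``the loss gap widens, with inlier losses dropping'' into ``$H_L$ decreases.'' The cleanest route is to treat $H_L$ as a function of the score vector and show it is (on average) decreasing along the relevant direction of change. Concretely, write $u_i = f_M(x_i)/S$ with $S=\sum_x f_M(x)$, and note $H_L = \log S - \tfrac1S\sum_i f_M(x_i)\log f_M(x_i)$. I would analyze what happens when we scale the inlier scores down by a factor (or shift them down) while leaving outlier scores fixed: since the inliers are the overwhelming majority and already carry the bulk of the probability mass, making their individual $u_i$ values larger and more uniform-among-themselves is \emph{not} what happens — rather, shrinking inlier losses toward a small common value while the few outliers keep large losses makes the distribution \emph{more peaked on the outliers}, i.e.\ concentrates mass, lowering entropy. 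A convenient way to make this rigorous is a majorization argument: show that the post-step normalized vector majorizes the pre-step one (or that the transformation is a composition of Robin-Hood-reversing / mass-concentrating steps), and invoke Schur-concavity of the entropy functional $-\sum u_i\log u_i$. Alternatively, differentiate: parametrize the inlier scores as $f_M(x_i)\mapsto f_M(x_i)-t\,\delta_i$ for small $t>0$ with $\delta_i>0$ proportional to the (inlier-priority) larger gradient reduction on inliers, compute $dH_L/dt$, and show it is negative whenever $\mathcal{L}_{in}<\mathcal{L}_{out}$ (the $\mathcal{L}_{in}<\mathcal{L}_{out}$ hypothesis is exactly what pins down the sign, because the inlier group sits below the current mean score $S/N$ and pulling below-mean entries further down reduces entropy). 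I would present the derivative computation as the core lemma and keep the majorization remark as an alternative.

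Then I would assemble the two pieces: AUC $\uparrow$ $\Rightarrow$ (inlier losses move down relative to outlier losses, i.e.\ the step is of the mass-concentrating type) $\Rightarrow$ $H_L\downarrow$, with each implication carrying the qualifier ``more likely'' coming from the fact that a fixed AUC increase does not uniquely determine the score-vector update — there exist measure-zero or adversarial rearrangements that raise AUC without concentrating mass — so the statement is naturally probabilistic/generic rather than deterministic. I would be explicit about which assumptions are doing the work: Assumption~\ref{assum:align} (so that reasoning with $f_M$ is the same as reasoning with the loss), inlier priority $\mathcal{L}_{in}<\mathcal{L}_{out}$ (sign of the derivative / direction of majorization), and class imbalance $|D_{in}|\gg|D_{out}|$ (so that ``inliers are the low-loss majority below the mean'', which is what the entropy argument needs).

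The main obstacle I anticipate is the first implication — rigorously connecting an increase in $P(v^-<v^+)$ to a specific, entropy-relevant change in the \emph{loss vector itself} rather than just in the ordering statistics. AUC is an ordinal quantity (it only sees ranks), while $H_L$ is cardinal (it sees magnitudes), so the link genuinely requires an extra modeling assumption: either that within one small-learning-rate step the two score populations move by (stochastic) translations/scalings rather than by arbitrary re-shuffling, or a distributional assumption (e.g.\ the scores in each group are tightly clustered around $\mathcal{L}_{in}$, $\mathcal{L}_{out}$, so AUC is essentially a deterministic function of the gap). I would state this assumption plainly, argue it is reasonable given the inlier-priority gradient analysis (inliers move coherently, outliers roughly cancel), and flag that this is where the word ``likely'' in the theorem statement comes from; everything after that point is a fairly mechanical Schur-concavity / sign-of-derivative computation.
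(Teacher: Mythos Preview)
Your proposal is broadly correct and identifies the right two-stage structure (AUC $\uparrow$ $\Rightarrow$ a specific cardinal change in the loss vector $\Rightarrow$ $H_L \downarrow$), as well as the genuine obstacle (AUC is ordinal, entropy is cardinal). However, your route differs from the paper's in both stages.

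For the first stage, the paper does not argue at the group level that the loss gap $\mathcal{L}_{out}-\mathcal{L}_{in}$ widens. Instead it works \emph{pairwise}: it posits as an explicit assumption that an AUC increase implies $P(\Delta v^+ - \Delta v^- > 0) > 0.5$ (the ``speed gap'' is positive in probability), and from this computes, by conditioning on the four class-membership cases for a pair $(v_i,v_j)$, that $\Delta u_i > \Delta u_j$ implies $P(u_i>u_j)>0.5$. This pairwise statement---``the coordinates that increased more in $u$ tend to be the ones that were already larger''---is the paper's intermediate lemma, replacing your group-level ``loss gap widens.'' Your translation/scaling modeling assumption plays the same bridging role as the paper's speed-gap assumption, so you correctly located where the extra hypothesis must go.

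For the second stage, the paper does not use majorization or Schur-concavity. It Taylor-expands $H(U')\approx H(U)+\sum_i h'(u_i)\Delta u_i$ with $h(u)=-u\log u$, uses $h''<0$ to get $u_i>u_j\Rightarrow h'(u_i)<h'(u_j)$, combines with $\sum_i \Delta u_i=0$ and the pairwise lemma above, and concludes $P(\sum_i h'(u_i)\Delta u_i<0)>0.5$. Your derivative-along-a-curve idea is close in spirit to this Taylor argument; the majorization route would be a genuinely different and potentially cleaner alternative, though establishing actual majorization from a mere AUC increase would require stronger assumptions than the paper uses.

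One point to tighten: your claim that ``the inlier group sits below the current mean $S/N$'' is only true on average, not pointwise (since the overall mean is $(1-\alpha)\mathcal{L}_{in}+\alpha\mathcal{L}_{out}\approx \mathcal{L}_{in}$ when $\alpha$ is small, roughly half the inliers sit above it). The paper sidesteps this by staying probabilistic throughout and by the assumption $u_i<1/e$, which keeps $h'(u_i)>0$ uniformly; your group-level sign argument would need either a tight-clustering assumption within groups or to be reformulated probabilistically.
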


\begin{proof}
See Appx. \ref{proof: loss-gap-entropy} for the proof.
\end{proof}

\begin{figure*}
  \centering
  \includegraphics[width=120.00pt]{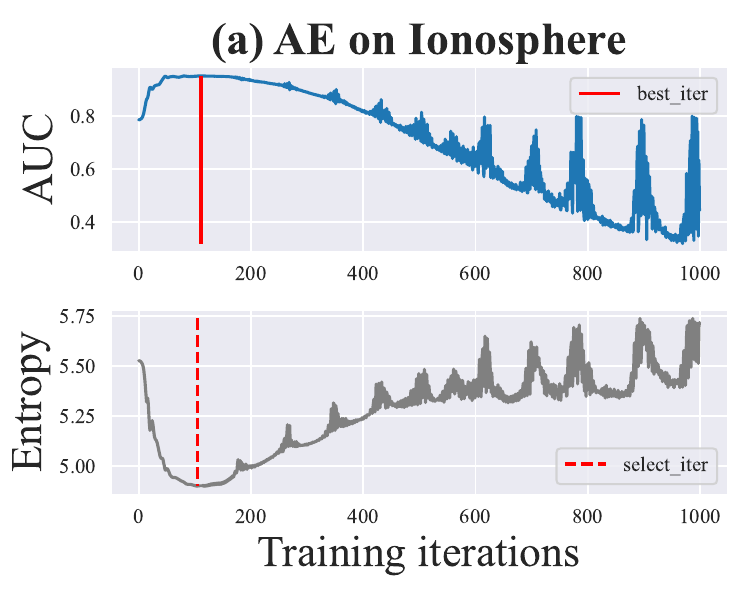}
  \includegraphics[width=120.00pt]{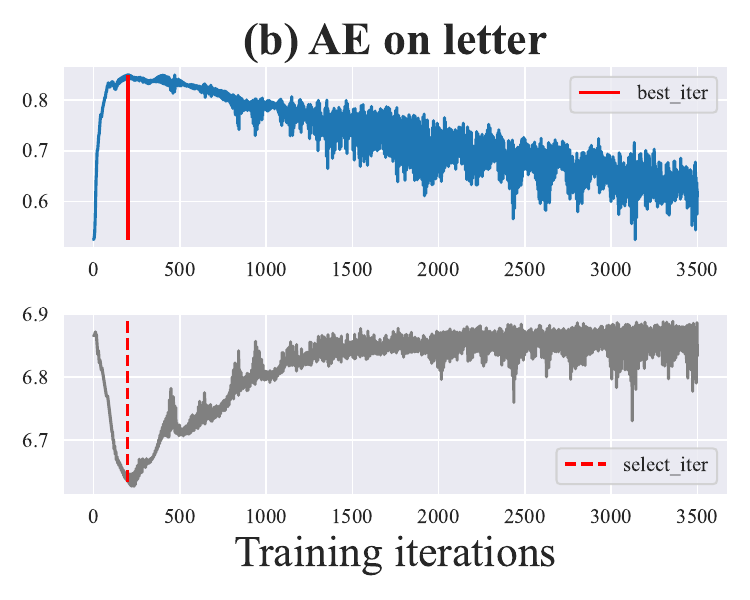}
  \includegraphics[width=120.00pt]{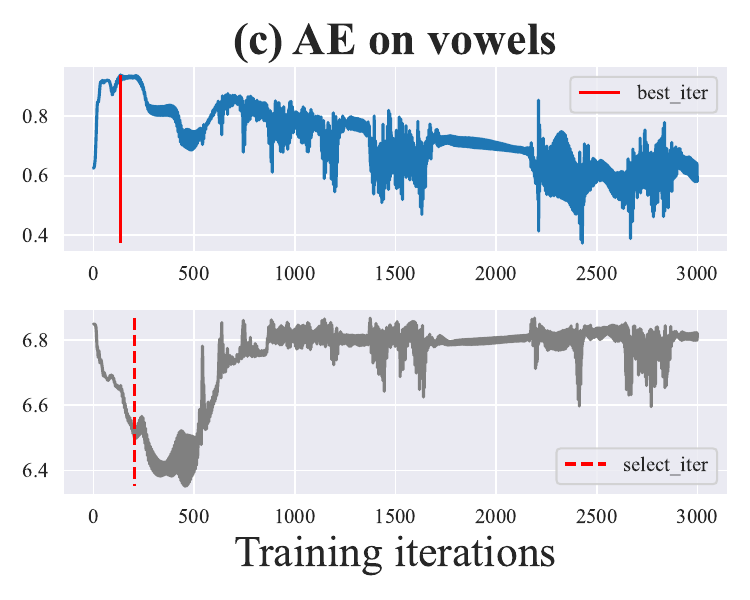}
  \includegraphics[width=120.00pt]{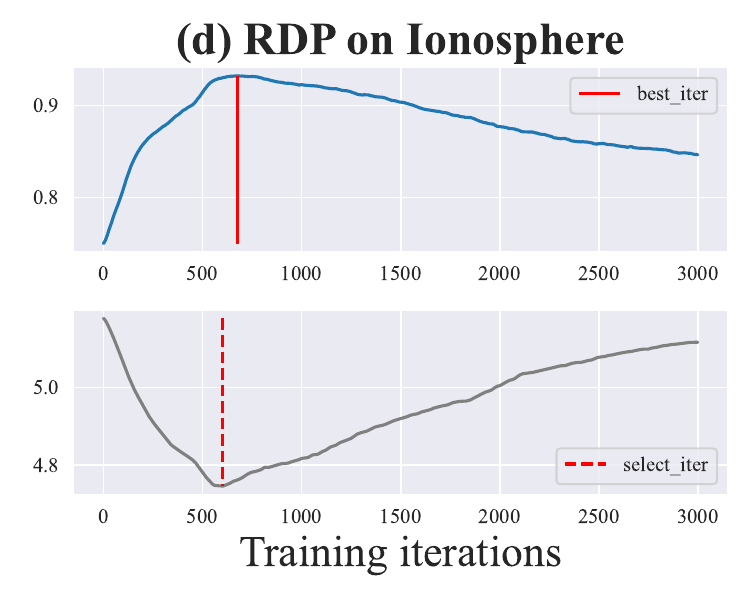}
  \caption{Examples of AUC and loss entropy curves during the training of AE and RDP \cite{RDP} on some datasets. ``select\_iter'' denotes the iteration selected by \textit{EntropyStop}. }
  \label{Fig:auc-entropy-corre}
  
\end{figure*}

Similarly, the converses of Theorems \ref{theorem-loss-gap-entropy} can also be proven by analogous reasoning. Thus, $H_L$ is expected to have a negative correlation with  detection capability, which paves the ways for our early stopping algorithm.

\subsection{EntropyStop: Automated Early Stopping Algorithm}
\label{entropy-stop-sec}
Based on the indicator $H_L$, we devise an algorithm to automated early stopping the unsupervised training before the model's detection performance is degraded by outlier.

Basically, we opt to stop training as soon as the entropy stops decreasing. 
Moreover, it is essential to ascertain that the curve which the lowest entropy lies on is relatively smooth with minor  fluctuations. Strong fluctuations may reflect analogous variations in the AUC, implying that the improvement in AUC lacks stability. 
We formulate our problem as  below.

\textbf{Problem Formulation.} Suppose $\mathcal{E} = \{e_j\}_{j=0}^E$ denotes the  entropy curve of model $M$. 
When $M$ finishs its $i^{th}$ training iteration, only the subcurve $\{e_j\}_{j=0}^i$ is available.
The goal is to select a point $e_i \in \mathcal{E}$ as early as possible that  (1) $\forall j < i, e_i < e_j$; (2) the subcurve $\{e_j\}_{j=0}^i$  has a smooth downtrend; (3) $\forall q \in (i,k + i)$, the subcurve $\{e_j\}_{j=i}^q$  has no smooth  downtrend.

\textbf{Algorithm.} In above formulation, $k$ is the patience parameter of algorithm. 
As an overview, our algorithm continuously explores new points within  $k$ iterations of the current lowest entropy point $e_i$, and tests whether the subcurve between the new point  and $e_i$ exhibits a smooth downtrend. Specifically,  when encountering a new point $e_q$,  we calculate $G = \sum_{j=i+1}^q (|e_j - e_{j-1}|)$ 
 as the total  variations of the subcurve $\{e_j\}_{j=i}^q$   and the downtrend of the subcurve is then quantified by $\frac{e_i - e_q}{G}$. Particularly, if the subcurve is monotonically decreasing, then $\frac{e_i - e_q}{G} = 1$. To test for a smooth downtrend, we use a threshold parameter $R_{down} \in (0,1)$. Only when $\frac{e_i - e_q}{G}$ exceeds $R_{down}$ will $e_q$ be considered as the new lowest entropy point. The complete process is shown in Algorithm \ref{alg:online}. In Fig. \ref{Fig:auc-entropy-corre}, we list a few examples to show the effectiveness of $EntropyStop$.

\RestyleAlgo{ruled}
\SetKwComment{Comment}{/* }{ */}
\SetKwInput{kwInput}{Input}
\SetKwInput{kwOutput}{Output}
\SetKwInput{kwReturn}{Return}
\SetKw{Break}{break}

\begin{algorithm}
\small
\caption{EntropyStop: An automated unsupervised training stopping algorithm for OD model}
\label{alg:online}
\kwInput{ Model $M$ with  learnable parameters $\Theta$, patience parameter $k$, downtrend threshold  $R_{down}$, dataset $D$,  iterations T, evaluation set size $N_{eval}$ }
\kwOutput{Outlier score list $\textbf{O}$}
Initialize  the parameter $\Theta$ of Model $M$\; Random sample $N_{eval}$ instances from  $D$ as the evaulation set $D_{eval}$ \;
$G \gets 0;$  $patience \gets 0; $ $ \Theta_{best} \gets \Theta; $ \; 
    Compute $H_L$ on $D_{eval}$. \;
  $e_0 \gets H_L$; $e^{min} \gets e_0$ ;
  \Comment*[r]{Model Training} 
\For {$j:= 1 \rightarrow T$}{
    Random sample a batch of training  data $D^{b}$ \;
    Calculate $\mathcal{L}_{train}$ on $D^{b}$\;
    Optimize the parameters $\Theta$  by minimizing $\mathcal{L}_{train}$\;
    Compute $H_L$ on $D_{eval}$ \;
  $ e_j \gets H_L$ ;
    $G \gets G + |e_j - e_{j-1}|$\;
\eIf{
$e_j < e^{min}$ and $\frac{e^{min}-e_j}{G} > R_{down}$
}{
$e^{min} \gets e_j;$ ; $G \gets 0;$  $patience \gets 0;$ $ \Theta_{best} \gets \Theta; $
}{
$patience \gets patience + 1$\;
}
\If{$patience = k$}{\Break}
}
Load the $\Theta_{best}$ to $M$ \;
\kwReturn{$ \{f_M(x), x\in D\}$}
\end{algorithm}

Two new parameters are introduced, namely $k$ and $R_{down}$. $k$ represents the patience for searching the optimal iteration, with larger value usually improving accuracy at the expense of longer training time. Then, $R_{down}$ sets the requirement for the smooth of downtrend. Apart from these two parameters, learning rate is also critical as it can significantly impact the training time. We recommend setting $R_{down}$ within the range of $[0.01,0.1]$, while the optimal value of $k$ and learning rate is associated with the actual entropy curve. We provide a guidance on tuning these HPs and parameter sensitivity study in Appx. \ref{appx:guide-for-tuning} and \ref{entropystop-hp-study}.

\subsection{Discussion}\label{discussion}
\noindent \textbf{Evaluation Cost.} Our algorithm incurs extra computational overhead with a time complexity of \(O(f_M(D_{eval}) + |D_{eval}|)\) due to the additional inference on \(D_{eval}\) for entropy calculation after each training iteration. However, as we observed in our experiments, deep UOD models often achieve its optimal AUC performance at an early stage, allowing training to be halted very soon. Therefore, employing our early stopping method can significantly reduce training time compared to arbitrarily setting a lengthy training duration.

\vspace{1mm}
\noindent \textbf{Pseudo inliers.} In dataset analysis, we found the existence of "Pseudo inliers" - \newtext{instances labeled as inliers but whose loss values are significantly larger than  the average of outlier losses.} The emergence of  pseudo inliers can be attributed to multiple factors: (1) multiple types of outliers exist in the dataset while the labels only cover one type; (2) As UOD methods make assumptions of outlier data distribution \cite{ADbench}, there is a mismatch between the assumptions of outlier distribution made by model and the labeled outlier distribution in the dataset. An extreme example of this is a breach of inlier priority, i.e., $L_{gap}<0$ throughout the training. 

The effectiveness of our proposed metric, $H_L$, may encounter challenges in such scenarios. This discrepancy often arises from the inherent limitations of unsupervised OD models or the dataset labels not comprehensively capturing all types of outliers. We delve into this issue through detailed case-by-case analyses in Appx. \ref{pseudo inlier-study}. The possible solution for this issue is to utilize a small number of labeled outliers to identify the alignment of the UOD assumptions and real datasets. We leave this as our future work.

\section{Experiments}

\definecolor{red-c}{rgb}{1, 0, 0}
\definecolor{blue-c}{rgb}{0, 0, 1}
\newcommand\best[1]{\textbf{\textcolor{red-c}{#1}}}
\newcommand\second[1]{\textit{\textcolor{blue-c}{#1}}}

In this section, we evaluate the effectiveness of our proposed metric ($H_L$) and the entropy-based early stopping algorithm ($EntropyStop$) through comprehensive experiments. Our key findings are summarized as follows:
\begin{itemize}
    \item  $EntropyStop$ remarkably improves AE model performance, surpassing ensemble AE models and significantly reducing  training time. (See Sec. \ref{sec: ensemble-cmp-exp})
     \item We observe a strong negative correlation between the $H_L$ curve and AUC curve across a larger number of real-world datasets, which verifies our analysis. (see Sec. \ref{sec:correlation-exp})
    \item 
    Our $EntropyStop$ can be applied to other deep UOD models, exhibiting their broad potential applicability. (See Sec. \ref{sec: model-expansion})
\end{itemize}

\subsection{Experiment setting}
\newtext{All experiments adopt a transductive setting, where the training set equals the test set, which is common in Unsupervised OD \cite{robod,randnet}.}
\subsubsection{\textbf{Dataset}} 
Experiments are carried on 47 widely-used real-world tabular datasets\footnote{https://github.com/Minqi824/ADBench/} collected by \cite{ADbench}, which cover many application domains, including healthcare, image processing, finance, etc. Details on dataset description can be found in Appx. \ref{appendix:dataset-pool}.

\subsubsection{\textbf{Evaluation Metrics}}
We evaluate performance w.r.t. two metrics that are based on AUC and Average Precision (AP). \newtext{Computing AUC and AP does not need a threshold for outlier scores outputted by model, as they are ranking-based metrics.}

\subsubsection{\textbf{Computing Infrastructures}}
All experiments are conducted on Ubuntu 22.02 OS, AMD Ryzen 9 7950X CPU, 64GB memory, and an RTX 4090 (24GB GPU memory) GPU.

\subsection{Improvements and Efficiency Study}
\label{sec: ensemble-cmp-exp}
We first study how much improvement can be achieved by employing $EntropyStop$ for the AE model. 
The simplest form of AE without any additional techniques, is denoted as VanillaAE.
We apply our early stopping method to VanillaAE to gain \textbf{\textit{EntropyAE}}.
We  compare our approach with two ensemble AEs, including the recent SOTA  hyper-ensemble ROBOD \cite{robod} and the widely-used  RandNet \cite{randnet}.  
The experiments of two ensemble models are based on the open-source code of ROBOD\footnote{https://github.com/xyvivian/ROBOD}. The detailed HP configuration of them can be found in Appx. \ref{appx-sec: HP config of Exp1}.
\begin{table}[!htbp]
\small
  \centering
  \caption{Detection performance  of  models from AE family. $p < 0.05$ means there is a signicant difference between the baseline and $EntropyAE$. See Detailed data in Table \ref{tab:ae-auc} and \ref{tab:ae-ap}.}
  \resizebox{\columnwidth}{!}{
    \begin{tabular}{c|c|ccc}
    \toprule
    \multicolumn{1}{c|}{} & VanillaAE & \multicolumn{1}{c}{EntropyAE} (Ours) & RandNet & ROBOD \\
    \midrule
    \multicolumn{1}{c|}{$\overline{AUC}$} & 0.741$\pm$0.001 & \multicolumn{1}{c}{\best{0.768$\pm$0.005}} & 0.728$\pm$0.00 & 0.736$\pm$0.00 \\
    \multicolumn{1}{c|}{$\overline{AP}$} & 0.299$\pm$0.005 & \multicolumn{1}{c}{\best{0.364$\pm$0.009}} & 0.358$\pm$0.00 & 0.360$\pm$0.00 \\
    \midrule
    $\overline{Rank}_{AUC}$ & 2.70  & \best{2.14} & 2.68  & 2.42  \\
    $\overline{Rank}_{AP}$ & 2.85  & \best{2.23} & 2.51  & 2.36  \\
    \midrule
     ${p}^{auc}$ & \best{0.006}  & -- & \best{0.013}  & \best{0.023}  \\
    ${p}^{ap}$ & \best{0.000}  & -- & 0.355  & 0.402  \\
    \bottomrule
    \end{tabular}%
    }
  \label{tab:AE-ensemble-compare}%
\end{table}%

\subsubsection{\textbf{Detection Performance Result}}
The average result of five runs is reported in Table \ref{tab:AE-ensemble-compare}.
We conducted a comparative analysis of four UOD methods across 47 datasets, evaluating average AUC, average AP, average ranking in AUC, and average ranking in AP. It is evident that EntropyAE not only significantly outperforms VanillaAE but also surpasses ensemble models in AUC and is marginally superior in AP. P-value from the one-sided paired Wilcoxon signed-rank test is presented as well, emphasizing the statistical significance of the improvements achieved by EntropyAE.  It is shown that, compared to VanillaAE, EntropyAE achieves a substantial enhancement by employing early stopping.

\subsubsection{\textbf{Efficiency Result}}
To quantify the extent to which early stopping reduces training time, we employ the following metric:
\begin{equation}
\text{\textit{Average Train Time}}(M) = \frac{1}{|\mathcal{D}|} \sum_{D \sim \mathcal{D}} \frac{\text{training time}(M, D)}{\text{training time}(VanillaAE, D)}
\end{equation}
\begin{equation}
\text{\textit{Total Train Time}}(M) =   \frac{\sum_{D \sim \mathcal{D}} \text{training time}(M, D)}{\sum_{D \sim \mathcal{D}} \text{training time}(VanillaAE, D)}
\end{equation}
where $D$ represents one of the 47 datasets, $\mathcal{D}$ denotes the collection of all 47 datasets, and $M$ signifies any model among VanillaAE, EntropyAE, RandNet, and ROBOD. We ensure that all models have the same batch size \newtext{of 64} and number of epochs \newtext{of 250} to guarantee identical iteration counts.
\newtext{$\text{\textit{Train Time}}(M)$ reveals the average relative training time required compared to VanillaAE while $\text{\textit{Total Train Time}}(M)$ reveals the total  time required compared to VanillaAE.} \newtext{In   Table \ref{tab:AE-ensemble-Rescompare},} we observe that, compared to VanillaAE, ROBOD, and RandNet, EntropyAE only requires \newtext{under 8\%, 2\%, and  0.3\%} of the \newtext{average} training time, respectively. \newtext{For the total training time, the advantage of $EntropyAE$ are more significant.} This demonstrates the effectiveness of early stopping in saving time.
The detailed comparison result on efficiency are list in Table \ref{tab:AE-ensemble-Rescompare}.

\begin{table}[!htbp]
\small
  \centering
  \caption{Comparison of training time  for AEs.  See Detailed data in Table \ref{tab:ae-time}.}
    \begin{tabular}{c|c|ccc}
    \toprule
    \multicolumn{1}{c|}{} & VanillaAE & \multicolumn{1}{c}{EntropyAE} & RandNet  & ROBOD \\
    \midrule
       \textit{Average Train Time} & 1   & \best{0.077} & 23.05 
  &  3.51 \\
   \textit{Total Train Time} & 1   & \best{0.01} & 35.03 
  &  4.02 \\
    \bottomrule
    \end{tabular}%
  \label{tab:AE-ensemble-Rescompare}%
\end{table}%

\vspace{-2mm}
\newtext{Figure \ref{Fig:time-data-size} displays the time required by EntropyAE across 47 datasets. The early stopping mechanism is more effective on larger datasets, as they contain more batches per epoch, resulting in more iterations.   In some large datasets, EntropyAE stops training before completing a single epoch.}

\begin{figure}[!htbp]
\vspace{-3mm}
  \centering
  
  \includegraphics[scale=0.48]{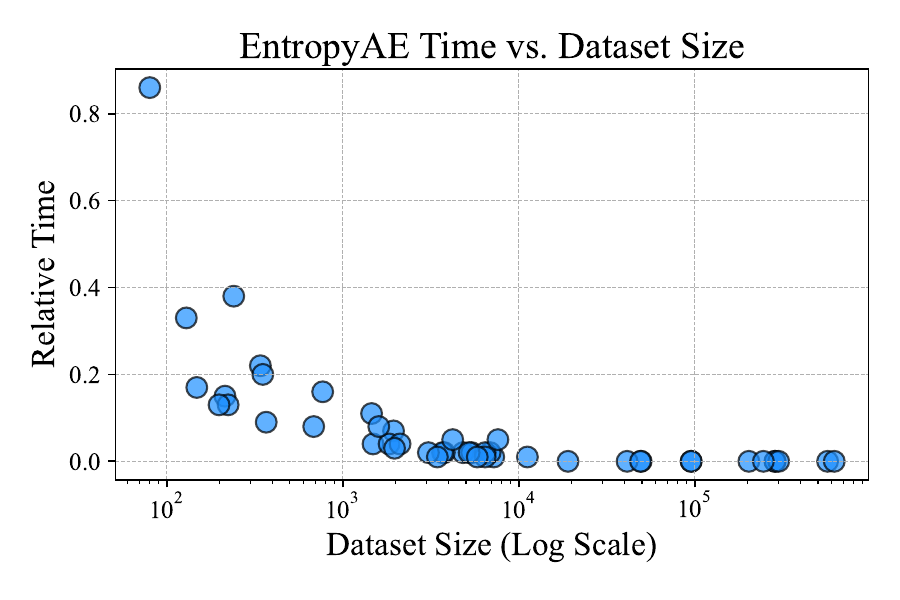}
    \vspace{-5mm}
  \caption{The relative time (compared to VanillaAE) taken by EntropyAE across different dataset sizes.}

  \label{Fig:time-data-size}
  \vspace{-2mm}
\end{figure}

\subsection{\textbf{Negative Correlation Study}}
In this experiment, our objective is to carefully evaluate the efficacy of our proposed zero-label metric, loss entropy (\(H_L\)), in accurately reflecting variations in the label-based AUC.
We commence our analysis by visualizing the AUC and \(H_L\) curves for each dataset. In addition, we utilize the Pearson correlation coefficient to statistically measure such negative correlation. 
Specifically, we run AE model \newtext{and linear DeepSVDD \cite{deep-svdd}} on 47 datasets with a 0.001 learning rate and 500 full batch training iterations.

\begin{figure}[!htbp]
  \centering
  \includegraphics[scale=0.35]{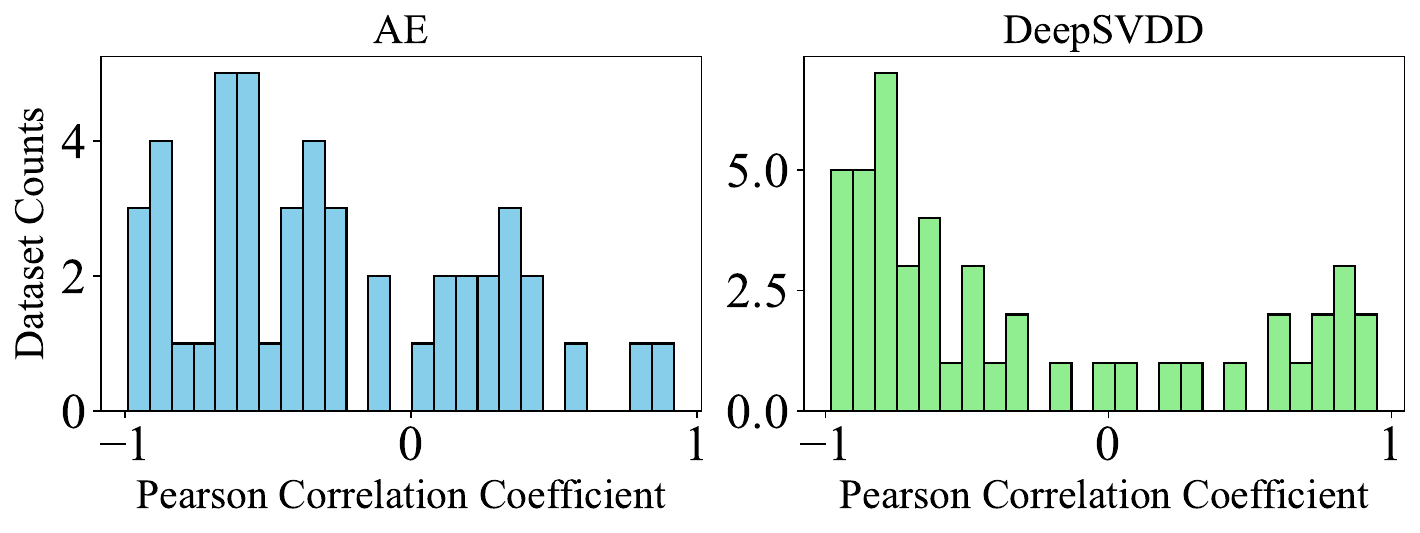}
  \caption{Analysis of Pearson correlation coefficients between AUC and $H_L$
  curves across 47 datasets: lower coefficients indicate stronger negative correlations}
  \label{Fig:pearsonr}
\end{figure}
\label{sec:correlation-exp}

\subsubsection{\textbf{Result.}}
All the AUC and $H_L$ curves 
  are shown in Fig. \ref{Fig:all-curve-1}, \ref{Fig:all-curve-2}, \ref{Fig:all-curve-3}, \ref{Fig:all-curve-4} (AE) and Fig. \ref{Fig:svdd-all-curve-1}, \ref{Fig:svdd-all-curve-2}, \ref{Fig:svdd-all-curve-3}, \ref{Fig:svdd-all-curve-4} (DeepSVDD) in Appx. \ref{pseudo inlier-study} to demonstrate the negative correlation between the two. The distribution of Pearson correlation coefficient values across 47 datasets are shown in Fig. \ref{Fig:pearsonr}.  These results show that while $H_L$ has a strong negative correlation with AUC on \newtext{more than half of the 47 datasets}, the remaining datasets show a weak or even positive correlation. Basically, the reason for invalidity can be attributed to the following aspects:
\begin{itemize}
    \item \textbf{Label misleading}: The existence of a large number of pseudo inliers on these datasets. \newtext{The pseudo inliers are regarded as outliers by UOD model  while labeled as inliers.}
    \item \textbf{The convergence of AUC}: the AUC is nearly stationary during the whole training process, thereby the entropy could not reflect the changes of AUC. In this case, the ineffectiveness of $H_L$ actually does not influence the final performance, while time is still saved by early stopping.
\end{itemize}
In Appx. \ref{pseudo inlier-study}, we conduct case-by-case analyses of the invalid reasons of AE on these datasets.
Interestingly, although $H_L$ does not perform well on some datasets, we view this as an opportunity to highlight the inherent limitations of unsupervised OD algorithms and to discuss these critical issues: (1) The labeling of outliers in the dataset is erroneous or exclusively focuses on a single type of outliers. (2) The model's outlier assumption does not align with the labeled outliers in the dataset, suggesting the need to explore other UOD models for outlier detection.

Through comprehensive analysis, we discovered that $H_L$
  demonstrates widespread applicability across a diverse range of datasets, while scenarios of inapplicability are specifically and reasonably explained. This provides future researchers with deeper understandings of our algorithm, features of outlier distribution and the general mechanism of UOD paradigm.



\subsection{Model Expansion Experiment}
\label{sec: model-expansion}
In this subsection, we include more deep UOD models for experiments, i.e., AE, DeepSVDD \cite{deep-svdd}, RDP \cite{RDP}, NTL \cite{NTL} and LOE \cite{LOE}. From another perspective, our early stopping algorithm can also be regarded as selecting the best model from all models - each at an arbitrary iteration - during the training process. Therefore, we can reduce the optimal iteration selection problem to the model selection problem.
In this case, we also investigate the improvement of $EntropyStop$ on some Unsupervised Outlier Model Selection (UOMS) \cite{Internal-evaluation-paper} methods. 

\vspace{1mm}
\noindent \textbf{UOMS Baselines}: 
UOMS solutions aim at selecting a best pair \{\verb|Algorithm, HP|\} among a pool of options, solely relying on the outlier scores and the input data (without labels). We compare $EntropyStop$ with baselines including Xie-Beni index (XB) \cite{xb}, ModelCentrality (MC) \cite{MC}, and HITS \cite{Internal-evaluation-paper}.
In additional,  we add  two additional baselines, \textit{Random} and \textit{Vanilla}, which refer to the average performance of all iterations and the performance of the final iteration, respectively. Moreover, \textit{Max} denotes the maximum performance among the whole training process   (i.e., the upper bound) is also shown. 
The detailed experiment setup can be found at Appx. \ref{appx-sec: model expansion config}. 
The experiments are conducted on 47 datasets and each item in a table represents the average value over all datasets. 
For each dataset $D$, the UOMS baselines receive a collection of outlier score lists among 300 training iterations, $\mathcal{S} = \{\textbf{s}_i\}_{i=0}^{300}$, as their input. From these, the models produce an output consisting of a single outlier score list, $\textbf{s}_i \in \mathbb{R}^{|D|}$, which represents the outlier scores from the chosen iteration. This specific score list is then utilized to calculate the AUC metric for performance evaluation.

\subsubsection{\textbf{Result}}

The AUC and AP results are shown in Table \ref{tab:auc-uoms} and Table \ref{tab:ap-uoms}, respectively. The second best score is marked in blue italics. It is observed that (1) our solution exhibits more effectiveness in selecting the optimal iteration, especially for AE and DeepSVDD.  It's important to highlight that our approach is also extendable to other deep UOD models.  (2) In addition, \textit{Random} baseline and \textit{Vanilla} baseline rank second on more than half the rows, which reveals that none of existing UOMS solutions can help select the optimal iteration, nor can they fulfill the task of early stopping.

\begin{table}[htbp]
\small
  \centering
  \caption{AUC for the optimal iteration selection}
   \setlength\tabcolsep{2.1pt}
    \begin{tabular}{l|c|cccccc}
    \toprule
          & Max & \textbf{Ours}  & XB    & MCS   & HITS  & Random & Vanilla \\
    \midrule
    AE    & 0.806  & \best{0.768} & 0.720  & \second{0.745}  & 0.734  & 0.742  & 0.744  \\
    RDP \cite{RDP}  & 0.798  & \best{0.754} & 0.734  & 0.737  & 0.739  & \second{0.741}  & 0.735  \\
    NeuTraL \cite{NTL} & 0.758  & \best{0.701} & 0.309  & 0.692  & 0.658  & 0.641  & \second{0.693}  \\
    NeuTraL+$LOE_H$ \cite{LOE} & 0.748  & \best{0.696} & 0.328  & 0.679  & 0.661  & 0.634  & \second{0.693}  \\
    DeepSVDD \cite{deep-svdd} & 0.747  & \best{0.679} & 0.654  & 0.652  & 0.657  & \second{0.664}  & 0.637  \\
    \bottomrule
    \end{tabular}%
  \label{tab:auc-uoms}%
\end{table}%

\begin{table}[htbp]
\small
  \centering
  \caption{AP for the optimal iteration selection}
  \setlength\tabcolsep{2.5pt}
    \begin{tabular}{l|c|cccccc}
    \toprule
          & Max & \textbf{Ours}  & XB    & MCS   & HITS  & Random & Vanilla \\
    \midrule
    AE    & 0.420  & \best{0.364} & 0.287  & 0.303  & 0.302  & \second{0.309}  & 0.303  \\
    RDP   & 0.412  & 0.343  & 0.313  & \second{0.351}  & \best{0.352} & 0.349  & 0.350  \\
    NeuTraL & 0.304  & \best{0.251} & 0.112  & \second{0.243}  & 0.240  & 0.227  & 0.242  \\
    NeuTraL+LOE & 0.297  & \best{0.234} & 0.121  & 0.229  & 0.226  & 0.212  & \second{0.230}  \\
    DeepSVDD & 0.402  & \best{0.331} & 0.308  & 0.312  & 0.312  & \second{0.318}  & 0.308  \\
    \bottomrule
    \end{tabular}%
  \label{tab:ap-uoms}%
\end{table}%

The running time on all datasets are shown in Fig. \ref{Fig:uoms-eff}. The training time of AE is also plotted as \textit{Train}. It reveals that existing UOMS solutions are quite inefficient, where MCS is even several orders of magnitude slower than the training time of AE. Our solution is much more efficient than UOMS baselines.

\begin{figure}
  \centering
  \includegraphics[width=180.00pt]{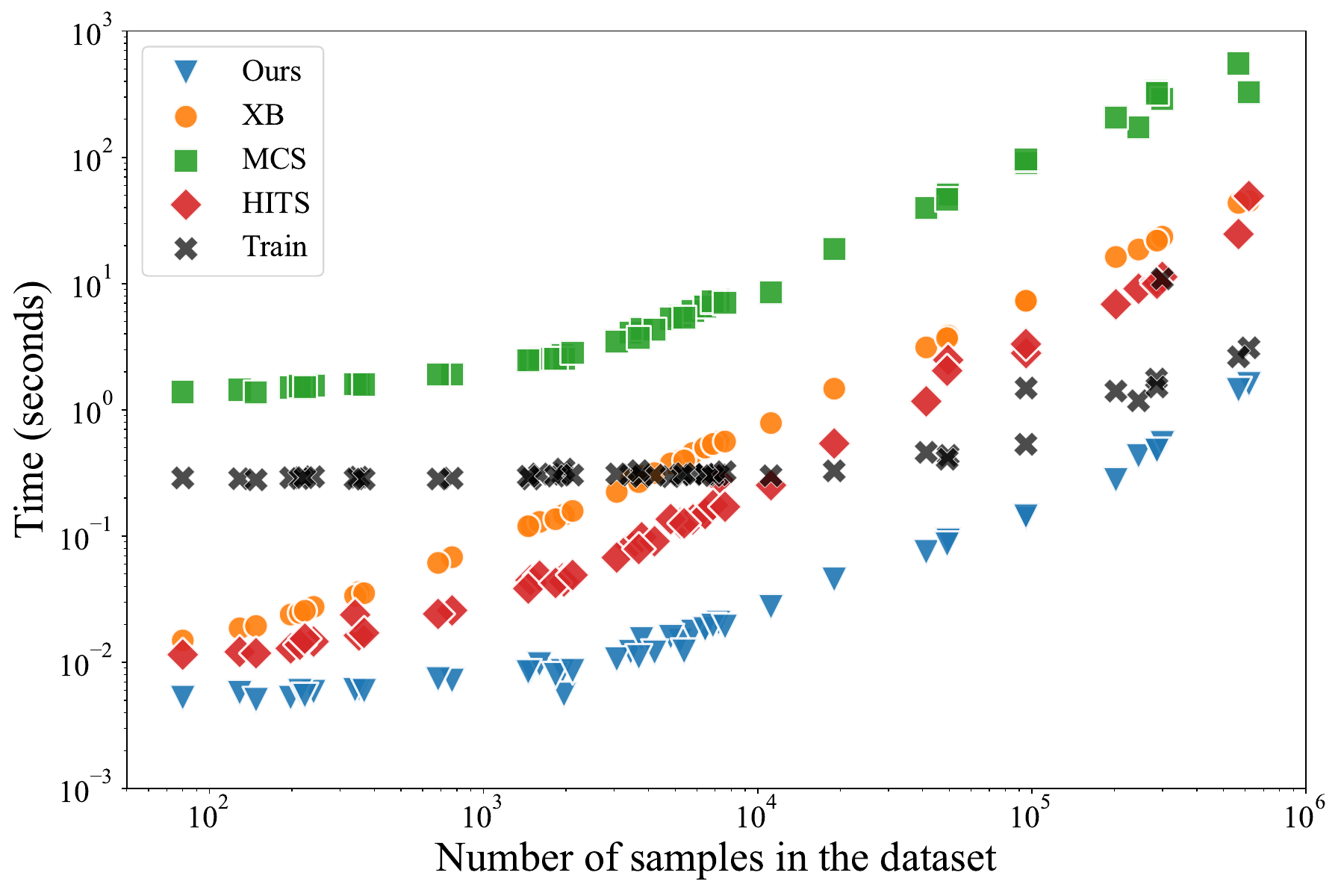}
  \caption{Efficiency of UOMS and our solution.}
  \label{Fig:uoms-eff}
\end{figure}

\section{Conclusion}
In this paper, we are dedicated to exploring the issue of training unsupervised outlier detection models on contaminated datasets.  Different from existing methods, we investigate a novel approach through data distribution analysis. Firstly, we introduce the concept of loss gap and explain the prevalence of inlier priority. Based on this, we propose a zero-label evaluation metric, Loss Entropy, to mirror changes in the model's detection capability. Based on the metric, an early stopping algorithm (EntropyStop) to automatically halt the model's training is devised. Meanwhile, theoretical proofs for our proposed metric are provided in detail. Comprehensive experiments are conducted to validate the metric and algorithm. The results demonstrate that our method not only shows effectiveness but also significantly saves training time.

Furthermore, EntropyStop can be integrated with various deep models, suggesting its potential for extensive application. We envisage that the proposed metric, loss entropy, could bring new vitality to the field of anomaly detection.

\section*{Acknowledgement}
This work is supported by the National Science and Technology Major Project 2021ZD0114501.

\bibliographystyle{ACM-Reference-Format}
\bibliography{my-ref}

\appendix
\clearpage
\section{Theoretical proof}\label{theproof}
\newtext{
In this section, we aim to provide a theoretical basis for the negative correlation between loss entropy and AUC. We demonstrate that when AUC increases, loss entropy is more likely to decrease.
The converse can be proven by analogous reasoning. Therefore, we do not provide a separate proof for the converse.}

\subsection{Notations and definitions}
We summarize here the notations for the effectiveness proof for entropy-stop. \textit{At any time $t$}, we denote current training dynamics as:
\begin{itemize}
    \item $n$ : the number of samples used to evaluate the model $M$'s detection capability in each iteration, i.e. $N_{eval}$.
    \item $f_M(\cdot)$: the unsupervised loss function and outlier score function of model $M$.
    \item $\mathcal{X}$: the distribution of data.
    \item $\mathcal{X}_{in}$: the normal data distribution.
    \item $\mathcal{X}_{out}$: the anomalous data distribution.
    \item $\mathcal{O} = \{f_M(x) | x \sim \mathcal{X}\}$: the loss distribution outputted by $M$.
    \item $\mathcal{O}_{in}$: the loss distribution from inliers.
    \item $\mathcal{O}_{out}$: the loss distribution from outliers.
   \item $v$: the random variable of loss value, i.e., $v \sim \mathcal{O}$.
    \item $v^+$: the random variable that $v^+ \sim \mathcal{O}_{out}$.
    \item $v^-$: the random variable that $v^- \sim \mathcal{O}_{in}$.
    \item $\mathcal{V} = \left \{ v_1, \dots , v_n \right \} $: the set of unsupervised losses calculated over all samples. $\forall v_i, v_i>0$.
    \item $\mathcal{V}^+$: the set of unsupervised losses calculated over all abnormal samples.
    \item $\mathcal{V}^-$: the set of unsupervised losses calculated over all normal samples.
    \item $\rho(\cdot)$: the Probability Density Function (PDF) of $\mathcal{O}$
    \item $\rho(\cdot)^+$: the Probability Density Function (PDF) of $\mathcal{O}_{out}$
    \item $\rho(\cdot)^-$: the Probability Density Function (PDF) of $\mathcal{O}_{in}$
    \item $\alpha$: the ratio of outliers in all data samples, $\alpha \in \left ( 0,1 \right )$.
    \item $S = \sum_{i=1}^{n} v_i $: the sum of all losses in $V$. 
    \item $u_i = \frac{v_i}{S}$: the normalized loss value.
    \item $U = \{u_i\}_{i=0}^n$: the set of normalized loss values.
    \item $H_L = H(U) = - \sum_{i=1}^{n} u_i \log u_i$: \textit{Loss entropy}.
    \item $\mathcal{N}^\prime$: Corresponding value of notation $\mathcal{N}$ at time $t+1$. For example, $v_i^\prime$ means the $i$-th loss value in the next iteration. Then we define $\Delta\mathcal{N}=\mathcal{N}^\prime-\mathcal{N}$.
\end{itemize}

Then we make following definitions:
\begin{enumerate}
\item AUC: the performance indicator, which is:
\[\frac{1}{|\mathcal{V}^-| |\mathcal{V}^+|} \sum_{v^-_i \in \mathcal{V}^-} \sum_{v^+_j \in \mathcal{V}^+}\mathbb{I}(v^-_i < v^+_j) = P(v^-<v^+)\]
\item loss gap: $E(v^+) - E(v^-) = E(v^+-v^-)$, the  difference of average loss value between two classes. 
\item $\delta=v^+-v^-$: the random variable of loss gap. 
\item speed gap: $E(\Delta v^+)-E(\Delta v^-) = E(\Delta v^+-\Delta v^-)$, the difference of the decreasing speed of averaged loss value between two classes.  \newtext{Note that $\Delta v = v' - v$}.
\item $\Delta \delta=\Delta v^+-\Delta v^-$: the random variable of speed gap.
\end{enumerate}

\subsection{AUC and Entropy}
\label{proof: loss-gap-entropy}
We aim to prove that when AUC increases, $H(V_t)$ also has more possibility to decrease, which has following mathematical form:

\begin{gather}
P(H(V_t)>H(V_{t+1})\mid P(\delta + \Delta \delta >0) > P(\delta>0) )>0.5 \notag
\end{gather}

Basically, $P(\delta + \Delta \delta >0) > P(\delta>0) )$ means that the new AUC is larger than the original AUC. We divide the proof into 2 steps, providing them in Section \ref{sub_proof_1} and \ref{sub_proof_2}.

\subsubsection{Assumptions}
\begin{assumption}[inlier priority]\label{E1>E0}
$E(\delta)>0$.
\end{assumption}
\newtext{First, we assume that the outliers have a larger expectation of averaged loss value, which is the concept of \textit{inlier priority} mentioned in Section \ref{inlier-priority-sec}.}

\begin{assumption}
$\Delta S<0
$.
\end{assumption}
Second, we assume that the losses continue to be minimized by the optimizer.

\begin{assumption}\label{AUC<1}
$P(\delta>0)<1$.
\end{assumption}
We also assume $\text{AUC}<1$. Otherwise, there is no room for AUC to increase anymore.

\begin{assumption}
The random variable $v$ is distributed according to the probability density function $\rho(v)=\alpha \rho^+(v)+(1-\alpha)\rho^-(v), \alpha \in \left [ 0,1 \right ]$, in which $\rho^+(v)$ and $\rho^-(v)$ are the PDFs of the distribution of $v^+$ and $v^-$, respectively. 
\label{distribute-sample}
\end{assumption}
\newtext{Here, $\alpha$ denotes the outlier ratio of data. Assumption \ref{distribute-sample} implies that the random variable v has a $\alpha$ probability of being sampled from $\rho^+(v)$ and a 1-$\alpha$ probability of being sampled from $\rho^-(v)$.}

\begin{assumption}
$$P(\delta > 0, \Delta \delta >0) = P(\delta >0)P(\Delta \delta >0)$$
\end{assumption}
\newtext{Since $\delta$ and $\Delta \delta$ do not strongly correlate, we assume that $\delta >0$ and $\Delta \delta >0$ are unrelated for simplifying our analysis.
}

\begin{assumption}
\label{AUC>0.5}
 $AUC\ge 0.5$
\end{assumption}
We assume that the detector 's performance is better than random guess.
\newtext{In most cases, this assumption can be easily satisfied due to the effectiveness of UOD algorithms.}

\begin{assumption}\label{assumption3}
$u_i \in (0, \frac{1}{e})$
\end{assumption}
\newtext{Given that $\sum_{i=1}^{|D|} u_i = 1, u_i > 0$, and the dataset size $|D|$ usually satisfies $|D| \gg e$, we assume that $u_i < \frac{1}{e}$.}

\begin{assumption}\label{small delta v}
$\Delta v_i$ is sufficiently small.
\end{assumption}
\newtext{Basically, a small learning rate is set to ensure the convergence of the learning algorithm, thereby resulting in minimal changes in loss values.}

\begin{assumption}
\label{key-assumption}
 $$E(\delta)>0, P(\delta + \Delta \delta >0) > P(\delta >0) \rightarrow P(\Delta \delta > 0) > 0.5$$
 $$E(\delta)>0, P(\delta + \Delta \delta >0) < P(\delta >0)  \rightarrow P(\Delta \delta > 0) < 0.5$$
\end{assumption}
\newtext{Here, we assume that if the loss gap exists and the $AUC$ increases (or decreases) after a single gradient update, the decrease in outliers' losses is more (or less) likely to be smaller than the decrease in inliers' losses.}

\subsubsection{Subproof 1}
\label{sub_proof_1}
The first subproof is: if $$P(\delta + \Delta \delta >0) > P(\delta>0)$$
then $$\Delta u_i > \Delta u_j \to P(u_i > u_j) > 0.5$$

\begin{proof}\label{SP1}

With $\Delta S < 0$ and $\Delta u_i > \Delta u_j$, we can deduce $\Delta v_i > \Delta v_j$. Since both losses $v_i$ and $v_j$ can be sampled from either $\mathcal{O}_{out}$ and $\mathcal{O}_{in}$, $P(u_i>u_j \mid \Delta v_i > \Delta v_j)$ equals to the sum of four conditional probabilities:
\begin{align}
     &P(u_i>u_j \mid \Delta v_i > \Delta v_j)=P(v_i>v_j\mid \Delta v_i > \Delta v_j)  \notag
\\  =& P(v_i>v_j, v_i\sim \mathcal{O}_{out}, v_j\sim \mathcal{O}_{in} \mid \Delta v_i > \Delta v_j)\label{fxfy}
\\  &+P(v_i>v_j, v_i\sim \mathcal{O}_{in}, v_j\sim \mathcal{O}_{out} \mid \Delta v_i > \Delta v_j)\notag
\\  &+P(v_i>v_j, v_i\sim \mathcal{O}_{out}, v_j\sim \mathcal{O}_{out} \mid \Delta v_i > \Delta v_j)\label{fxfx}
\\  &+P(v_i>v_j, v_i\sim \mathcal{O}_{in}, v_j\sim \mathcal{O}_{in} \mid \Delta v_i > \Delta v_j)\notag
\end{align}
where
\begin{align}
    &P(v_i>v_j, v_i\sim \mathcal{O}_{out}, v_j\sim \mathcal{O}_{out} \mid \Delta v_i > \Delta v_j) \notag
    \\=&\frac{P(v_i>v_j, \Delta v_i>\Delta v_j \mid v_i\sim \mathcal{O}_{out}, v_j\sim \mathcal{O}_{out})P(v_i\sim \mathcal{O}_{out}, v_j\sim \mathcal{O}_{out})}{P(\Delta v_i > \Delta v_j)} \notag
    \\=&\frac{0.25\alpha^2}{0.5}=0.5\alpha^2 \notag
\end{align}
and
\begin{align}
    &P(v_i>v_j, v_i\sim \mathcal{O}_{out}, v_j\sim \mathcal{O}_{in} \mid \Delta v_i > \Delta v_j) \notag
    \\=&\frac{P(v_i>v_j, \Delta v_i>\Delta v_j \mid v_i\sim \mathcal{O}_{out}, v_j\sim \mathcal{O}_{in})P(v_i\sim \mathcal{O}_{out}, v_j\sim \mathcal{O}_{in})}{P(\Delta v_i > \Delta v_j)} \notag
    \\=&(P(\Delta v_i > \Delta v_j))^{-1}P(v_i>v_j \mid v_i\sim \mathcal{O}_{out}, v_j\sim \mathcal{O}_{in}) \notag
    \\ &\qquad P(\Delta v_i>\Delta v_j \mid v_i\sim \mathcal{O}_{out}, v_j\sim \mathcal{O}_{in})P(v_i\sim \mathcal{O}_{out}, v_j\sim \mathcal{O}_{in}) \notag
    \\=&\frac{AUC \cdot P(\Delta \delta > 0)\alpha(1-\alpha)}{0.5}=2\alpha(1-\alpha)AUC \cdot P(\Delta \delta > 0) \notag
\end{align}


Similarly we calculate the other two terms in the equation. Then,
\begin{align}
&P(u_i>u_j \mid \Delta v_i > \Delta v_j) \notag
\\=&0.5\alpha^2+0.5(1-\alpha)^2 \notag
\\+&2\alpha(1-\alpha)\Big(AUC\cdot P(\Delta \delta >0)+(1-AUC)\cdot \big(1-P(\Delta \delta >0)\big)\Big) \notag
\end{align}
With $AUC\ge 0.5, P(\Delta \delta >0)>0.5$ from Assumption \ref{AUC>0.5} and \ref{key-assumption}, we can infer $P(u_i>u_j \mid \Delta v_i > \Delta v_j)>0.5$.
\end{proof}
\subsubsection{Subproof 2}
\label{sub_proof_2}
The second sub-proof is dedicated to demonstrating that it is more likely for the loss entropy to decrease, i.e.,
$$
P(H_L \searrow) > 0.5
$$

From Subproof \ref{SP1}, we have:

\begin{equation}
    \Delta u_i > \Delta u_j \to P(u_i > u_j) > 0.5 
    \label{lemmaSP1}
\end{equation}

\begin{proof}
Loss entropy equals to:
\begin{align}
    H(U) &= - \sum_{i=1}^{n} u^i \log u^i \notag\\
           &= \sum_{i=1}^{n} h(u_i) \notag
\end{align}
where $h(u_i) = -u_ilog(u_i)$.
We can derive that 
$$h'(u) =-(log(u) + 1) $$
$$h''(u) = -\frac{1}{u}$$
where $h'(u)$ is the first derivative of $h(u)$ and $h''(u)$ is the second derivative of $h(u)$. 
This suggests that 
in the domain \( u \in (0, \frac{1}{e}) \), the variable \( u \) exhibits a monotonic increase, with its impact on \( h(u) \) being inversely proportional to its magnitude; namely,
\begin{equation}
     h'(u) >0, u \in (0, \frac{1}{e})
\end{equation}
\begin{equation}
u_i > u_j \rightarrow h'(u_i) < h'(u_j)
\label{eq-entropy-proof=1}
\end{equation}

According to Eq. \ref{lemmaSP1}, we can derive that 
\begin{equation}
  \Delta u_i > \Delta u_j \rightarrow P(h'(u_i) < h'(u_j)) > 0.5
  \label{delta u key eq}
\end{equation}

As $\sum_i u_i = \sum_i u'_i = 1$. Therefore, 
 \begin{equation}
     \sum_{i: \Delta u_i > 0} \Delta u_i = - \sum_{i: \Delta u_i < 0} \Delta u_i  
     \label{delta u equal negative}
 \end{equation}
which means the sum of all positive $\Delta u_i$ equals the negative of the sum of all negative $\Delta u_i$. 

Given that \(\Delta u\) is sufficiently small (i.e., Assumption \ref{small delta v}), we can perform a Taylor expansion on \(H(U')\):
\begin{align}
H(U') & = \sum_{i: \Delta u_i > 0} h(u_i + \Delta u_i) + \sum_{i: \Delta u_i < 0} h(u_i + \Delta u_i) \\
& \approx \sum_{i} h(u_i) + \sum_{i} h'(u_i)\Delta u_i \\
& = H(U) + \sum_{i} h'(u_i)\Delta u_i
\end{align}

\newtext{
Accoring to Eq. \ref{delta u key eq} and Eq. \ref{delta u equal negative},we can derive:
\begin{align}
\Delta u_i > \Delta u_j & \rightarrow  P(\sum_{i: \Delta u_i > 0} h'(u_i) \Delta u_i < - \sum_{i: \Delta u_i < 0} h'(u_i) \Delta u_i) > 0.5 \\
& \rightarrow  P(\sum_{i} h'(u_i)\Delta u_i<0) > 0.5 \\
& \rightarrow  P(H(U') < H(U)) > 0.5 \\
& \rightarrow  P(H_L \searrow) > 0.5
\end{align}
}

\end{proof}

\newtext{Thus, we prove that if AUC increases, then $P(H_L\searrow)>0.5$.
Similarly, the converse of theorem can also be proven by analogous reasoning.
This means during the training, the trend of AUC and loss entropy have a negative correlation with each other, giving the theoretical guarantee of the algorithm.}

\clearpage
\section{Experiment Details}
\label{appendix:experiment-detail}
\subsection{Real-world Outlier Detection Datasets}
\label{appendix:dataset-pool}

We construct our experiments using 47 benchmark datasets commonly employed in outlier detection research, as shown in Table \ref{pool-dataset}.

\begin{table}[htbp]
\small
  \centering
  \caption{Real-world dataset pool}
    \begin{tabular}{cl|rrr}
    \toprule
          & \textbf{Dataset} & \textbf{Num Pts} & \textbf{Dim} & \textbf{\% Outlier} \\
    \midrule
    1     & ALOI  & 49534 & 27    & 3.04  \\
    2     & annthyroid & 7200  & 6     & 7.42  \\
    3     & backdoor & 95329 & 196   & 2.44  \\
    4     & breastw & 683   & 9     & 34.99  \\
    5     & campaign & 41188 & 62    & 11.27  \\
    6     & cardio & 1831  & 21    & 9.61  \\
    7     & Cardiotocography & 2114  & 21    & 22.04  \\
    8     & celeba & 202599 & 39    & 2.24  \\
    9     & census & 299285 & 500   & 6.20  \\
    10    & cover & 286048 & 10    & 0.96  \\
    11    & donors & 619326 & 10    & 5.93  \\
    12    & fault & 1941  & 27    & 34.67  \\
    13    & fraud & 284807 & 29    & 0.17  \\
    14    & glass & 214   & 7     & 4.21  \\
    15    & Hepatitis & 80    & 19    & 16.25  \\
    16    & http  & 567498 & 3     & 0.39  \\
    17    & InternetAds & 1966  & 1555  & 18.72  \\
    18    & Ionosphere & 351   & 32    & 35.90  \\
    19    & landsat & 6435  & 36    & 20.71  \\
    20    & letter & 1600  & 32    & 6.25  \\
    21    & Lymphography & 148   & 18    & 4.05  \\
    22    & magic & 19020 & 10    & 35.16  \\
    23    & mammography & 11183 & 6     & 2.32  \\
    24    & mnist & 7603  & 100   & 9.21  \\
    25    & musk  & 3062  & 166   & 3.17  \\
    26    & optdigits & 5216  & 64    & 2.88  \\
    27    & PageBlocks & 5393  & 10    & 9.46  \\
    28    & pendigits & 6870  & 16    & 2.27  \\
    29    & Pima  & 768   & 8     & 34.90  \\
    30    & satellite & 6435  & 36    & 31.64  \\
    31    & satimage-2 & 5803  & 36    & 1.22  \\
    32    & shuttle & 49097 & 9     & 7.15  \\
    33    & skin  & 245057 & 3     & 20.75  \\
    34    & smtp  & 95156 & 3     & 0.03  \\
    35    & SpamBase & 4207  & 57    & 39.91  \\
    36    & speech & 3686  & 400   & 1.65  \\
    37    & Stamps & 340   & 9     & 9.12  \\
    38    & thyroid & 3772  & 6     & 2.47  \\
    39    & vertebral & 240   & 6     & 12.50  \\
    40    & vowels & 1456  & 12    & 3.43  \\
    41    & Waveform & 3443  & 21    & 2.90  \\
    42    & WBC   & 223   & 9     & 4.48  \\
    43    & WDBC  & 367   & 30    & 2.72  \\
    44    & Wilt  & 4819  & 5     & 5.33  \\
    45    & wine  & 129   & 13    & 7.75  \\
    46    & WPBC  & 198   & 33    & 23.74  \\
    47    & yeast & 1484  & 8     & 34.16  \\
    \bottomrule
    \end{tabular}%
  \label{pool-dataset}
\end{table}%

\subsection{Configuration of Improvement Study}
\label{appx-sec: HP config of Exp1}


In this segment, we elaborate on the HP configuration settings utilized for the experiments delineated in Sec. \ref{sec: ensemble-cmp-exp}.
For Randnet and ROBOD, the default HP configurations from ROBOD's publicly accessible repository\footnote{https://github.com/xyvivian/ROBOD} were adopted, specified as epochs=250, batch size=1024, and learning rate (lr) of 0.001. The Autoencoder (AE) architecture defined within the codebase was maintained without modifications. Concerning ensemble size, Randnet amalgamates ten models, each initialized with distinct random seeds and subjected to a pre-training phase of 100 epochs, whereas ROBOD aggregates sixteen models, each featuring unique HP configurations. Our $EntropyStop$ is applied to a simple AE model. The simplest form of AE, devoid of any supplementary techniques, is denoted as VanillaAE. VanillaAE's architecture is designed for simplicity, with dimensions $[d_{in}, 64, d_{in}]$, where $d_{in}$ represents the dimensionality of the input vectors. For VanillaAE, we designated epochs=250, batch size=1024, lr=0.001, and employed Adam as the optimizer.
The $EntropyStop$ technique is integrated for early termination within VanillaAE's training process, with the modified model termed as EntropyAE. Parameters for $EntropyStop$ are set to $k$=100, $R_{down}$=0.1, and $N_{eval}$=1024.
In Appx. \ref{appx:guide-for-tuning} and \ref{entropystop-hp-study}, we explain how to set the parameters of EntropyStop and present the sensitivity of EntropyAE to different $R_{down}$ and \textit{batch size}.

\subsection{Configuration of Model Expansion Experiment}
\label{appx-sec: model expansion config}
More deep-based OD models are experimented based on their original open-source code\footnote{https://github.com/billhhh/RDP}\footnote{https://github.com/boschresearch/LatentOE-AD}. Among them, NeuTraL\footnote{https://github.com/boschresearch/NeuTraL-AD} \cite{NTL} and DeepSVDD \cite{deep-svdd} are two  OD models that are actually trained on clean dataset. 
 For these models, we trained them for 300 epochs using a full batch size approach. Additionally, we adhered to the default hyperparameter settings as specified in their original codebases.
 For UOMS solutions,
Xie-Beni index (XB) \cite{xb}, ModelCentrality (MC) \cite{MC}, and HITS
 \cite{Internal-evaluation-paper} are the baselines for comparison. These baselines have been evaluated their effectiveness in selecting models among a large pool of traditional UOD algorithms in \cite{Internal-evaluation-paper} with published open-source code\footnote{http://bit.ly/UOMSCODE}.  We follow  \cite{Internal-evaluation-paper} to use a lightweight version of MC, called MCS, to reduce its time complexity and $logN$ models are sampled for computing the Kendall $\tau$ coefficient.  For each dataset $D$, the input of these baselines is the set of outlier score lists $\mathcal{S}$ ($|\mathcal{S}|=300$) while the output is the outlier score list $\textbf{s}_i \in \mathbb{R}^{|D|}$ of the selected epoch.   The average result with three runs is reported.

\begin{figure*}
  \centering
\includegraphics[width=0.32\textwidth]{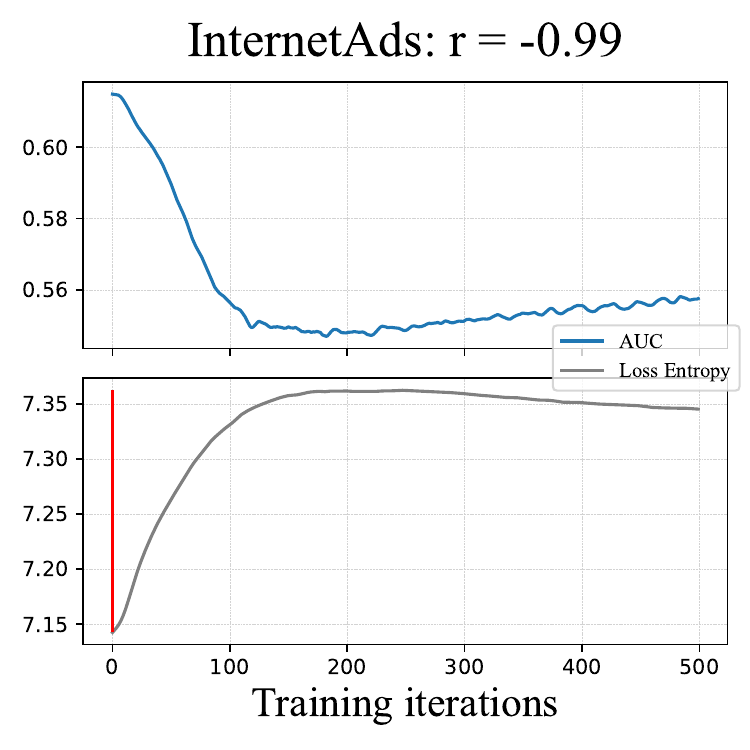}
\includegraphics[width=0.32\textwidth]{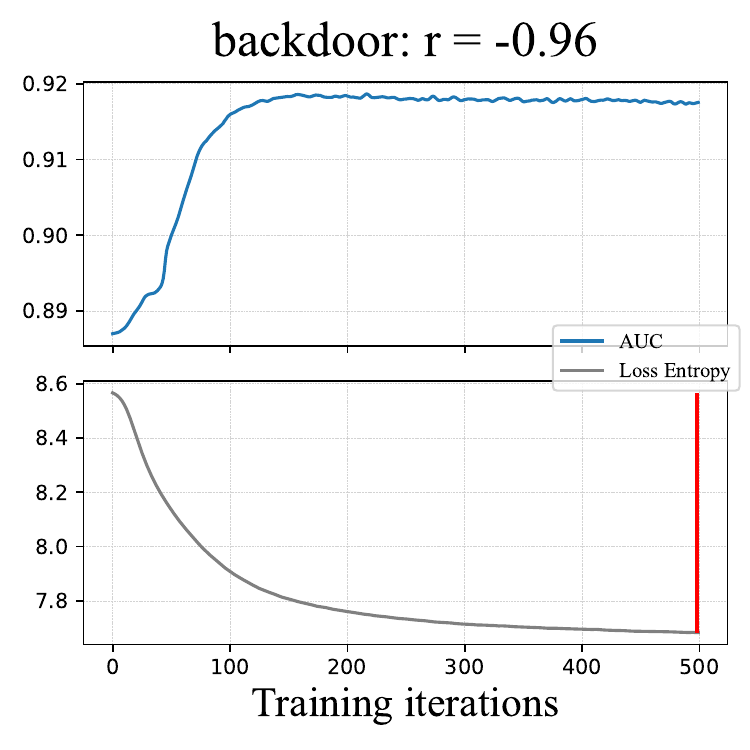}
\includegraphics[width=0.32\textwidth]{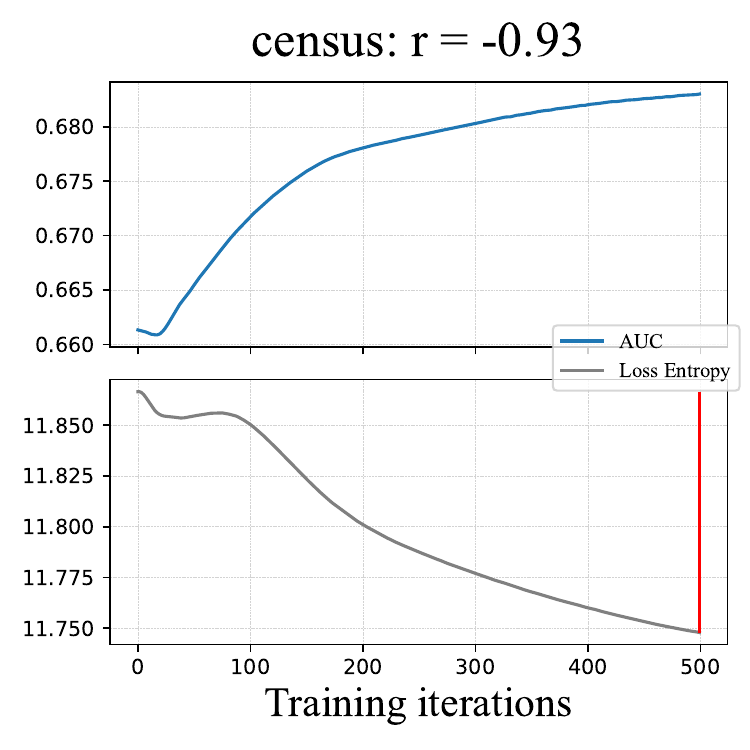}
\includegraphics[width=0.32\textwidth]{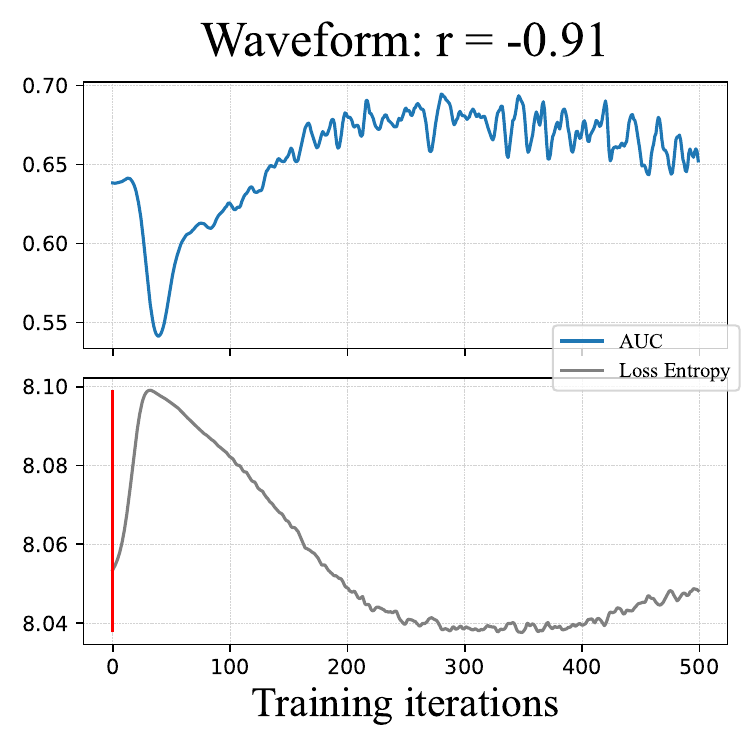}
\includegraphics[width=0.32\textwidth]{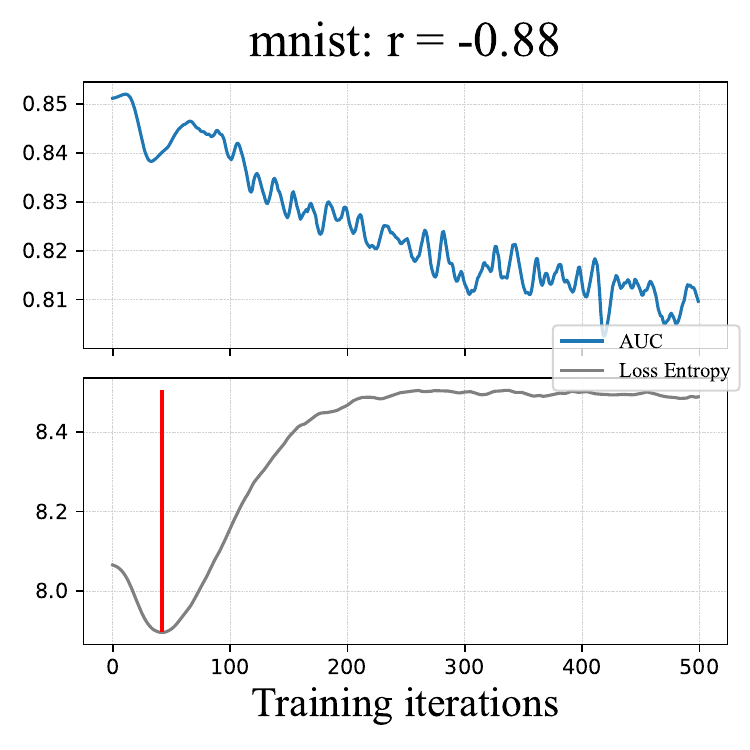}
\includegraphics[width=0.32\textwidth]{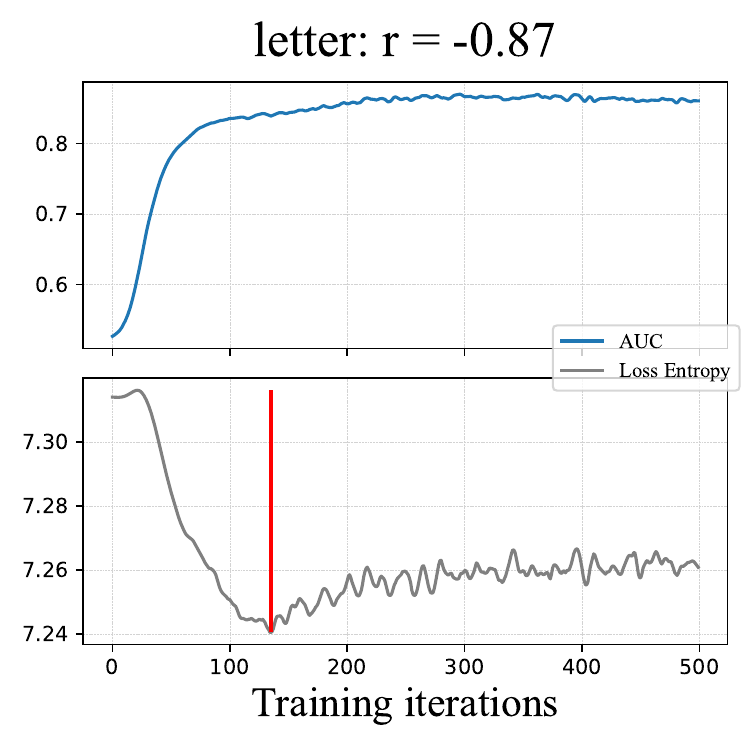}
\includegraphics[width=0.32\textwidth]{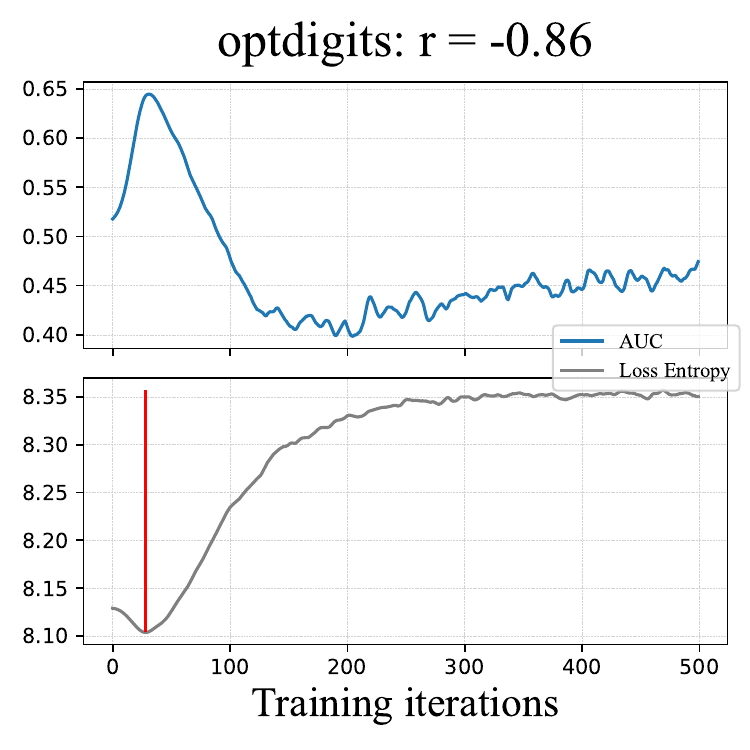}
\includegraphics[width=0.32\textwidth]{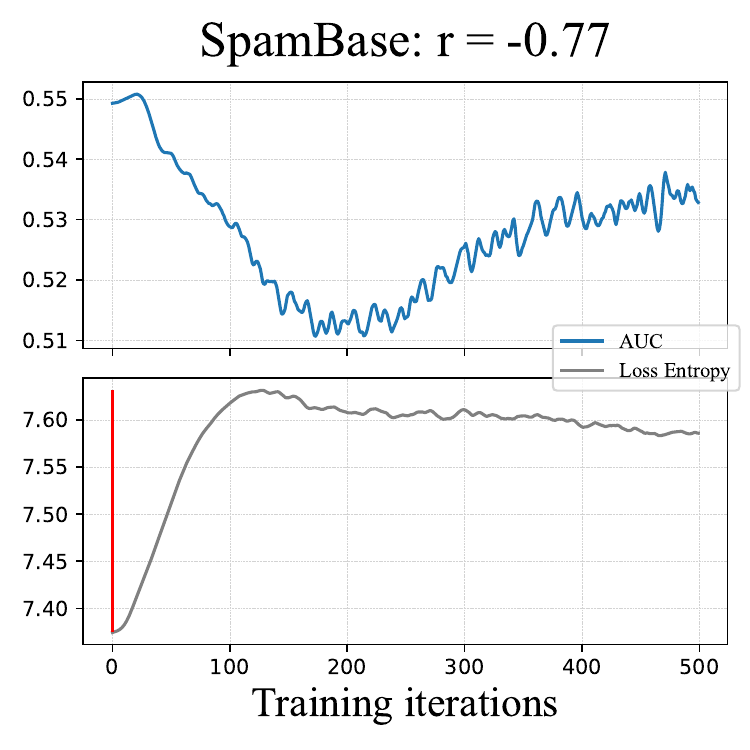}
\includegraphics[width=0.32\textwidth]{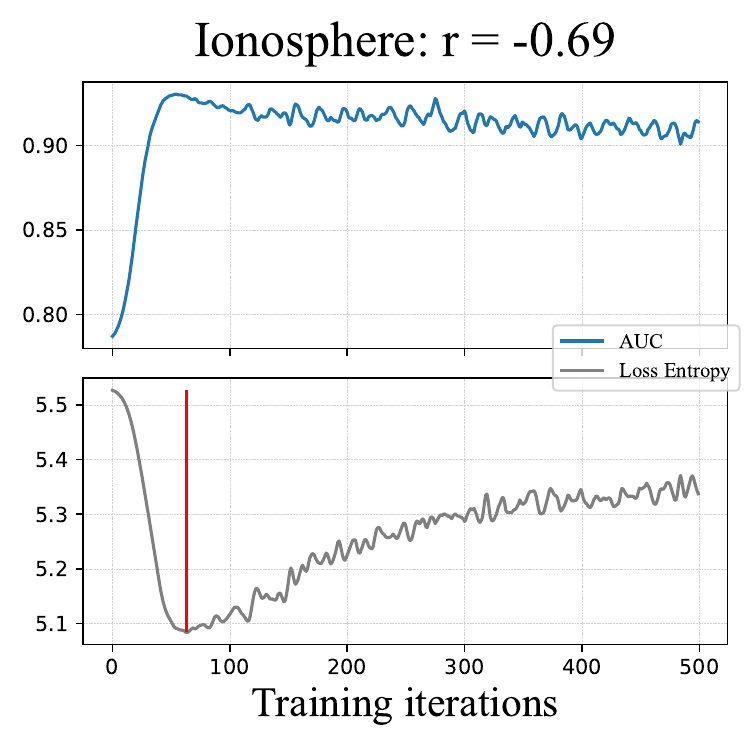}
\includegraphics[width=0.32\textwidth]{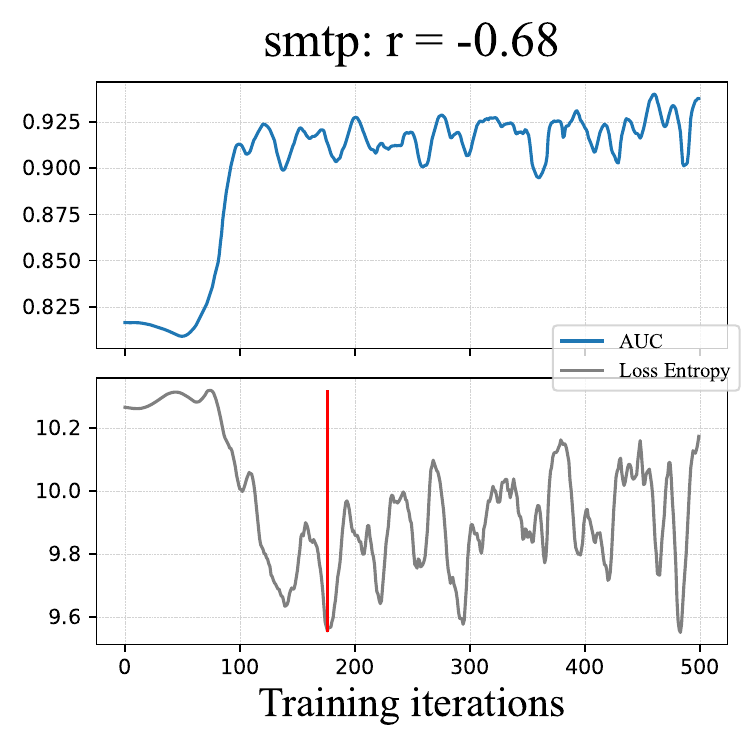}
\includegraphics[width=0.32\textwidth]{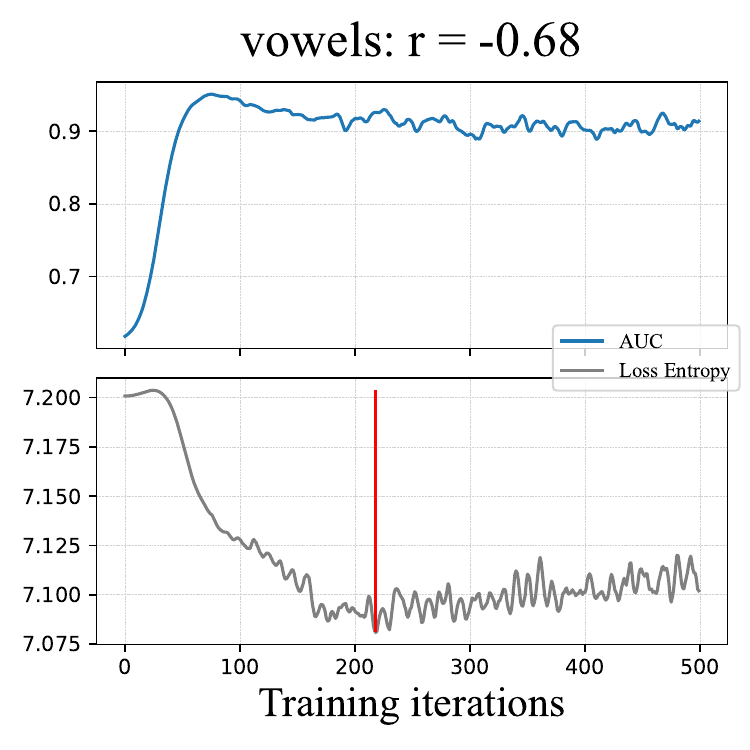}
\includegraphics[width=0.32\textwidth]{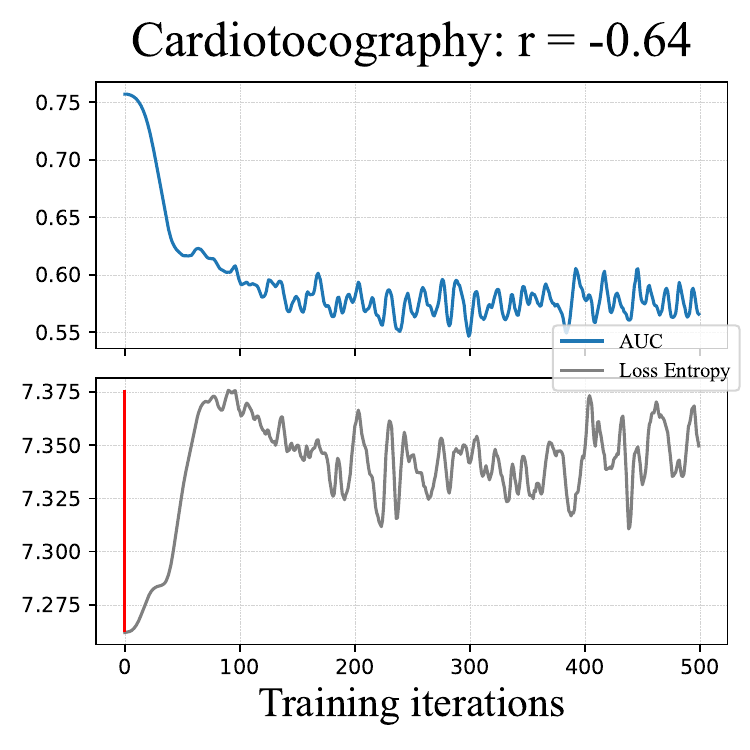}
  \caption{AE:  AUC curves vs  $H_L$ curves. The red vertical line is the epoch selected by $EntropyStop$. $r$ denotes the Pearson correlation coefficient between AUC and $H_L$.}
  \label{Fig:all-curve-1}
\end{figure*}

\begin{figure*}
  \centering
\includegraphics[width=0.32\textwidth]{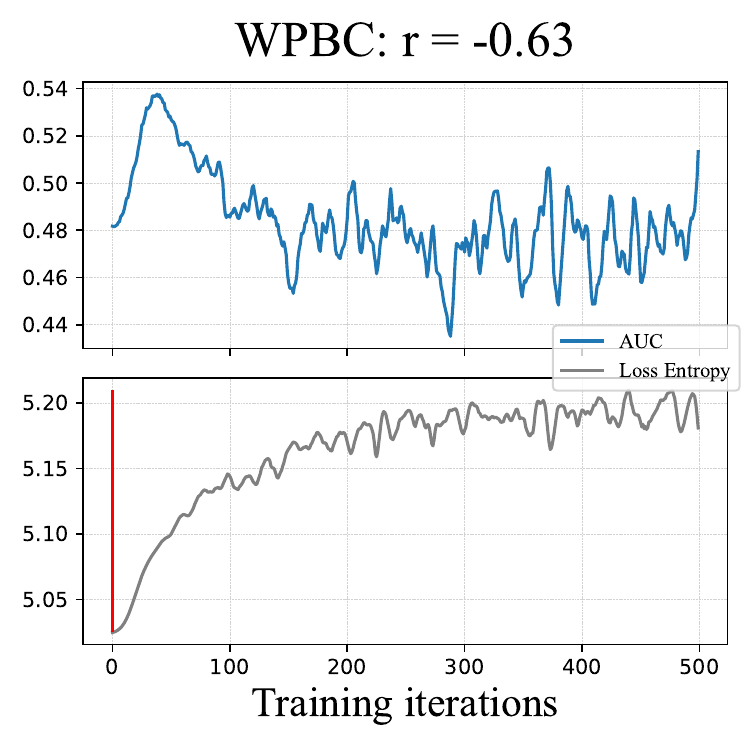}
\includegraphics[width=0.32\textwidth]{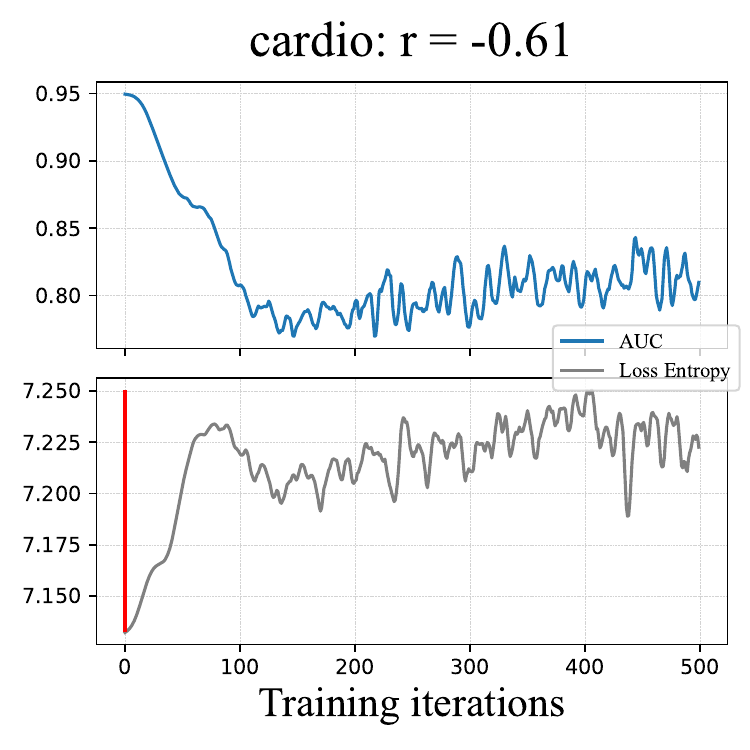}
\includegraphics[width=0.32\textwidth]{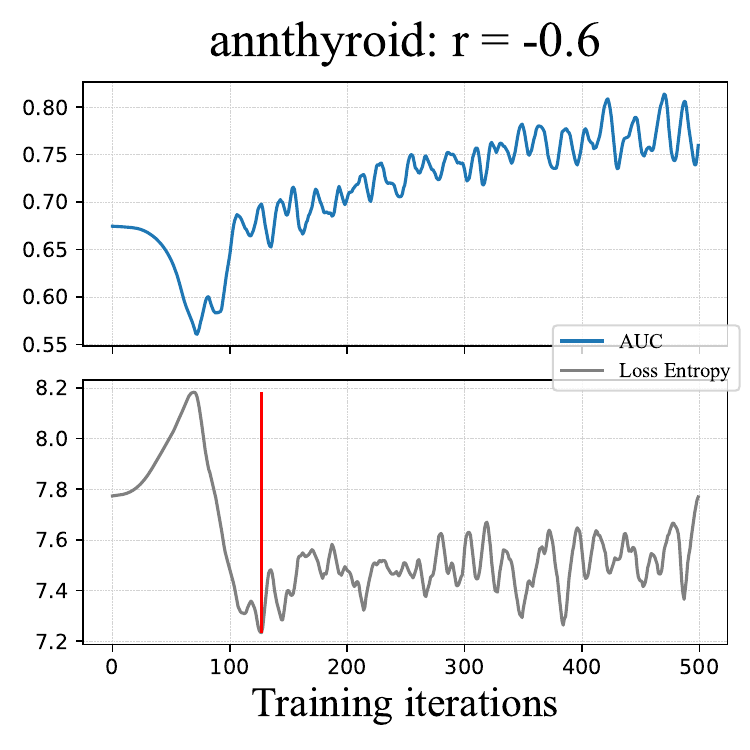}
\includegraphics[width=0.32\textwidth]{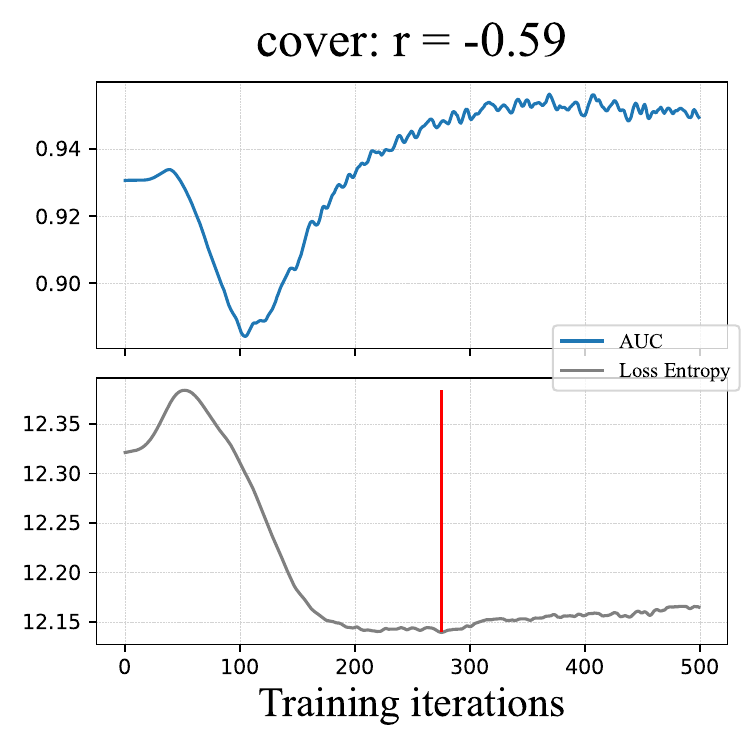}
\includegraphics[width=0.32\textwidth]{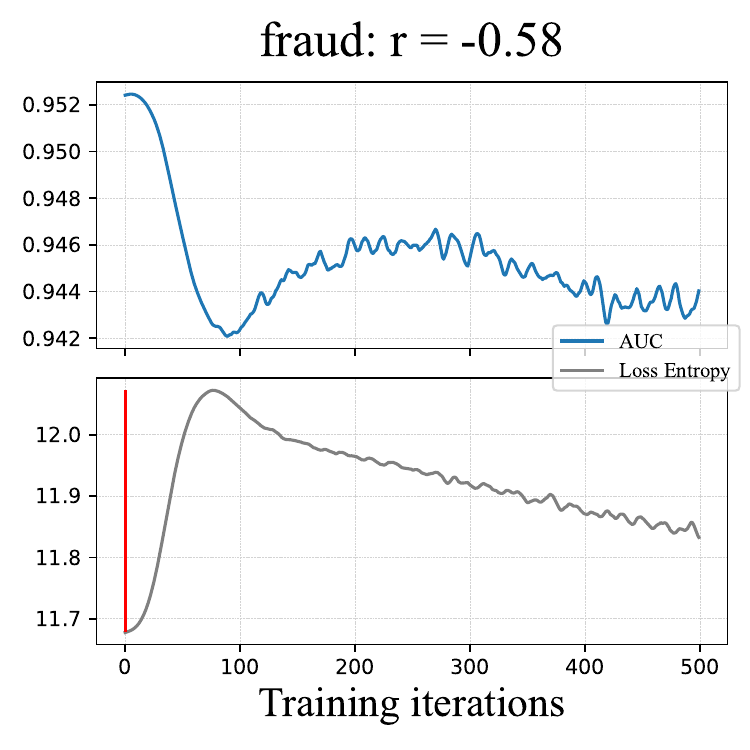}
\includegraphics[width=0.32\textwidth]{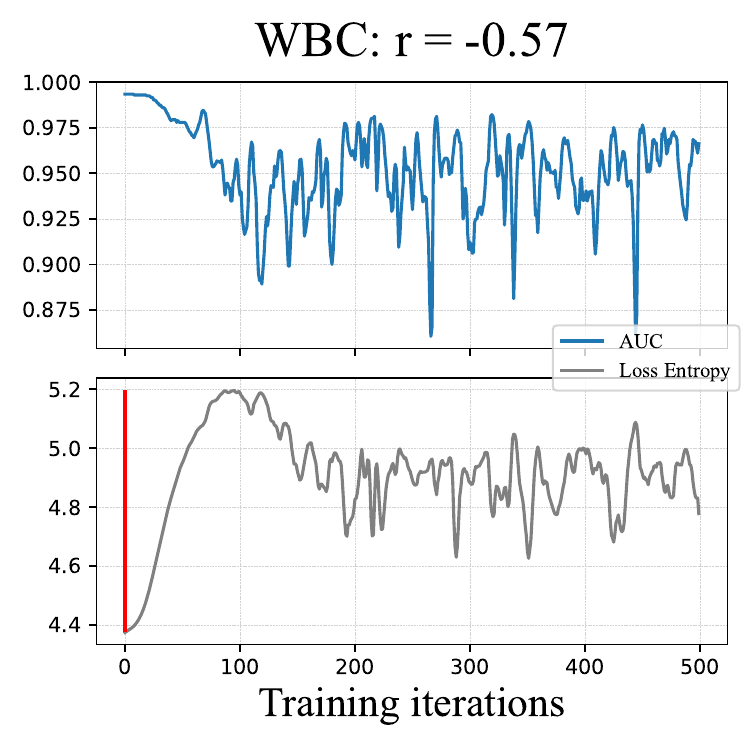}
\includegraphics[width=0.32\textwidth]{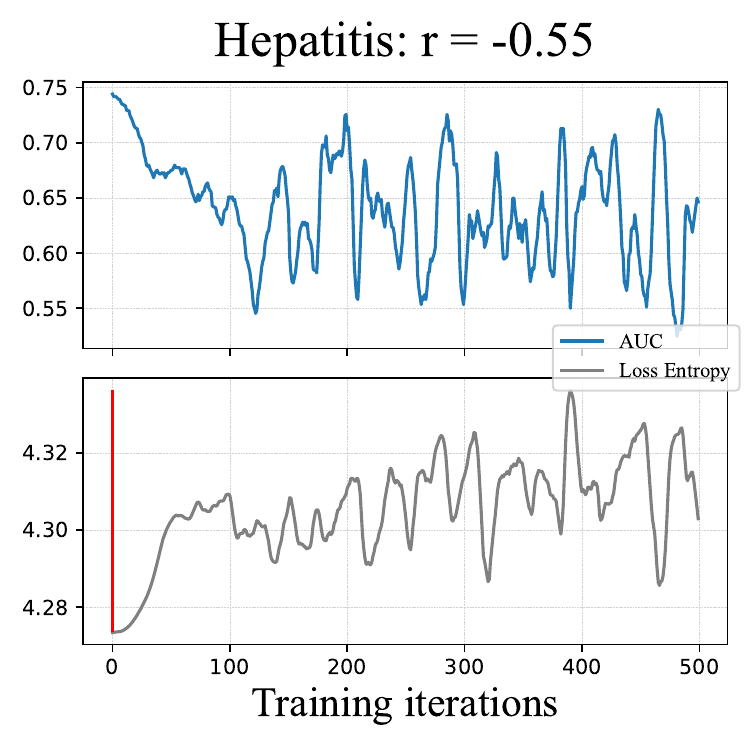}
\includegraphics[width=0.32\textwidth]{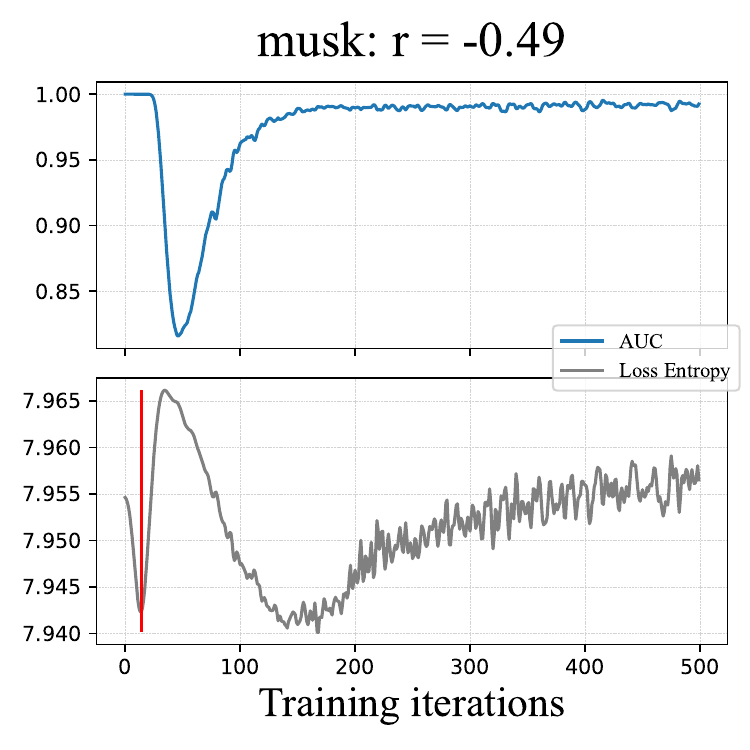}
\includegraphics[width=0.32\textwidth]{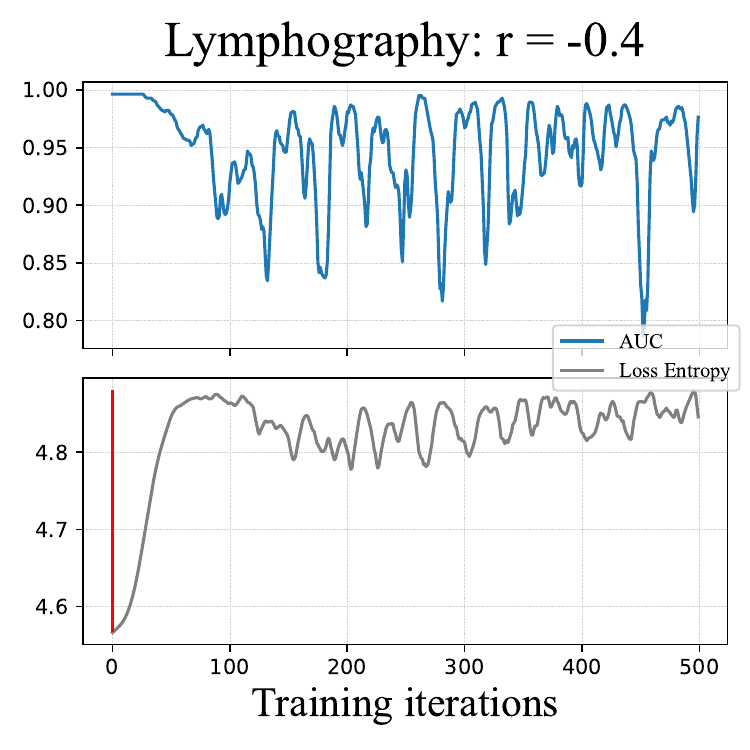}
\includegraphics[width=0.32\textwidth]{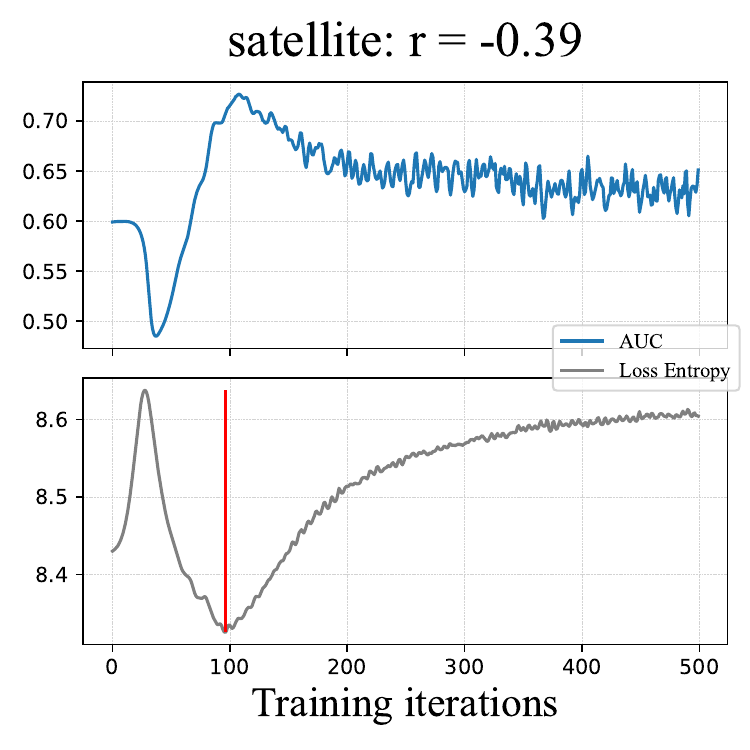}
\includegraphics[width=0.32\textwidth]{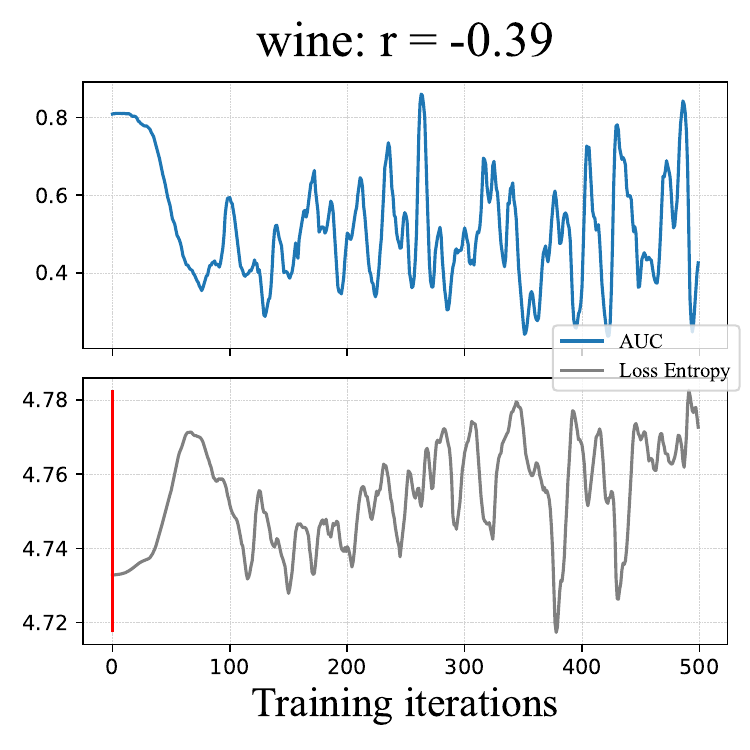}
\includegraphics[width=0.32\textwidth]{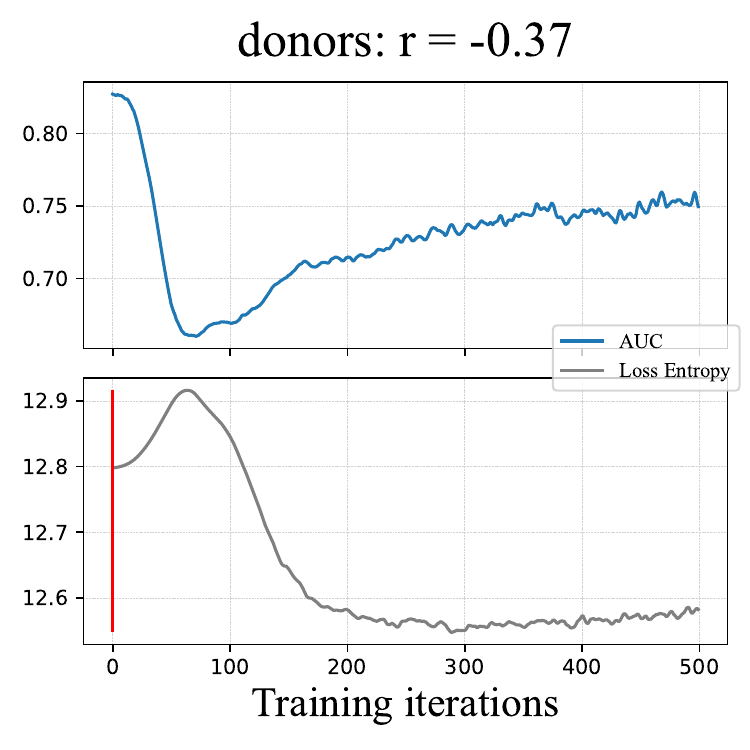}
  \caption{AE: AUC curves vs  $H_L$ curves. The red vertical line is the epoch selected by $EntropyStop$. $r$ denotes the Pearson correlation coefficient between AUC and $H_L$.}
  \label{Fig:all-curve-2}
\end{figure*}

\begin{figure*}
  \centering
  \includegraphics[width=0.32\textwidth]{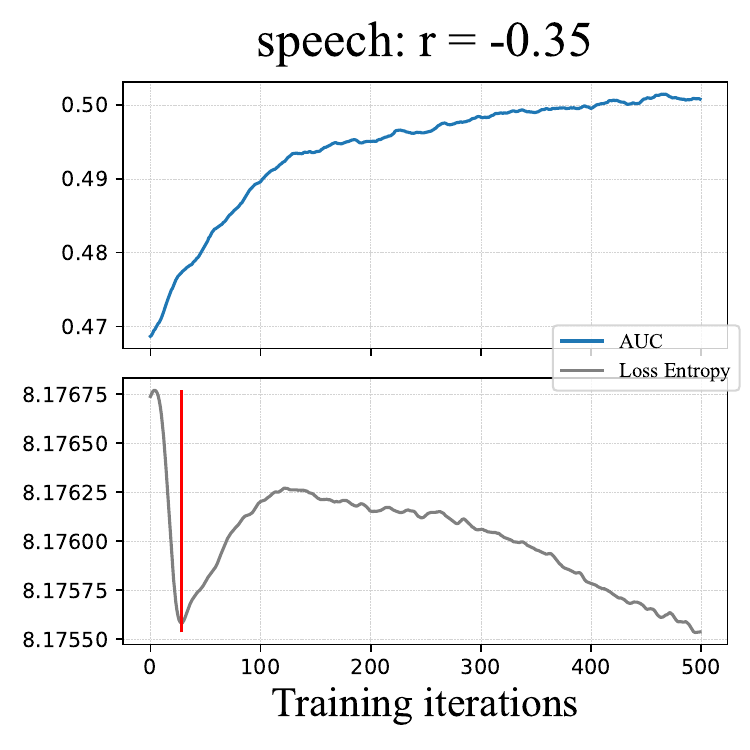}
\includegraphics[width=0.32\textwidth]{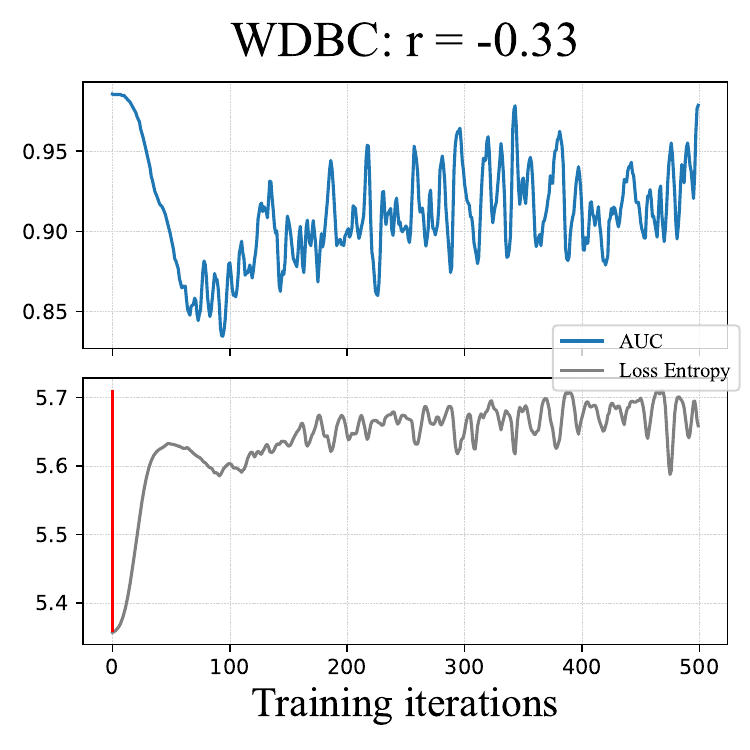}
\includegraphics[width=0.32\textwidth]{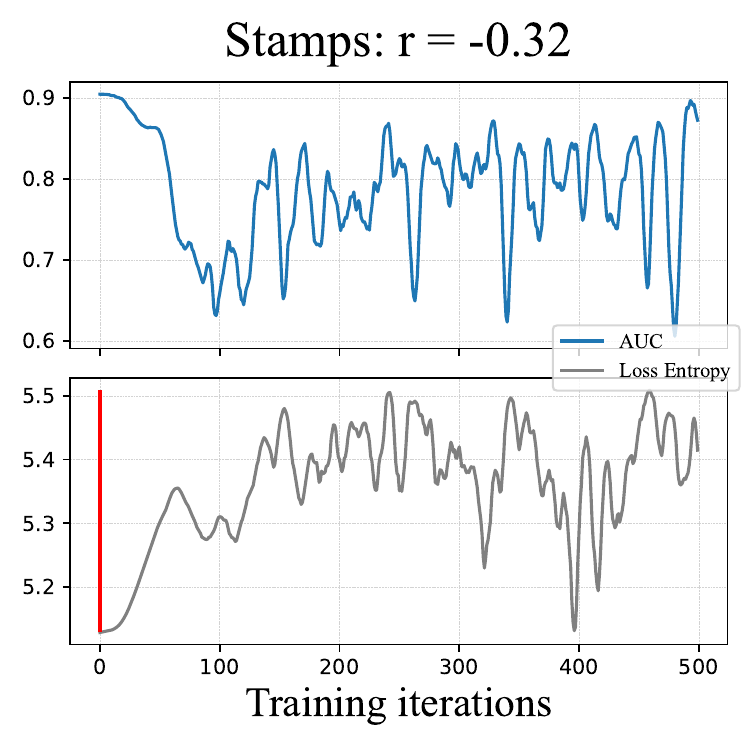}
\includegraphics[width=0.32\textwidth]{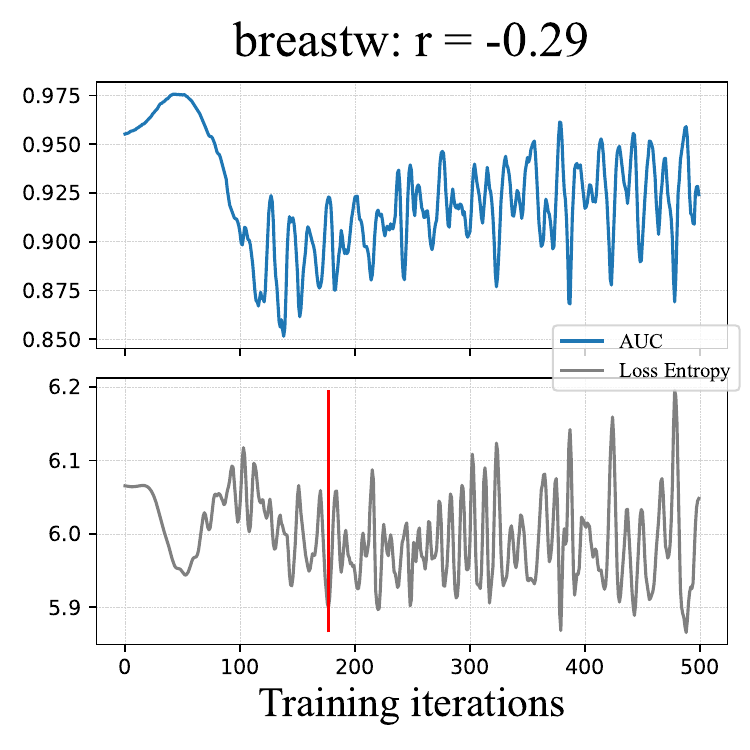}
\includegraphics[width=0.32\textwidth]{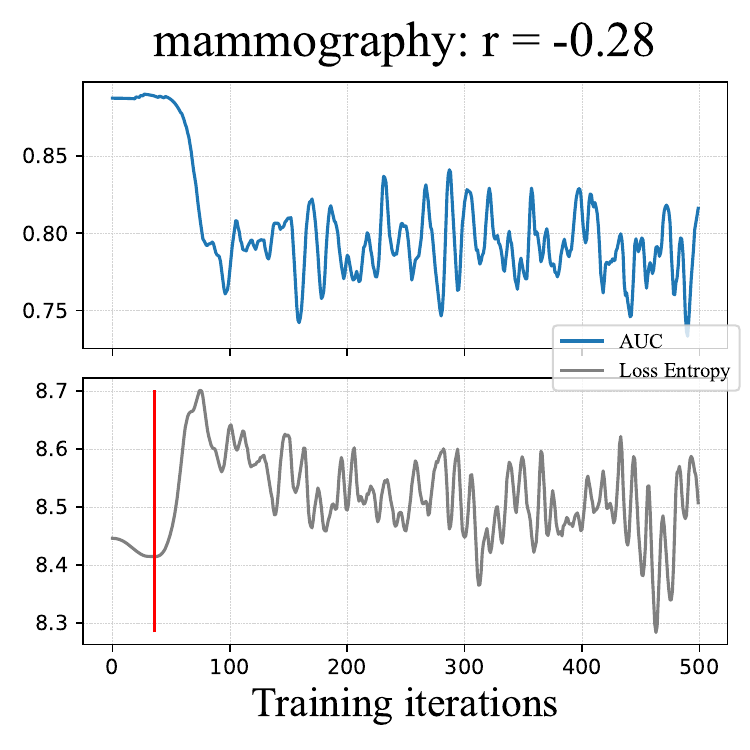}
\includegraphics[width=0.32\textwidth]{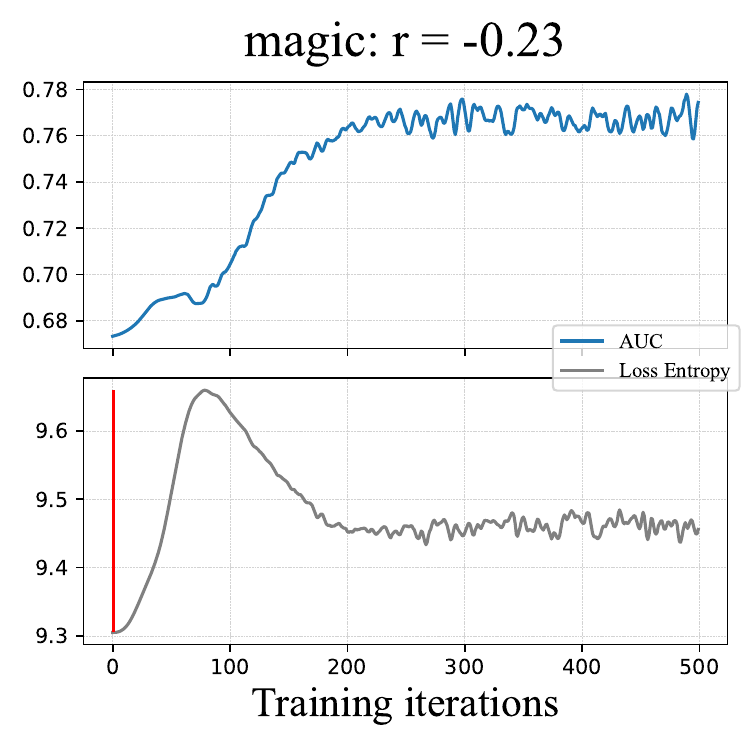}
\includegraphics[width=0.32\textwidth]{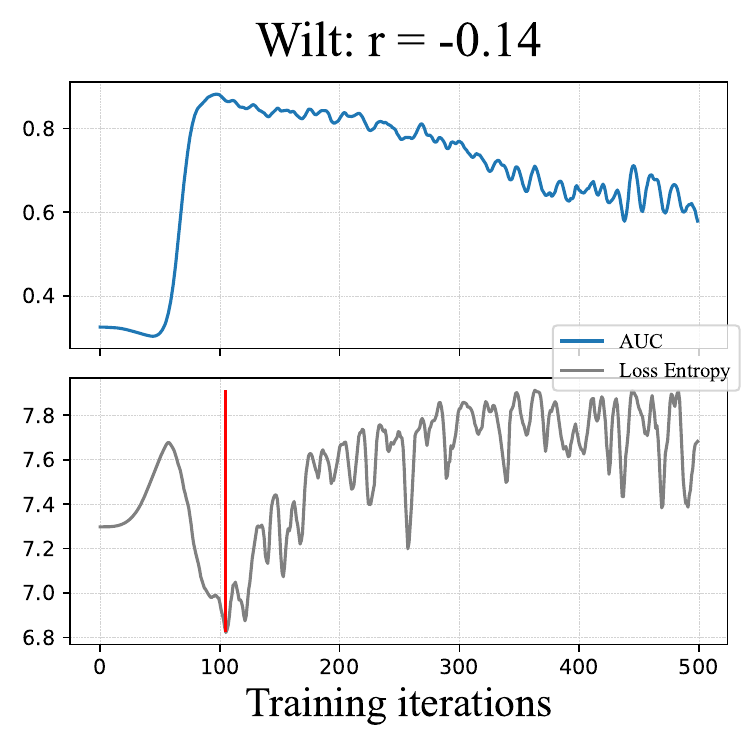}
\includegraphics[width=0.32\textwidth]{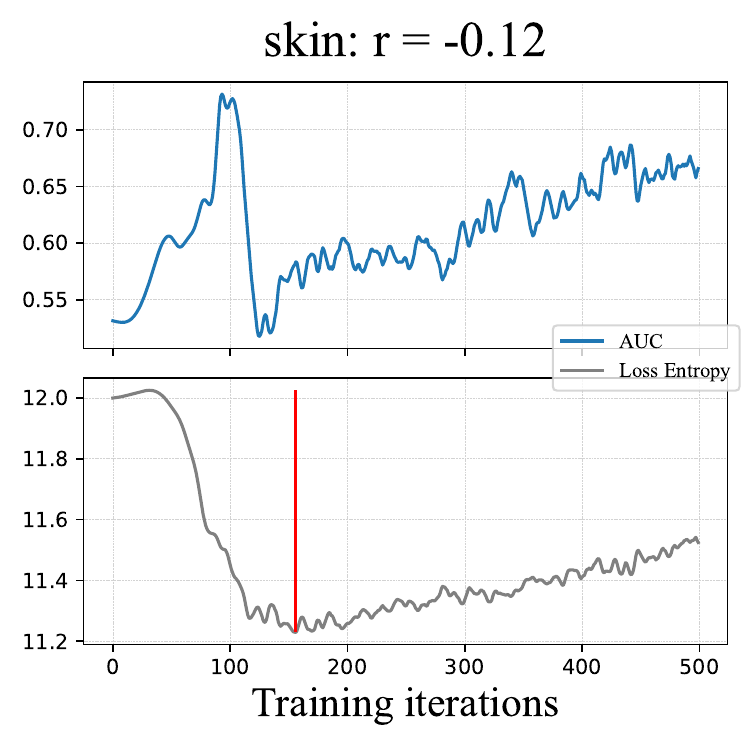}
\includegraphics[width=0.32\textwidth]{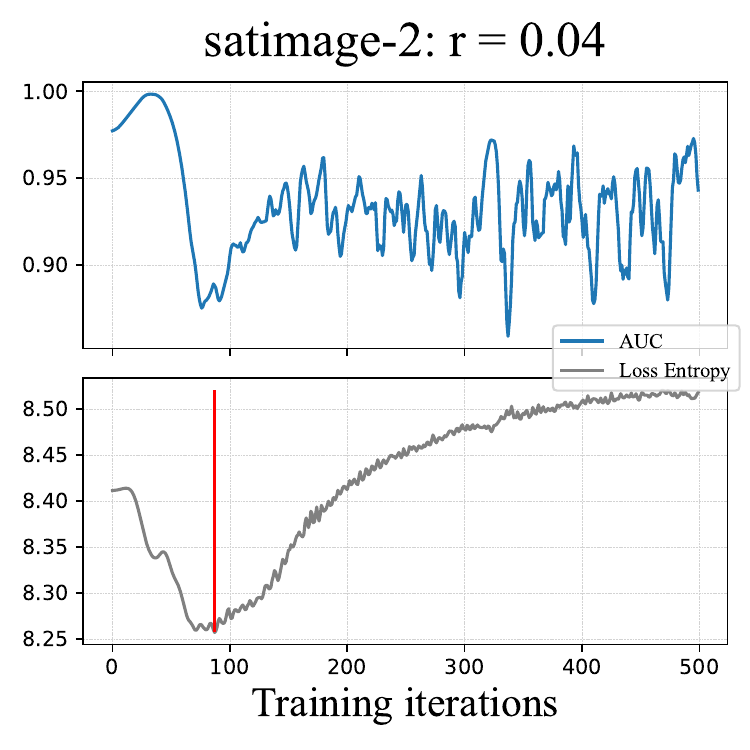}
\includegraphics[width=0.32\textwidth]{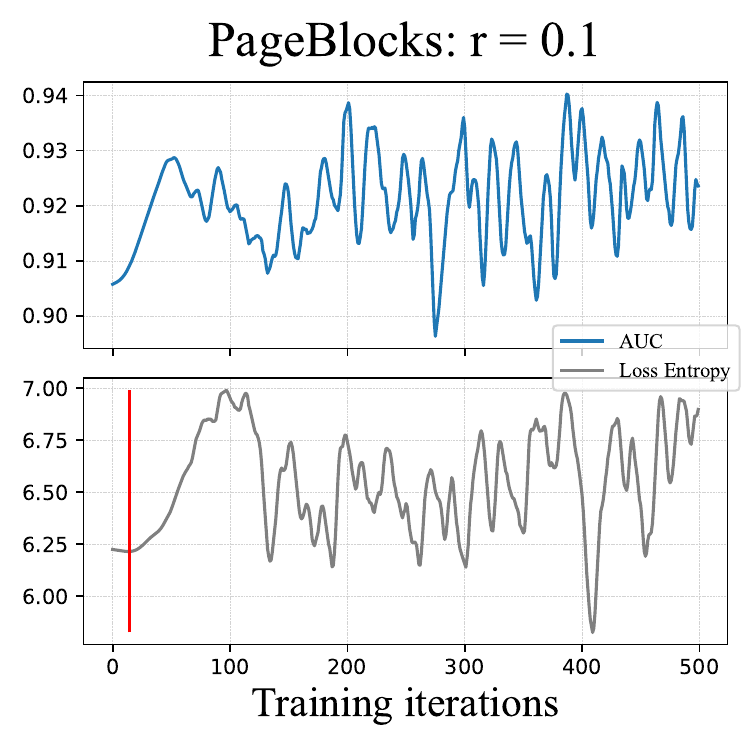}
\includegraphics[width=0.32\textwidth]{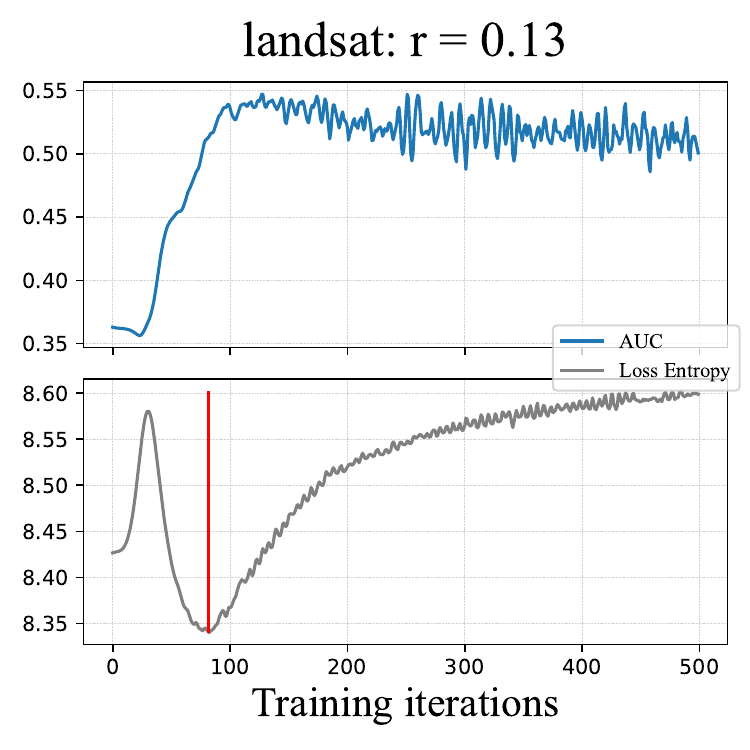}
\includegraphics[width=0.32\textwidth]{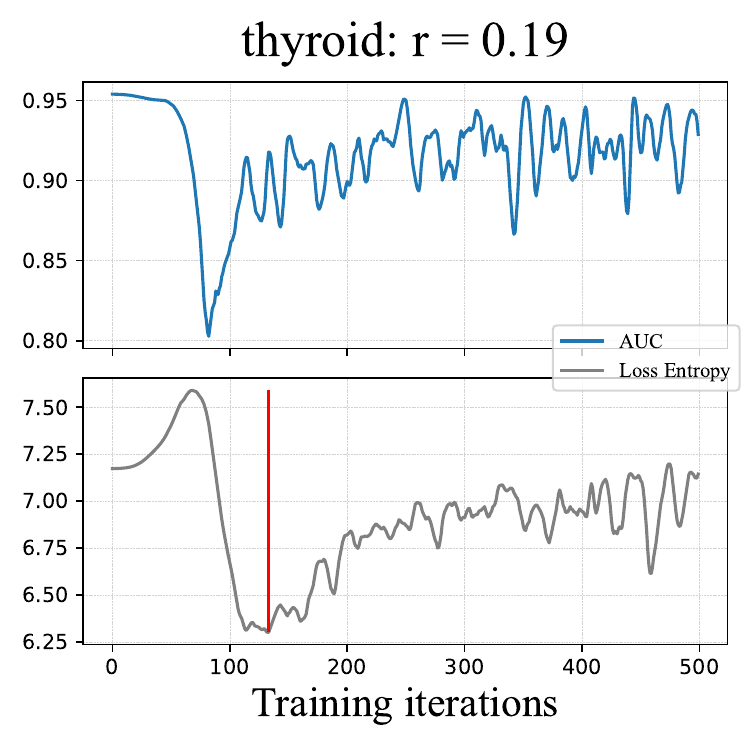}
  \caption{AE:  AUC curves vs  $H_L$ curves. The red vertical line is the epoch selected by $EntropyStop$. $r$ denotes the Pearson correlation coefficient between AUC and $H_L$.}
  \label{Fig:all-curve-3}
\end{figure*}

\begin{figure*}
  \centering
\includegraphics[width=0.32\textwidth]{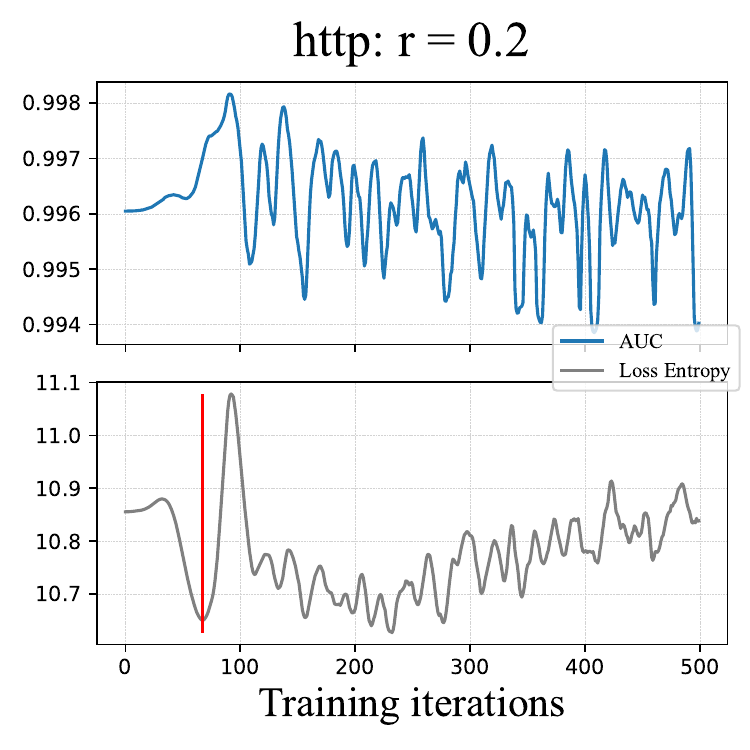}
\includegraphics[width=0.32\textwidth]{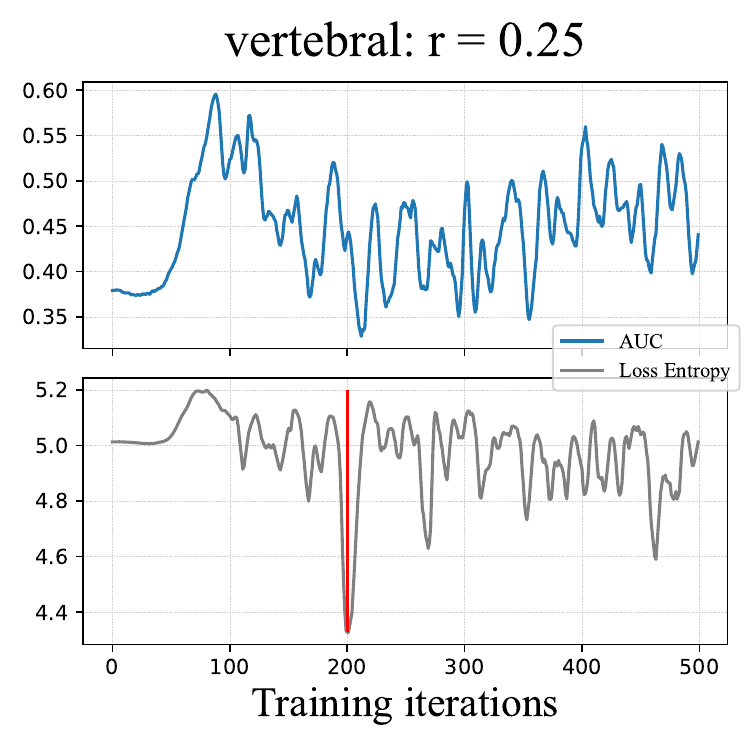}
\includegraphics[width=0.32\textwidth]{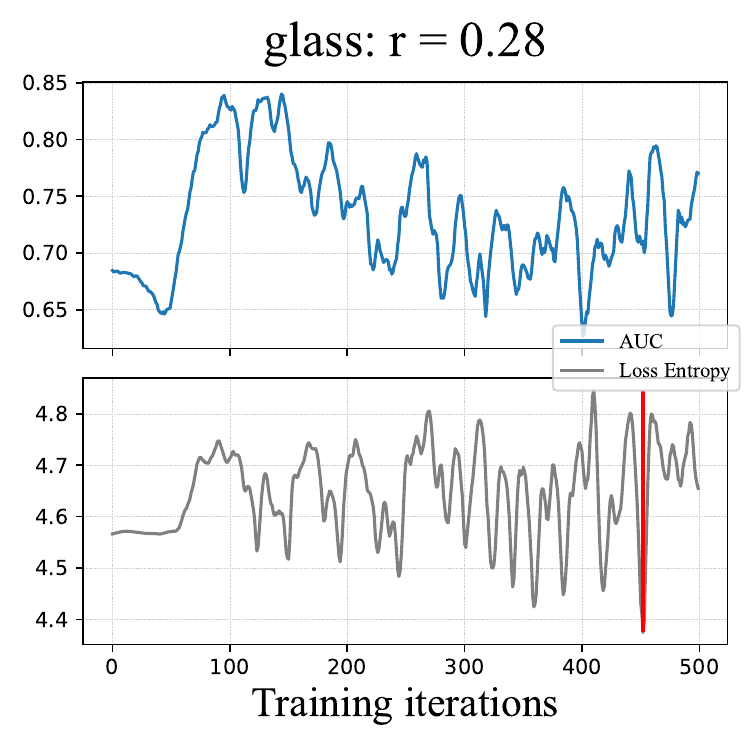}
\includegraphics[width=0.32\textwidth]{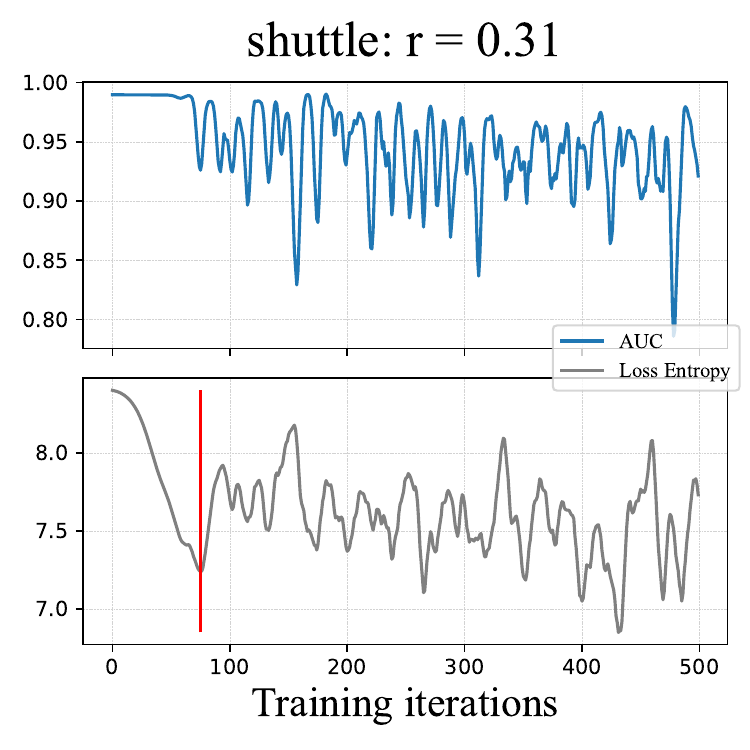}
\includegraphics[width=0.32\textwidth]{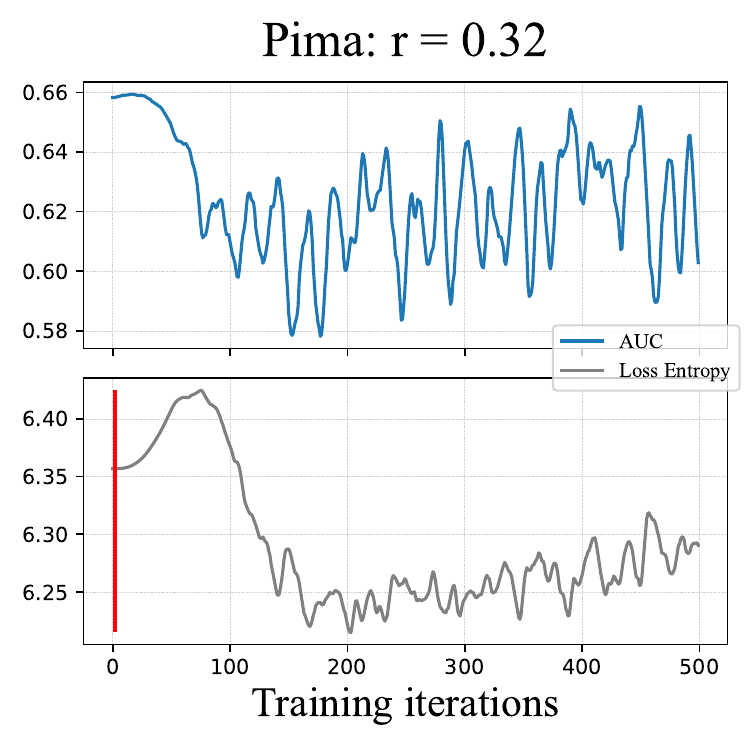}
\includegraphics[width=0.32\textwidth]{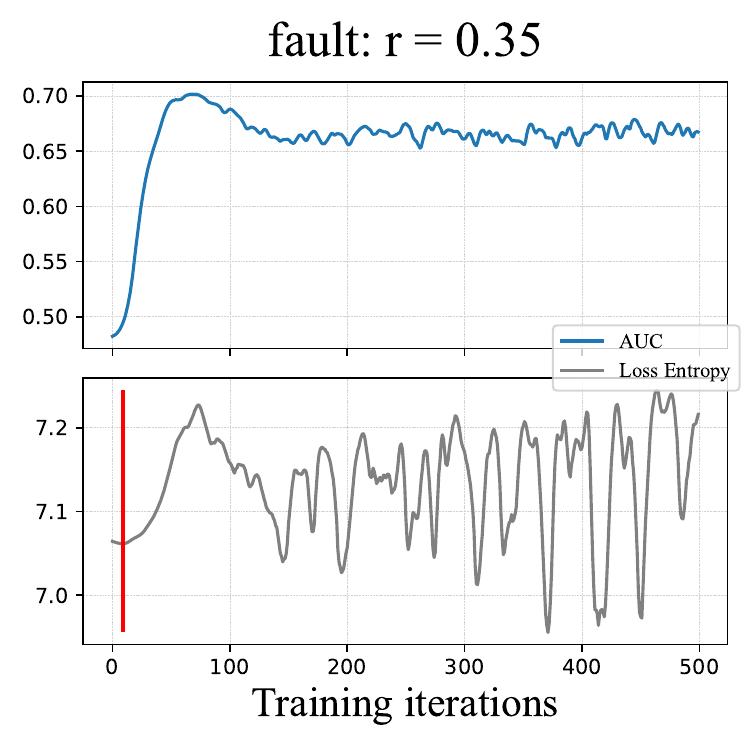}
\includegraphics[width=0.32\textwidth]{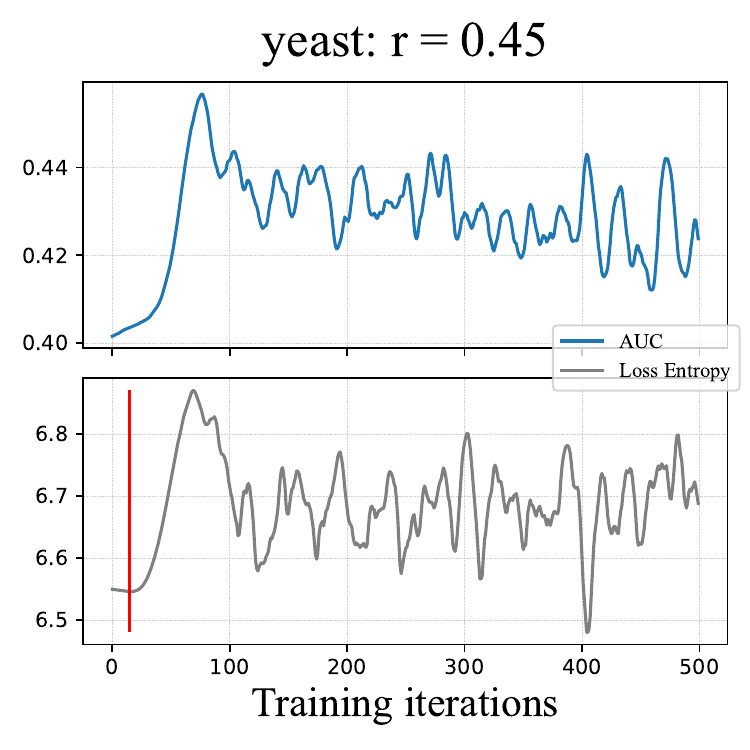}
\includegraphics[width=0.32\textwidth]{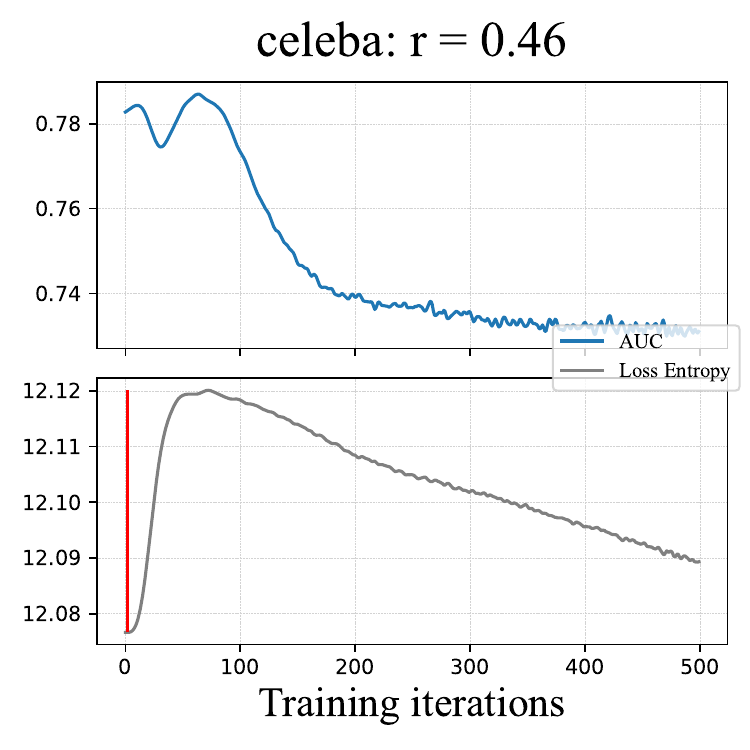}
\includegraphics[width=0.32\textwidth]{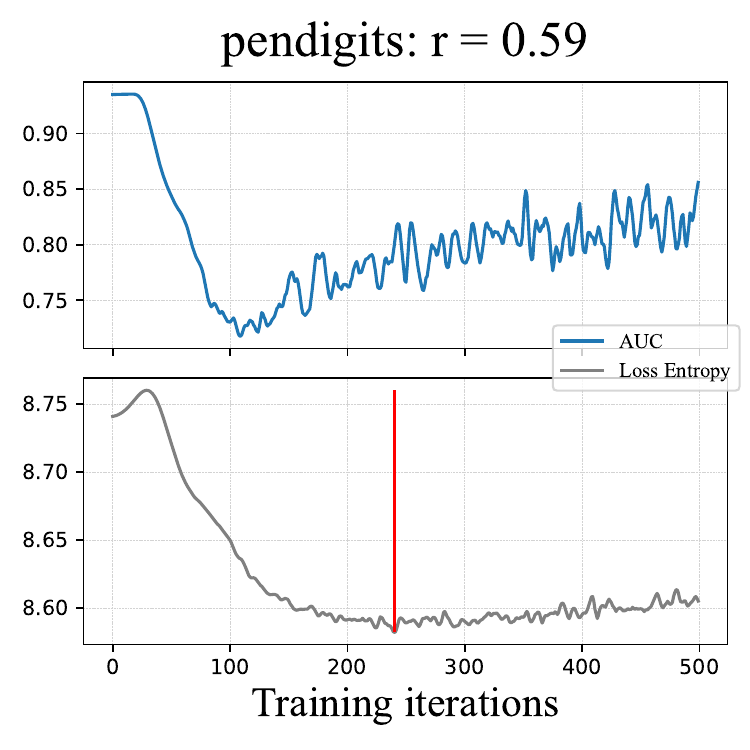}
\includegraphics[width=0.32\textwidth]{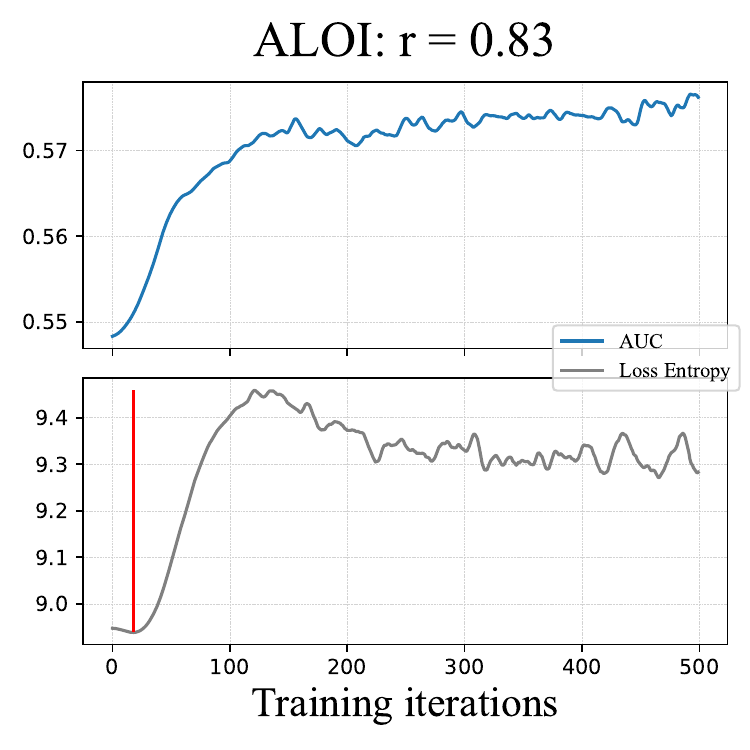}
\includegraphics[width=0.32\textwidth]{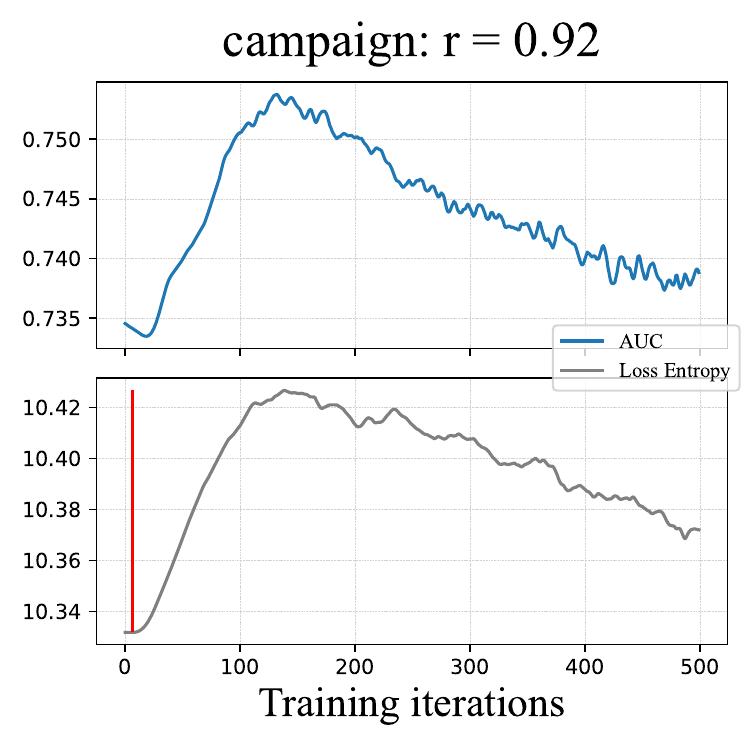}
  \caption{AE: AUC curves vs  $H_L$ curves. The red vertical line is the epoch selected by $EntropyStop$. $r$ denotes the Pearson correlation coefficient between AUC and $H_L$.}
  \label{Fig:all-curve-4}
\end{figure*}

\begin{figure*}
  \centering
\includegraphics[width=0.32\textwidth]{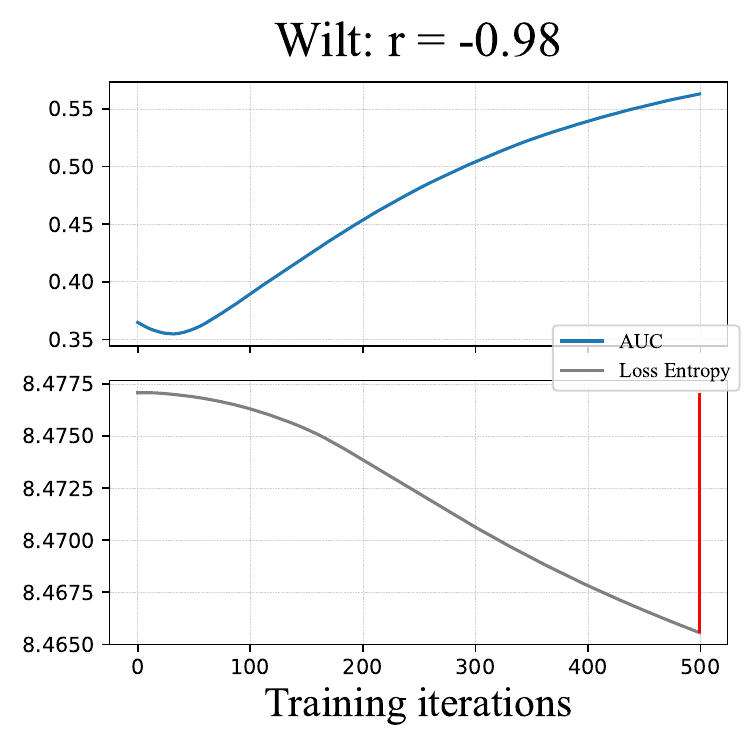}
\includegraphics[width=0.32\textwidth]{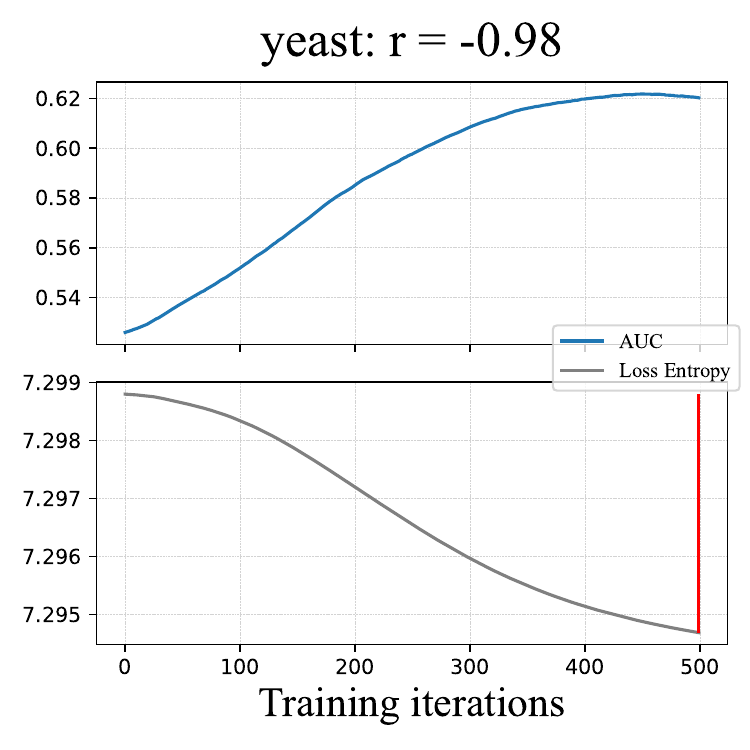}
\includegraphics[width=0.32\textwidth]{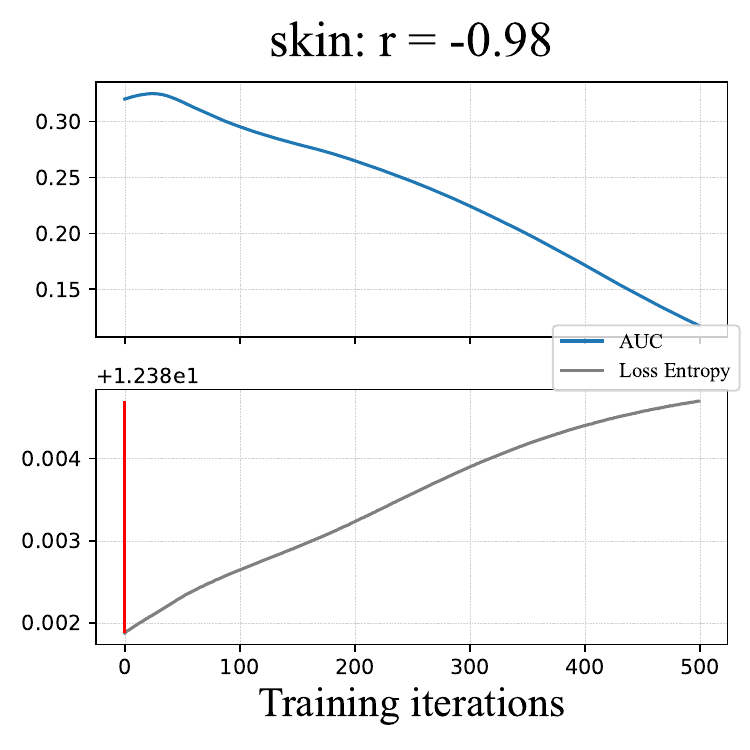}
\includegraphics[width=0.32\textwidth]{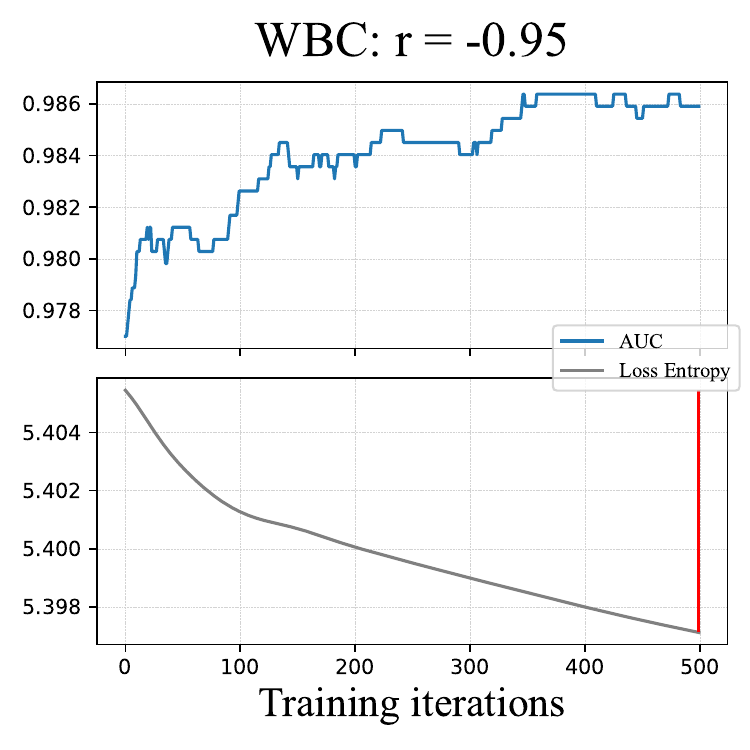}
\includegraphics[width=0.32\textwidth]{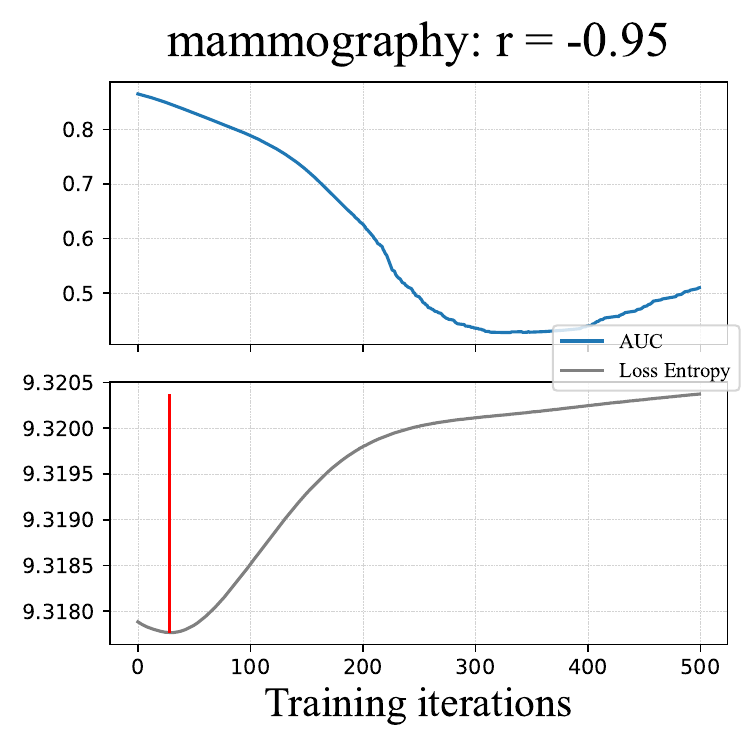}
\includegraphics[width=0.32\textwidth]{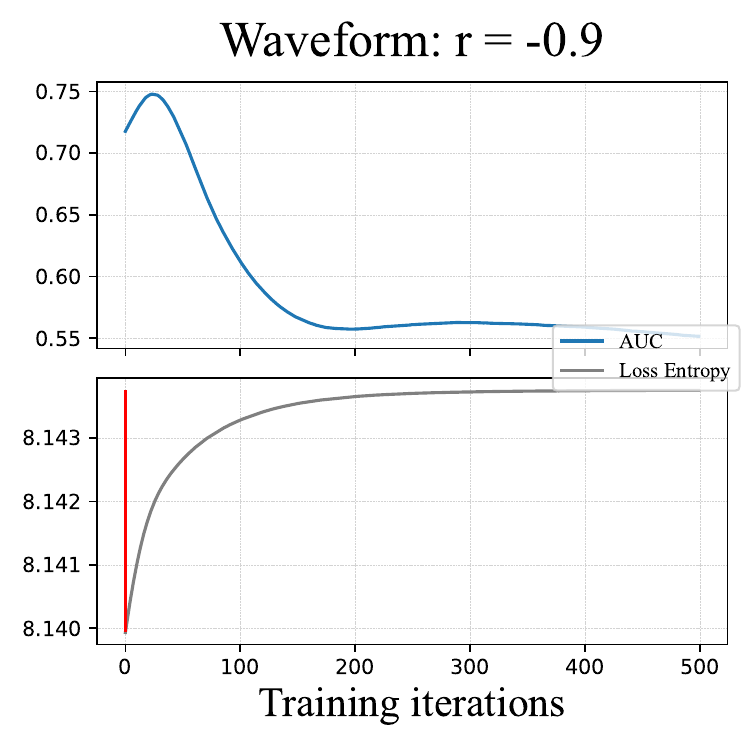}
\includegraphics[width=0.32\textwidth]{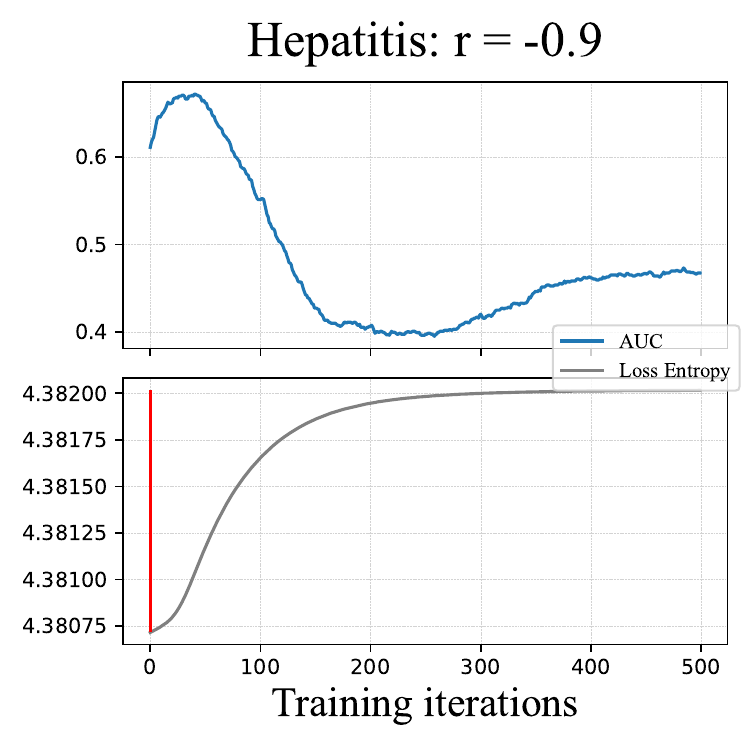}
\includegraphics[width=0.32\textwidth]{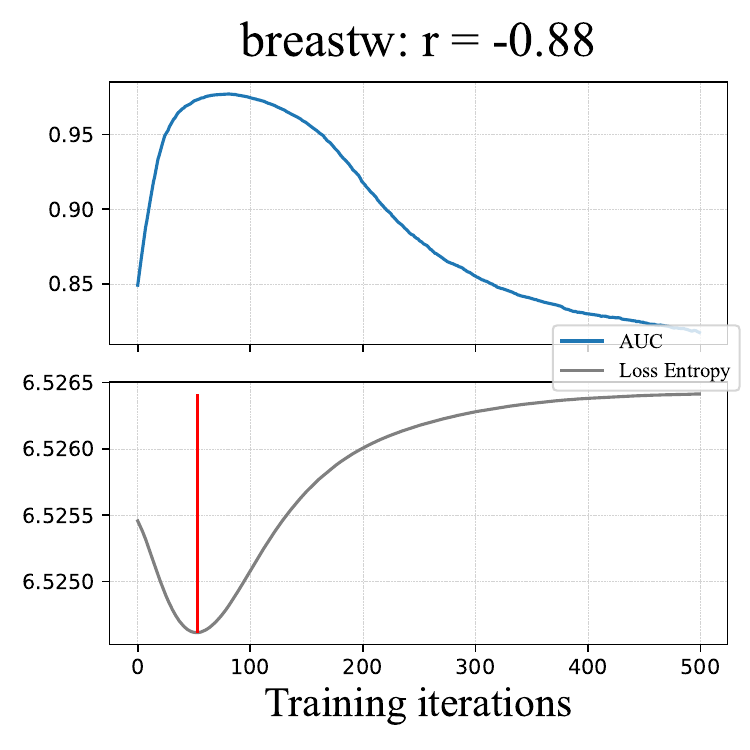}
\includegraphics[width=0.32\textwidth]{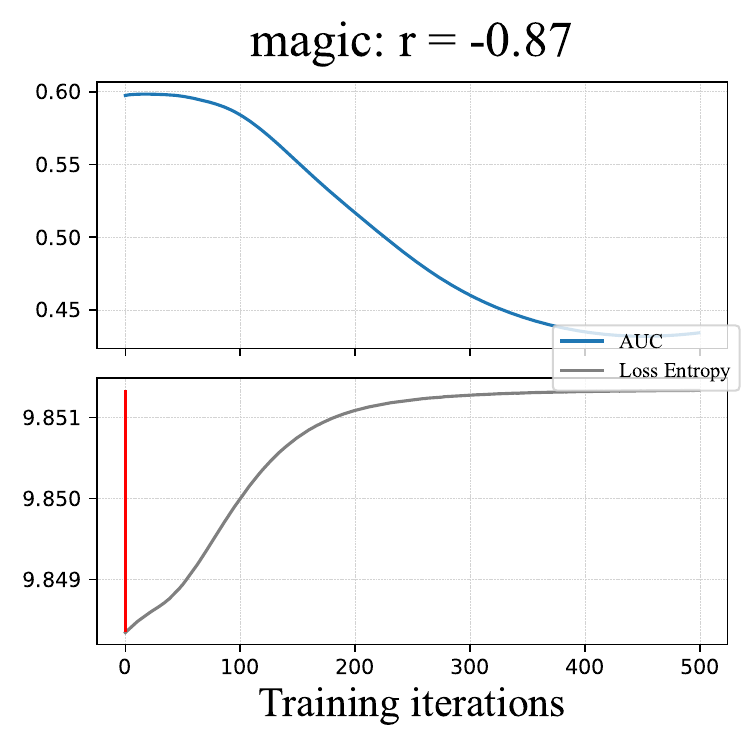}
\includegraphics[width=0.32\textwidth]{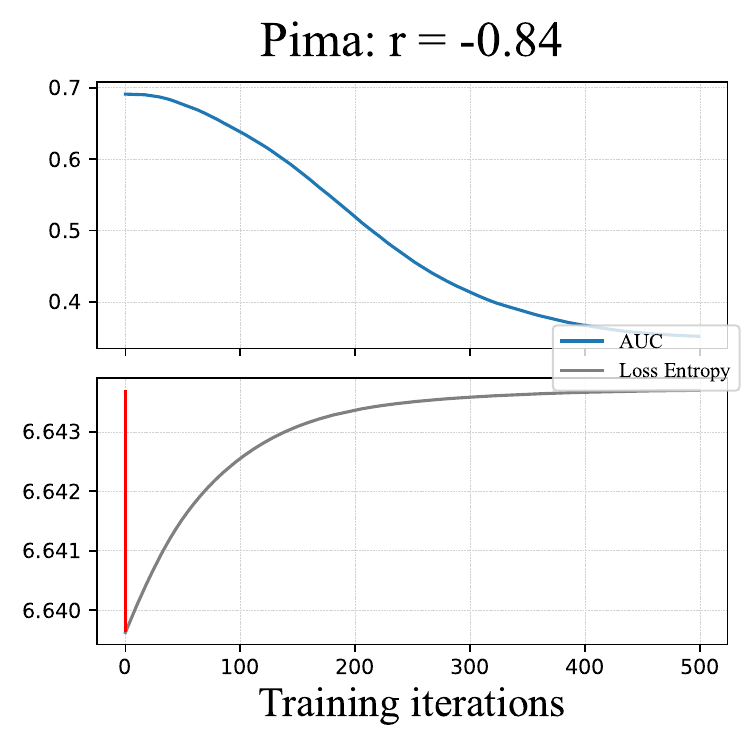}
\includegraphics[width=0.32\textwidth]{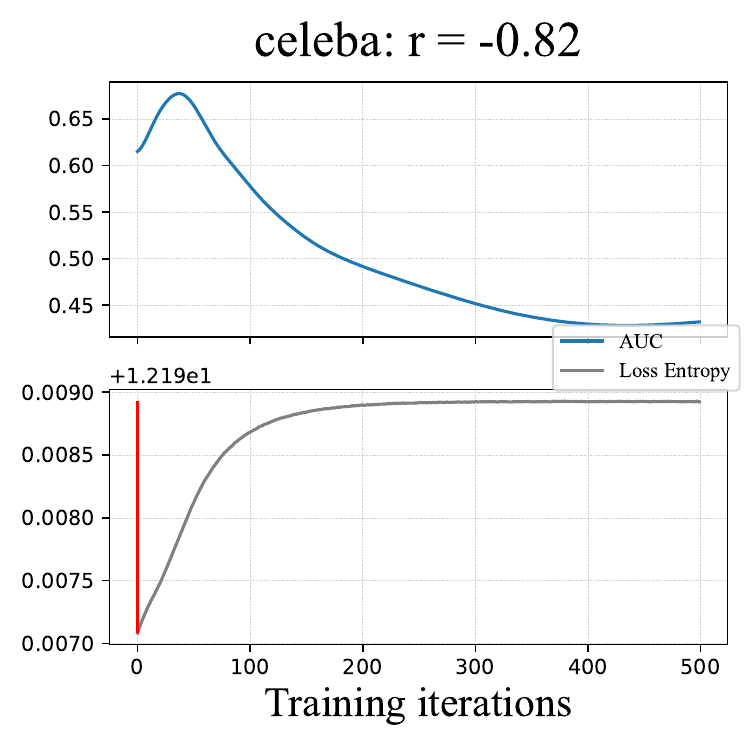}
\includegraphics[width=0.32\textwidth]{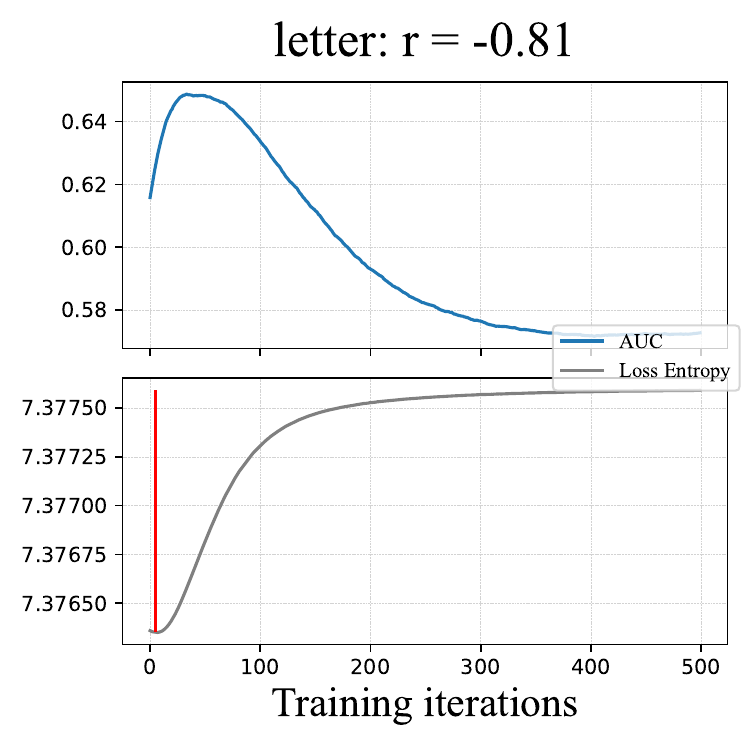}
  \caption{DeepSVDD: AUC curves vs  $H_L$ curves. The red vertical line is the epoch selected by $EntropyStop$. $r$ denotes the Pearson correlation coefficient between AUC and $H_L$.}
  \label{Fig:svdd-all-curve-1}
\end{figure*}

\begin{figure*}
  \centering
\includegraphics[width=0.32\textwidth]{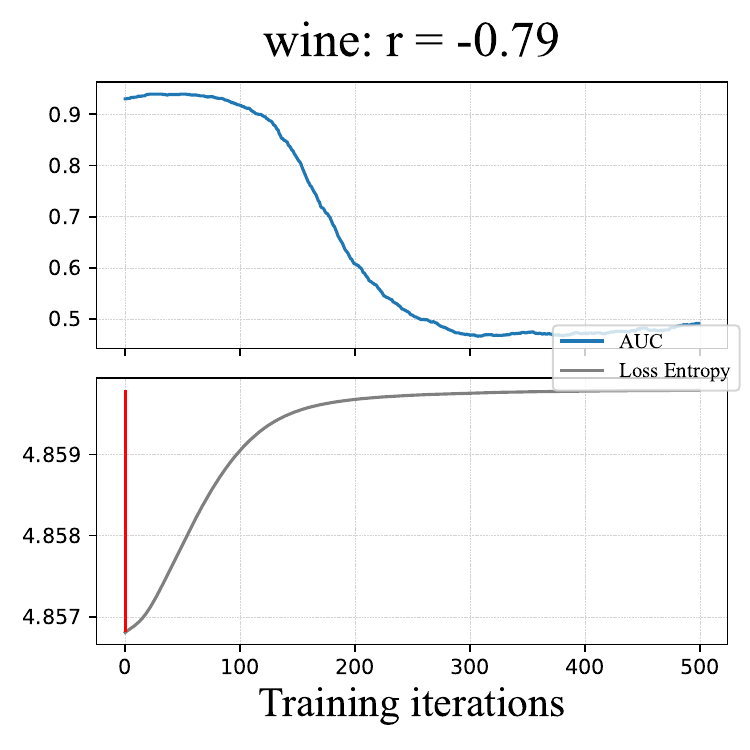}
\includegraphics[width=0.32\textwidth]{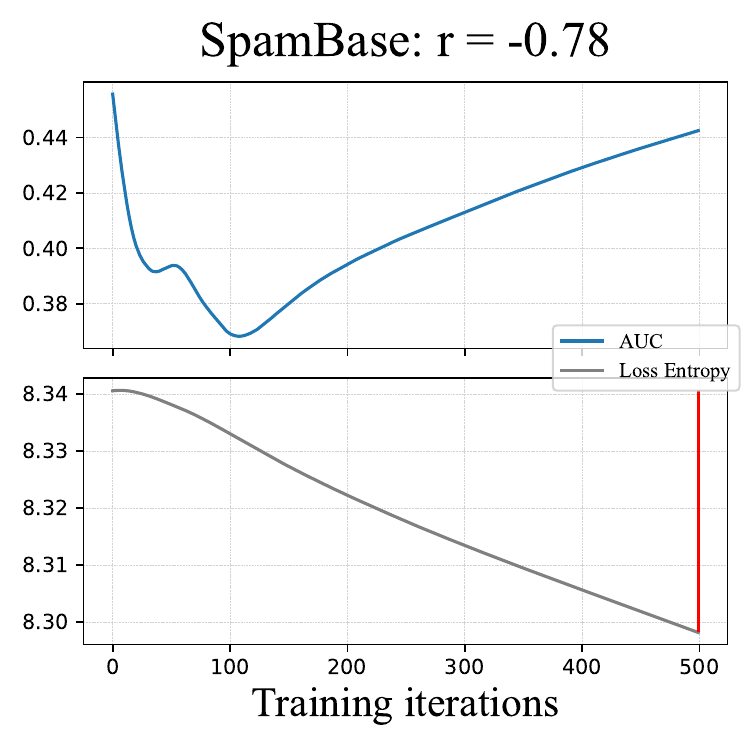}
\includegraphics[width=0.32\textwidth]{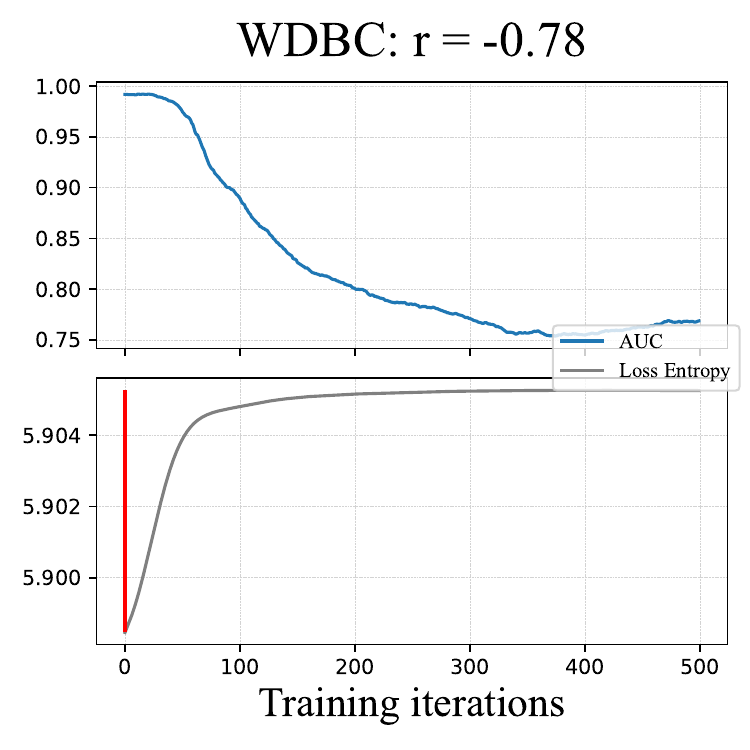}
\includegraphics[width=0.32\textwidth]{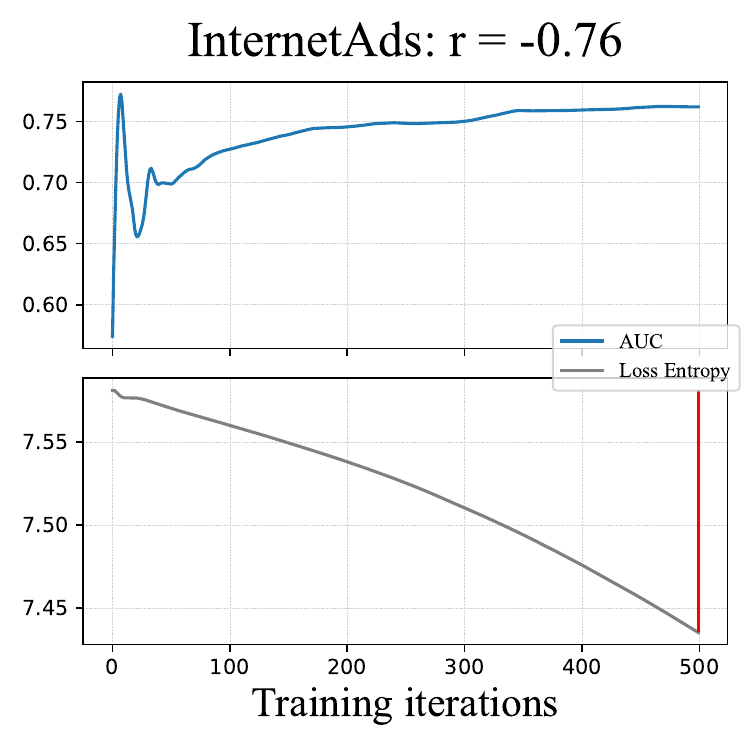}
\includegraphics[width=0.32\textwidth]{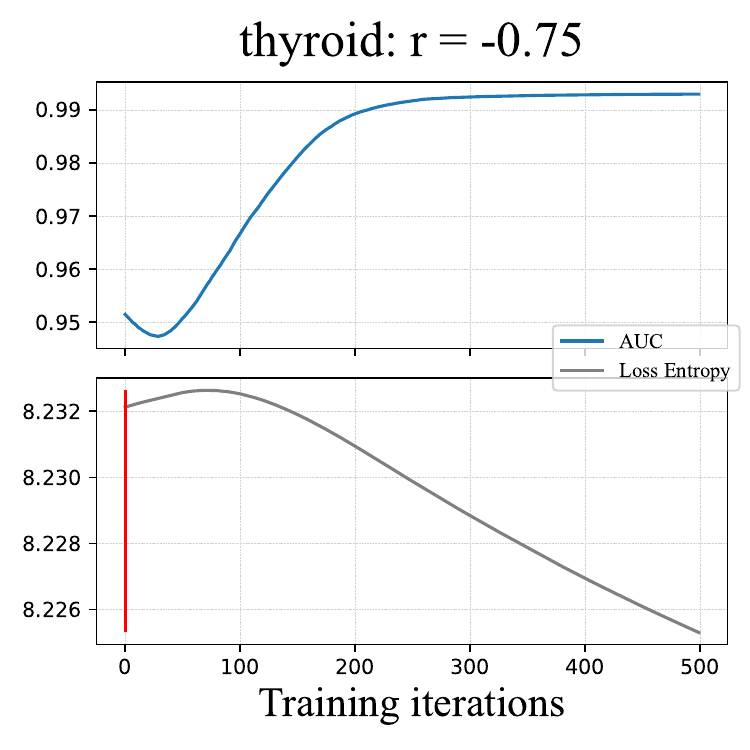}
\includegraphics[width=0.32\textwidth]{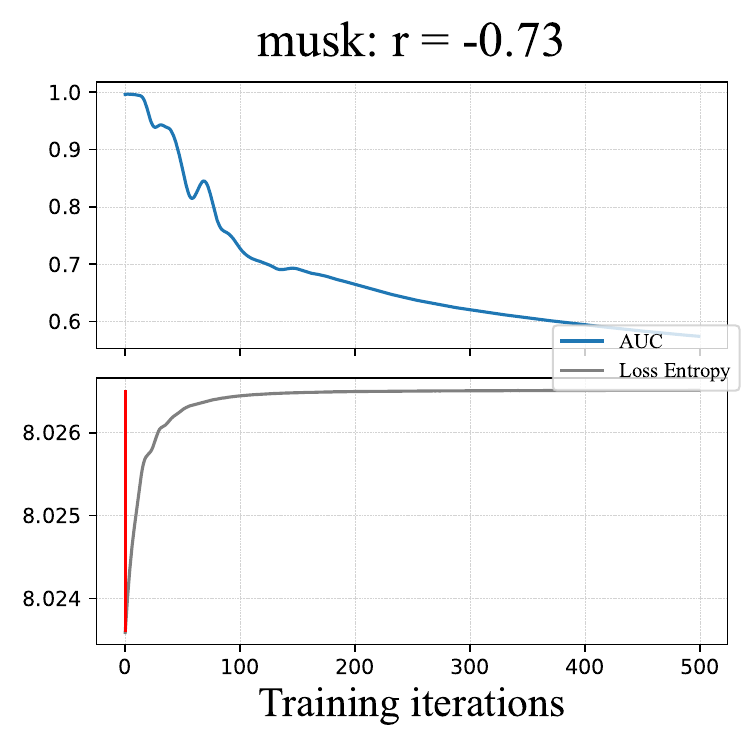}
\includegraphics[width=0.32\textwidth]{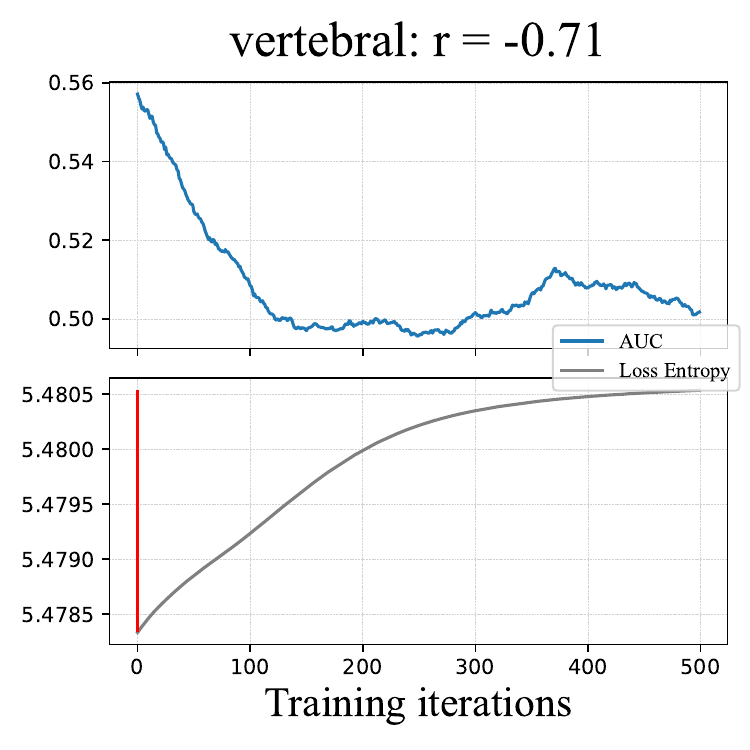}
\includegraphics[width=0.32\textwidth]{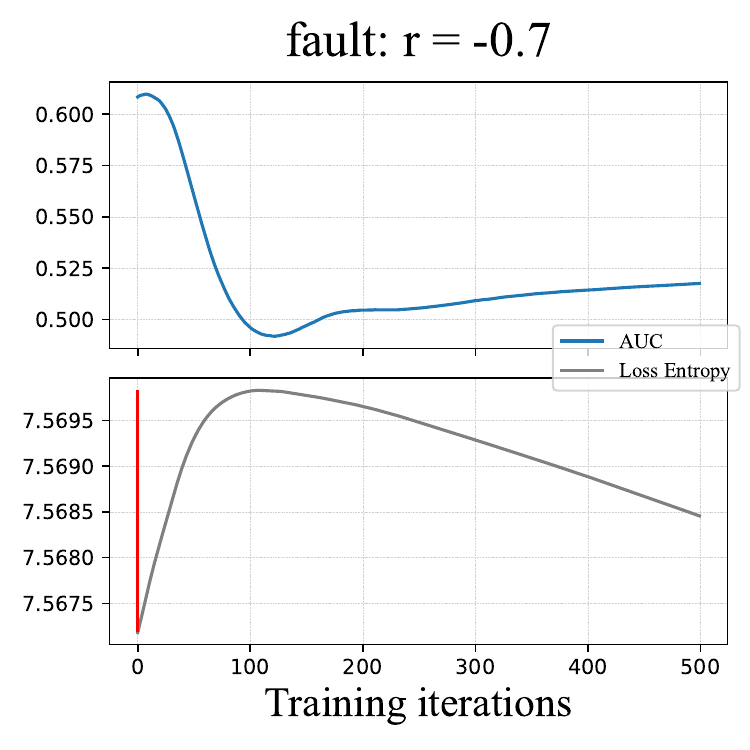}
\includegraphics[width=0.32\textwidth]{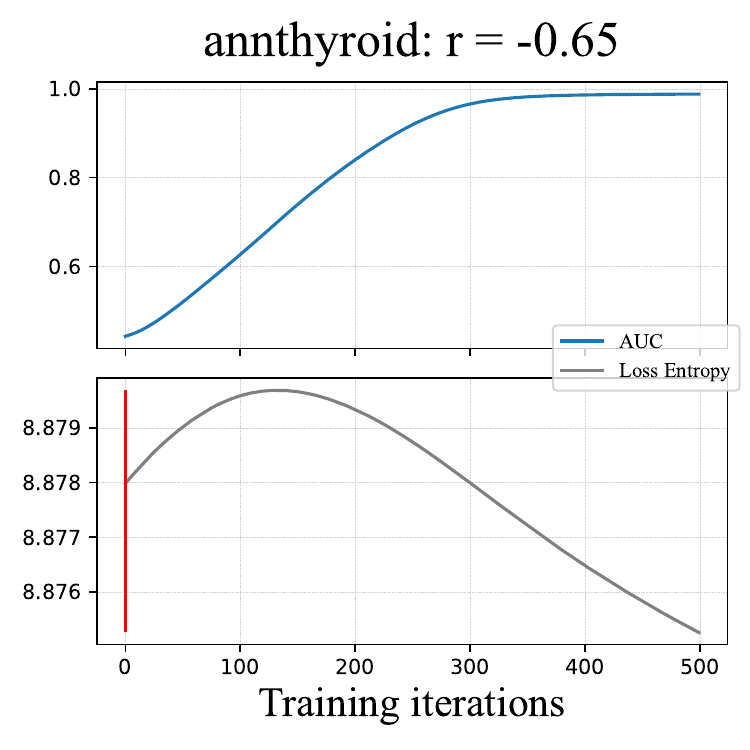}
\includegraphics[width=0.32\textwidth]{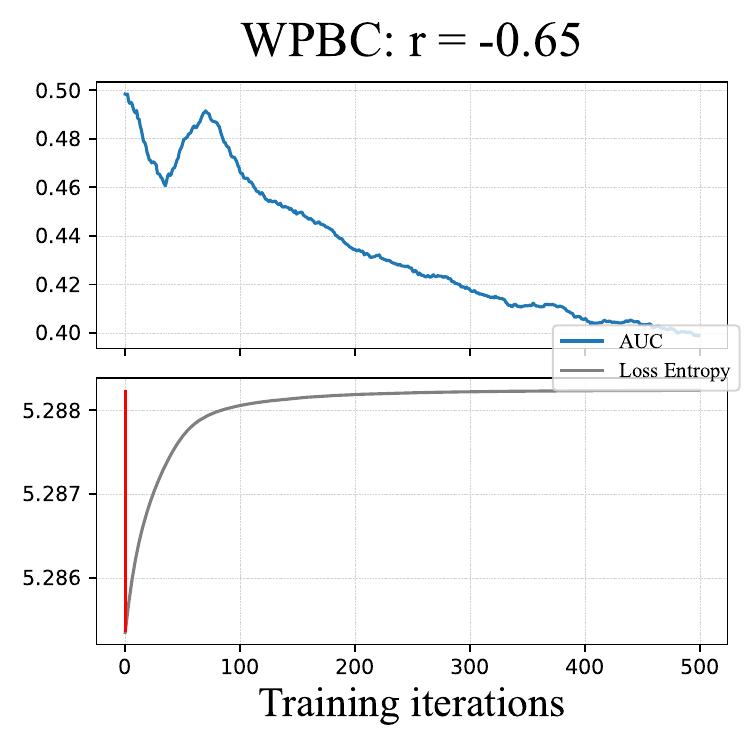}
\includegraphics[width=0.32\textwidth]{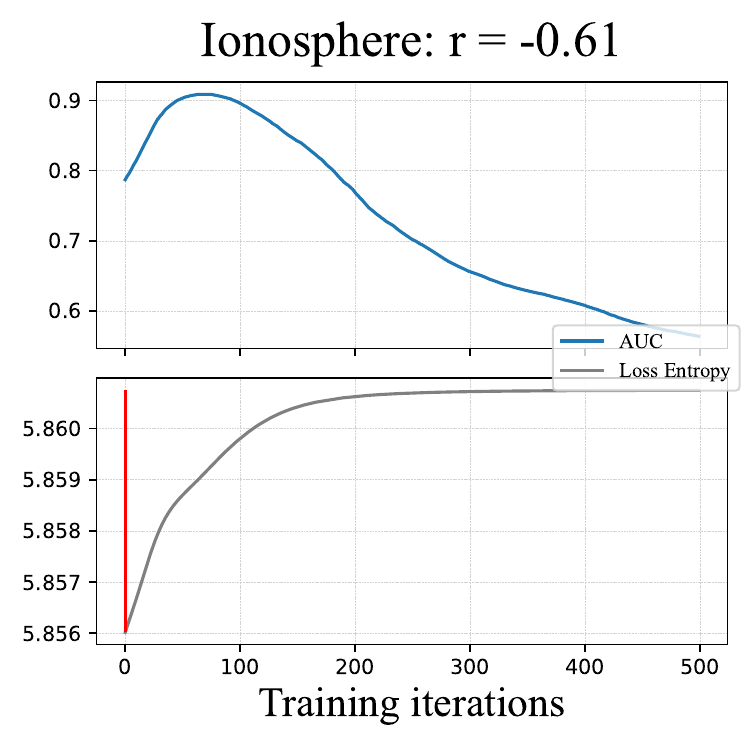}
\includegraphics[width=0.32\textwidth]{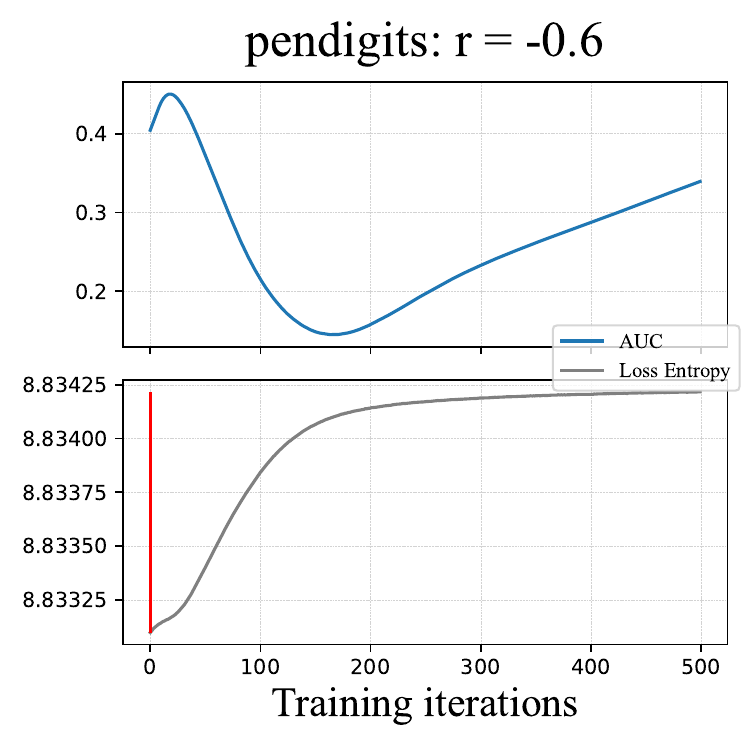}
  \caption{DeepSVDD: AUC curves vs  $H_L$ curves. The red vertical line is the epoch selected by $EntropyStop$. $r$ denotes the Pearson correlation coefficient between AUC and $H_L$.}
  \label{Fig:svdd-all-curve-2}
\end{figure*}

\begin{figure*}
  \centering
\includegraphics[width=0.32\textwidth]{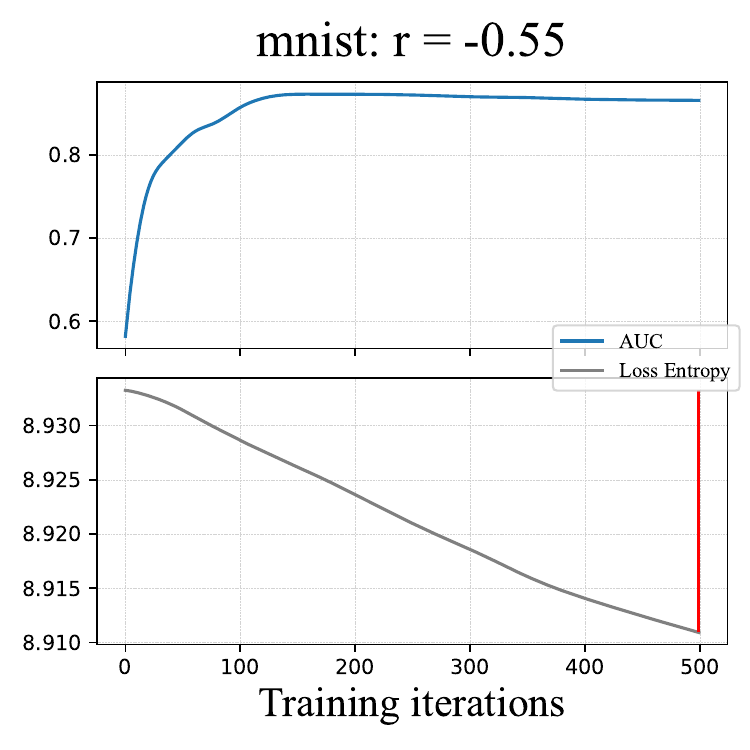}
\includegraphics[width=0.32\textwidth]{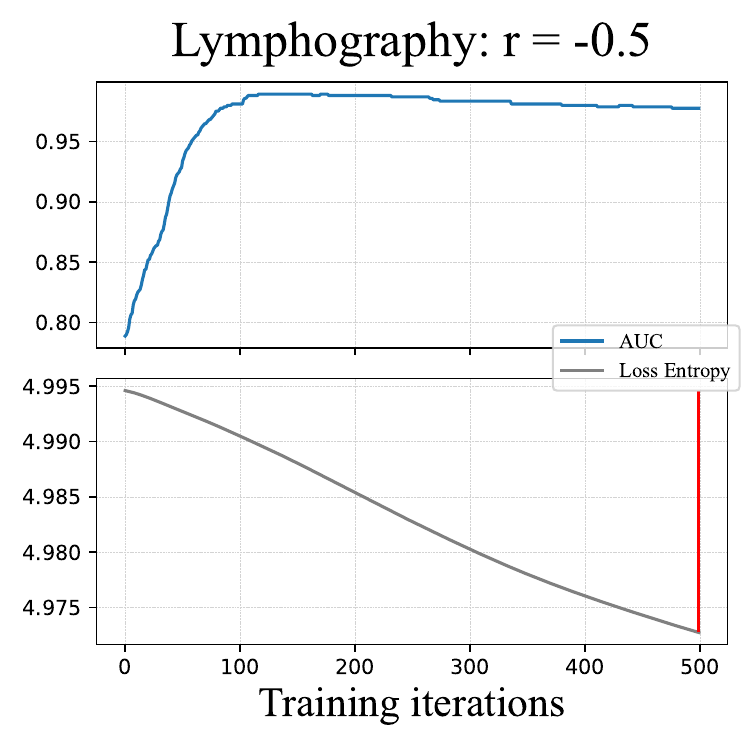}
\includegraphics[width=0.32\textwidth]{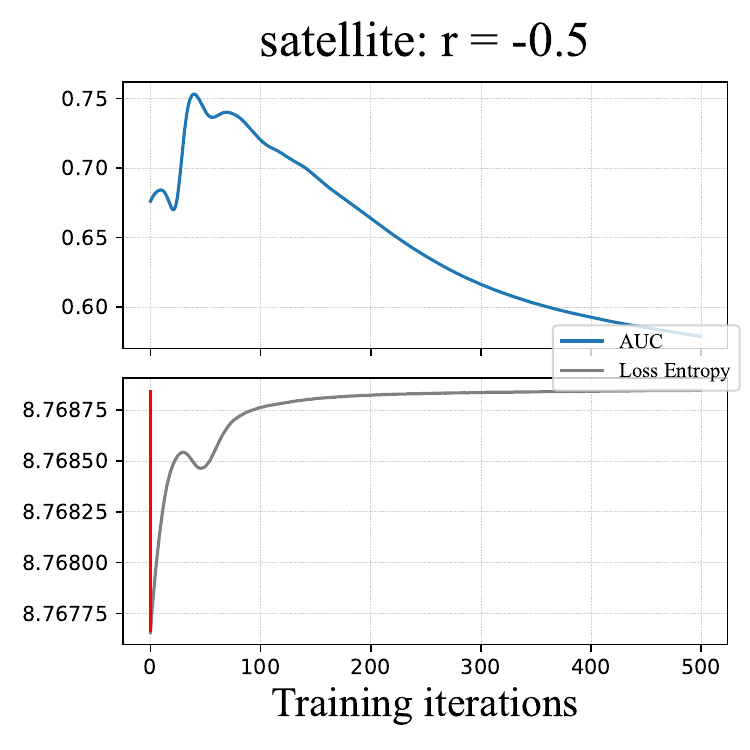}
\includegraphics[width=0.32\textwidth]{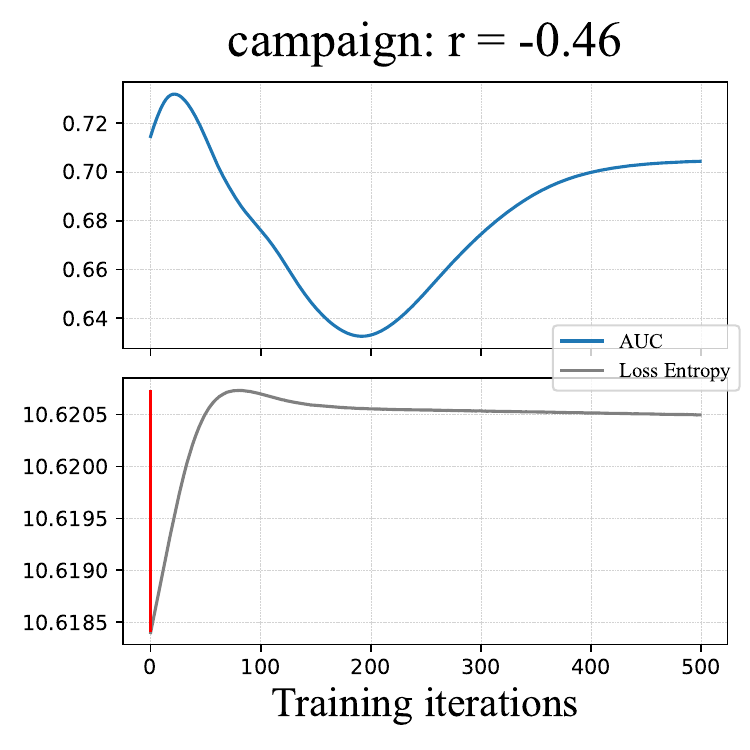}
\includegraphics[width=0.32\textwidth]{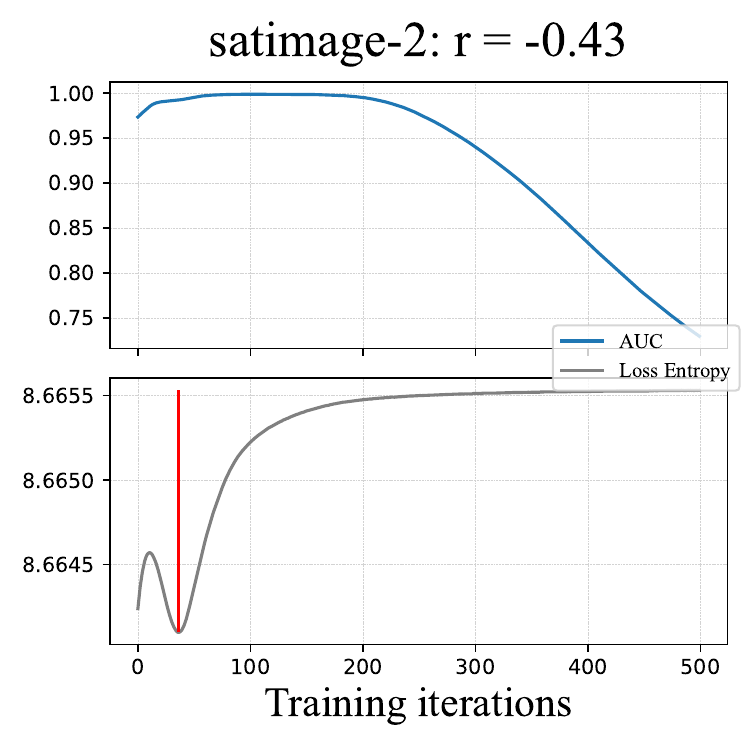}
\includegraphics[width=0.32\textwidth]{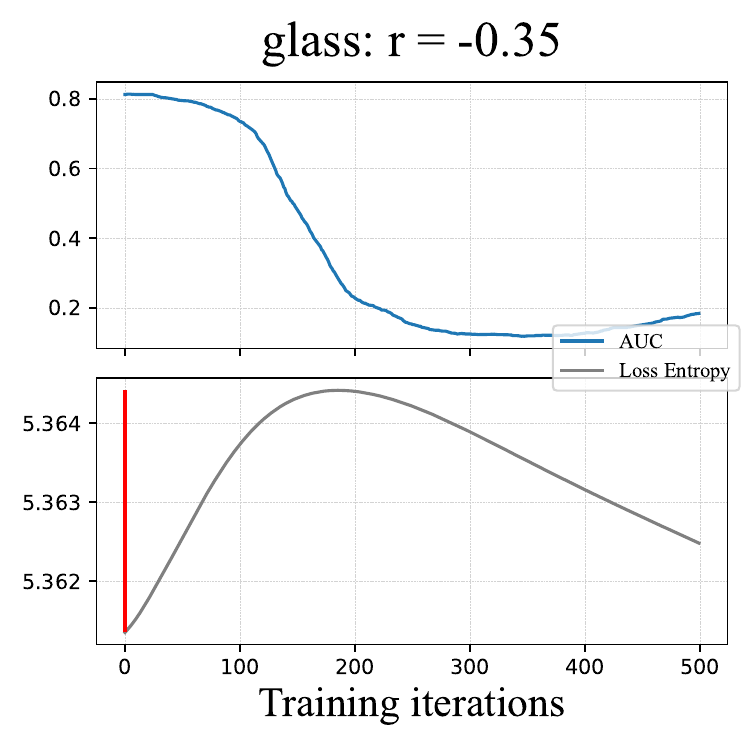}
\includegraphics[width=0.32\textwidth]{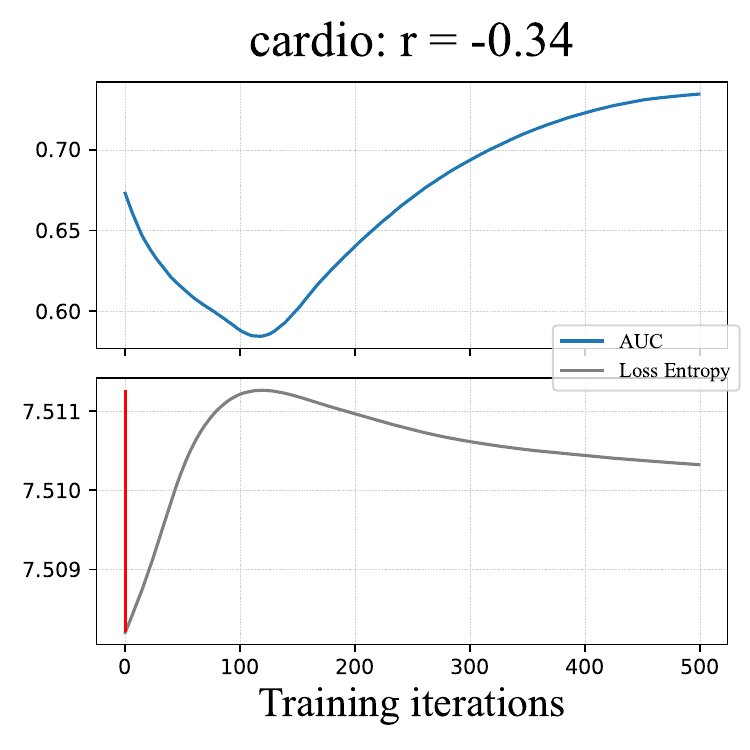}
\includegraphics[width=0.32\textwidth]{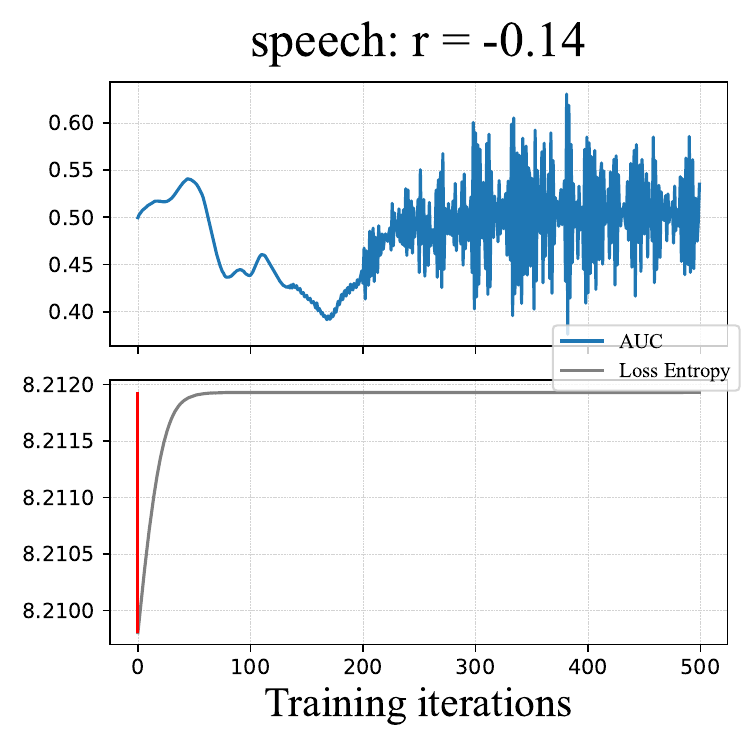}
\includegraphics[width=0.32\textwidth]{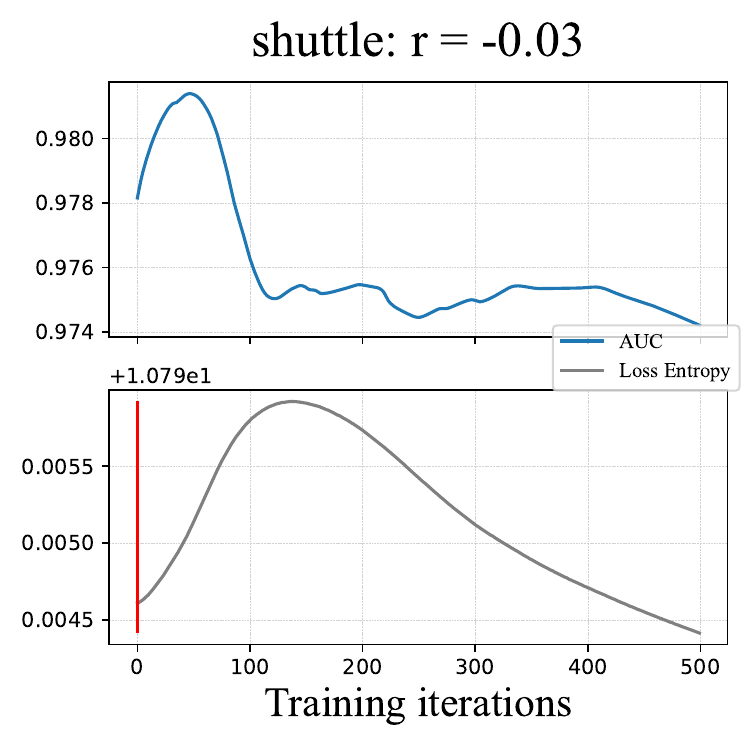}
\includegraphics[width=0.32\textwidth]{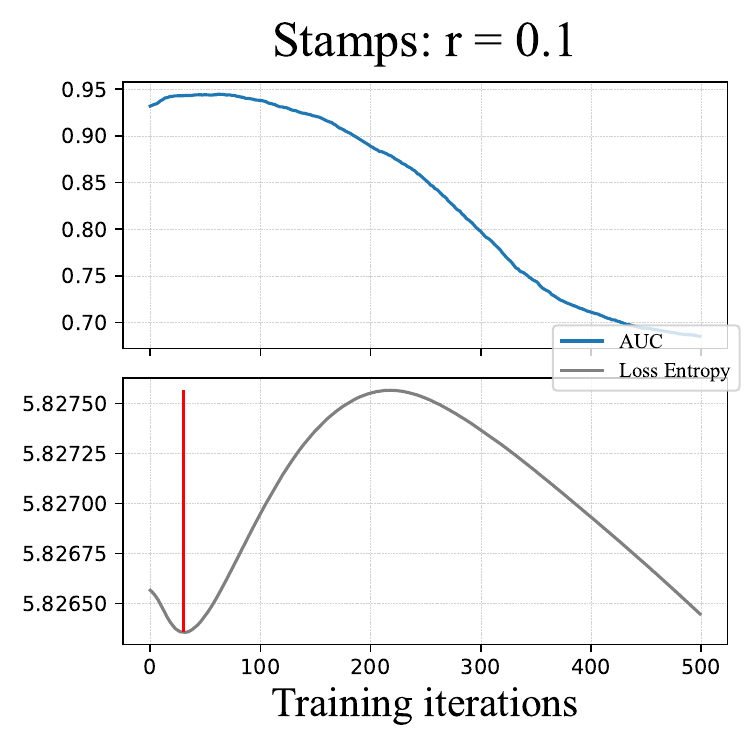}
\includegraphics[width=0.32\textwidth]{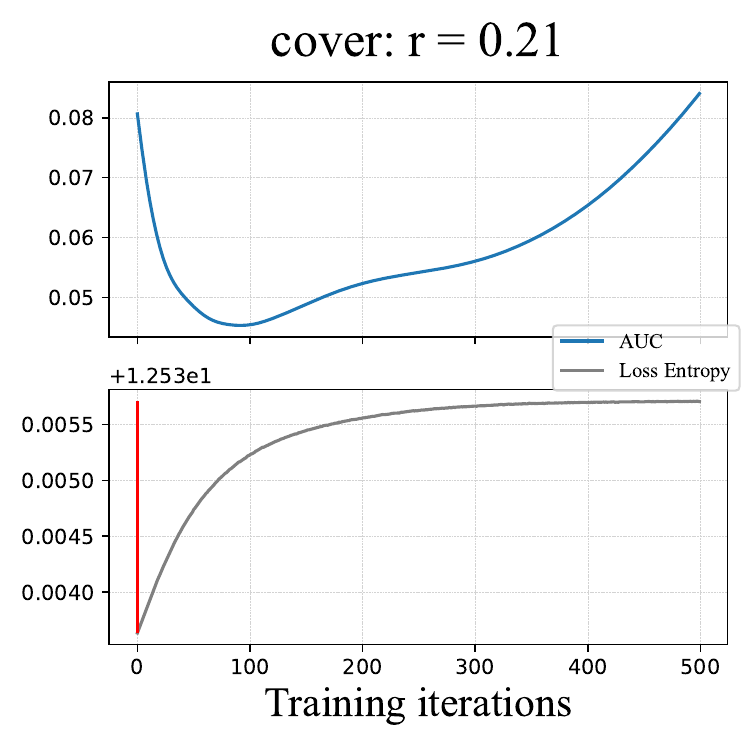}
\includegraphics[width=0.32\textwidth]{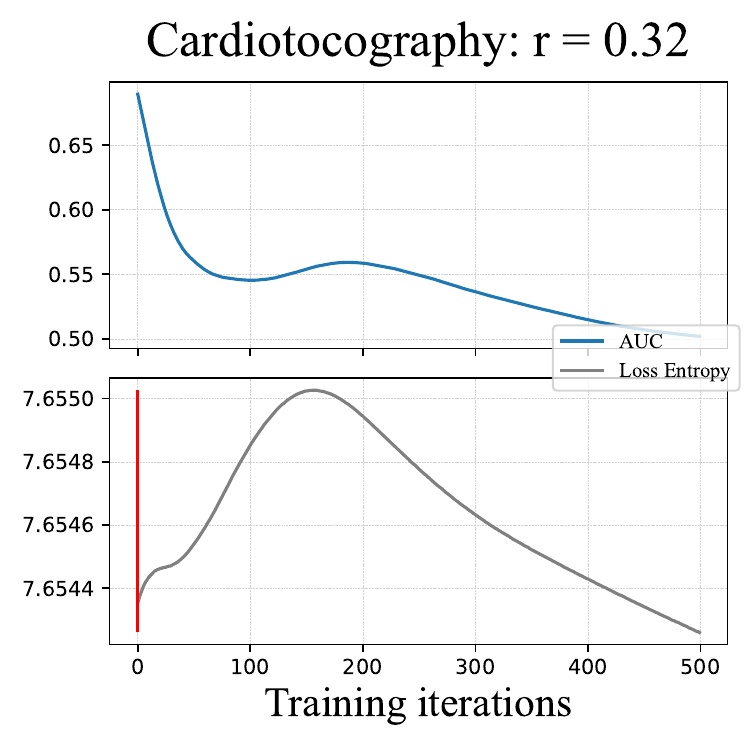}
  \caption{DeepSVDD: AUC curves vs  $H_L$ curves. The red vertical line is the epoch selected by $EntropyStop$. $r$ denotes the Pearson correlation coefficient between AUC and $H_L$.}
  \label{Fig:svdd-all-curve-3}
\end{figure*}

\begin{figure*}
  \centering
\includegraphics[width=0.32\textwidth]{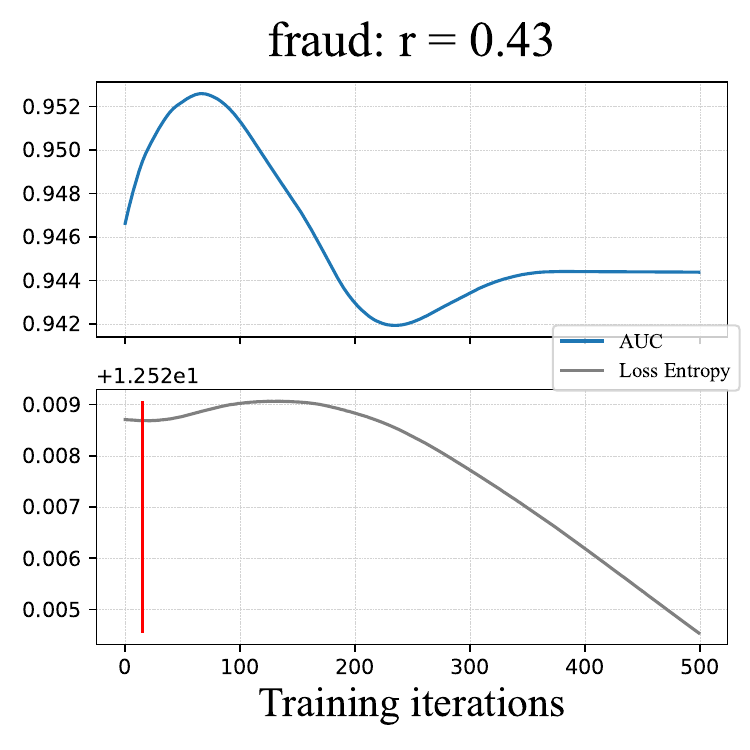}
\includegraphics[width=0.32\textwidth]{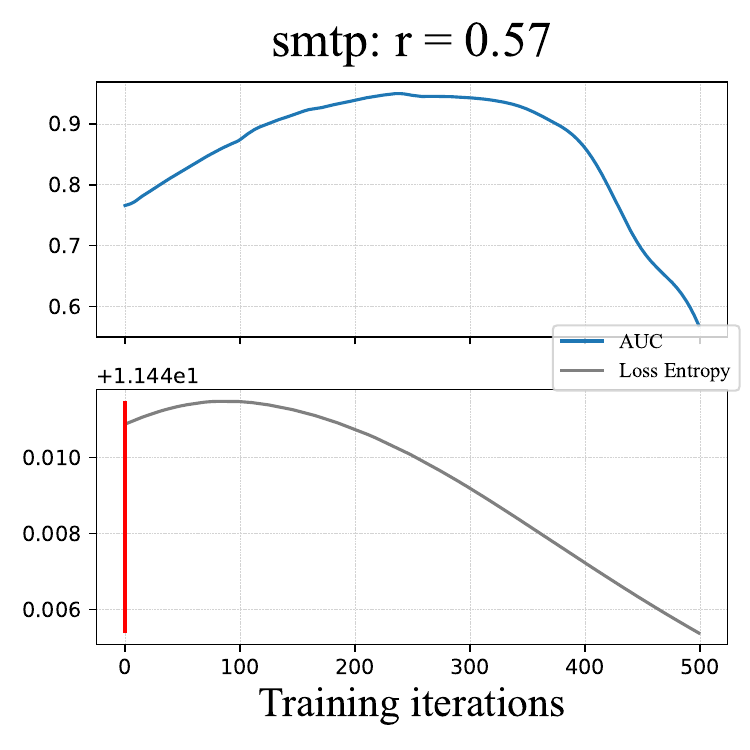}
\includegraphics[width=0.32\textwidth]{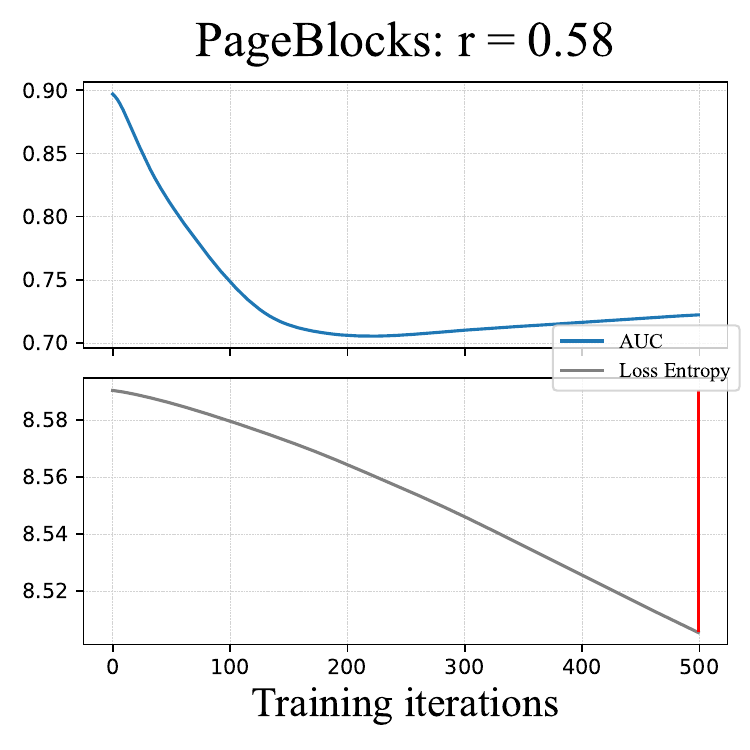}
\includegraphics[width=0.32\textwidth]{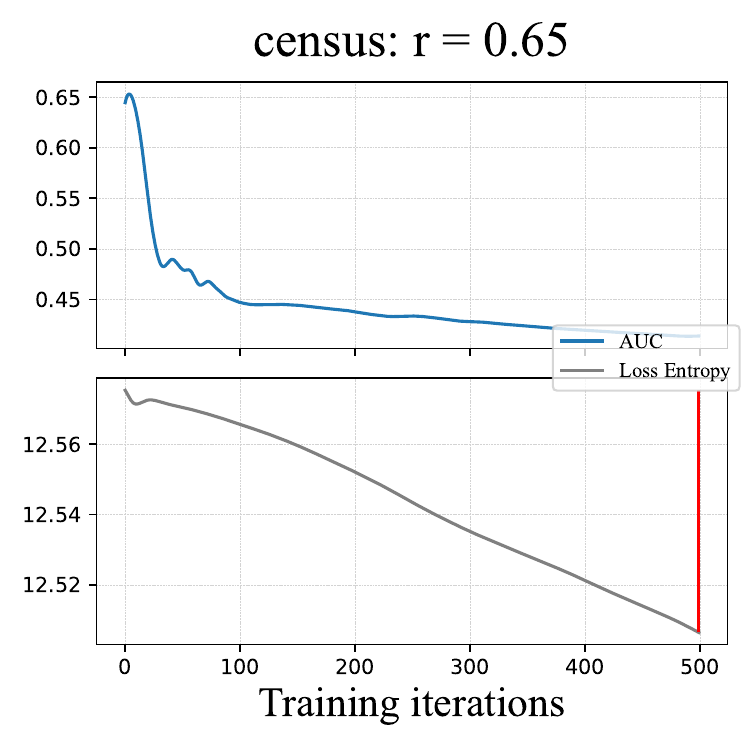}
\includegraphics[width=0.32\textwidth]{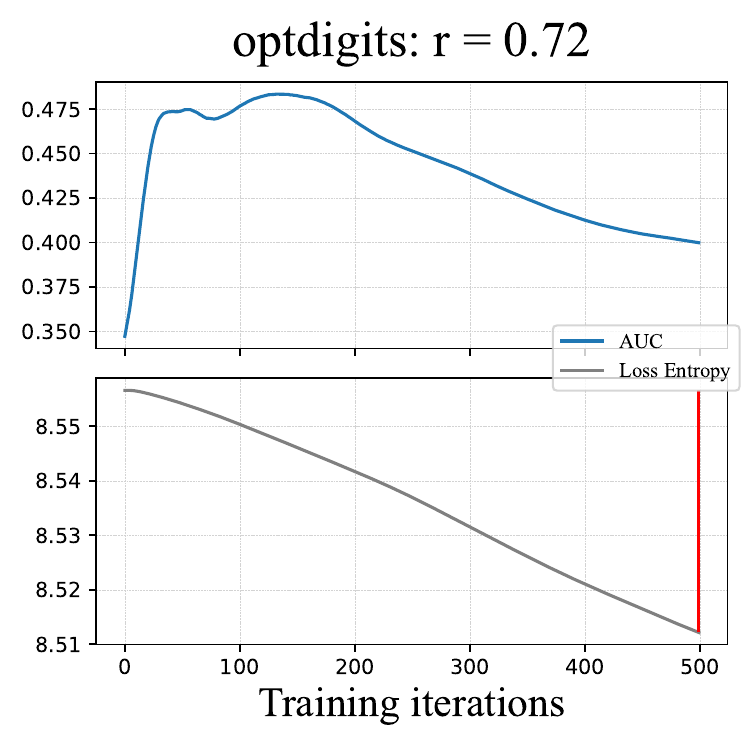}
\includegraphics[width=0.32\textwidth]{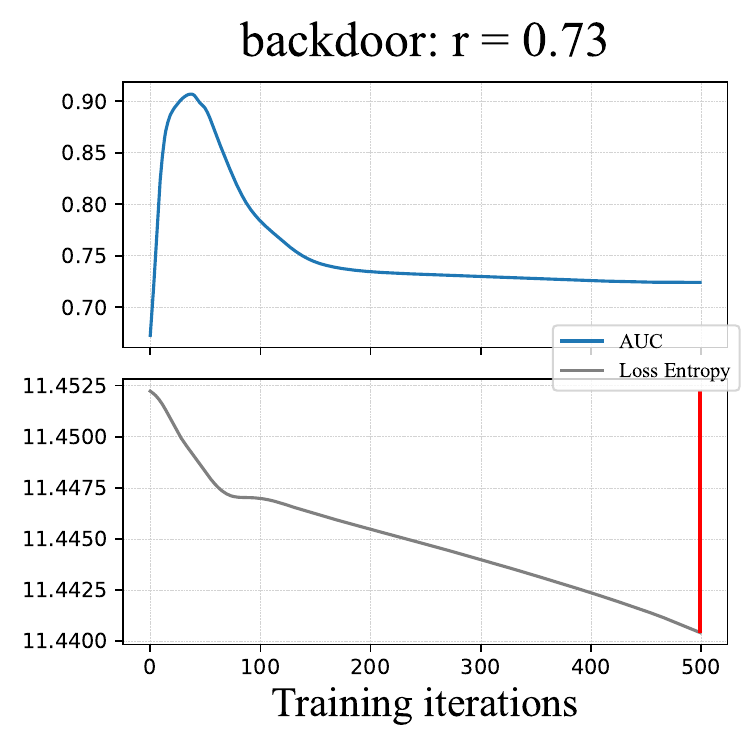}
\includegraphics[width=0.32\textwidth]{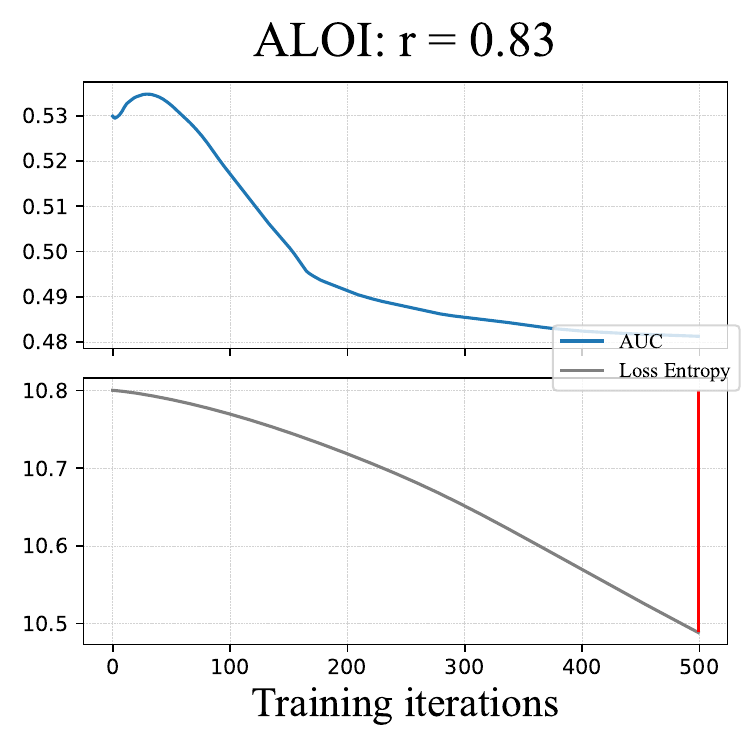}
\includegraphics[width=0.32\textwidth]{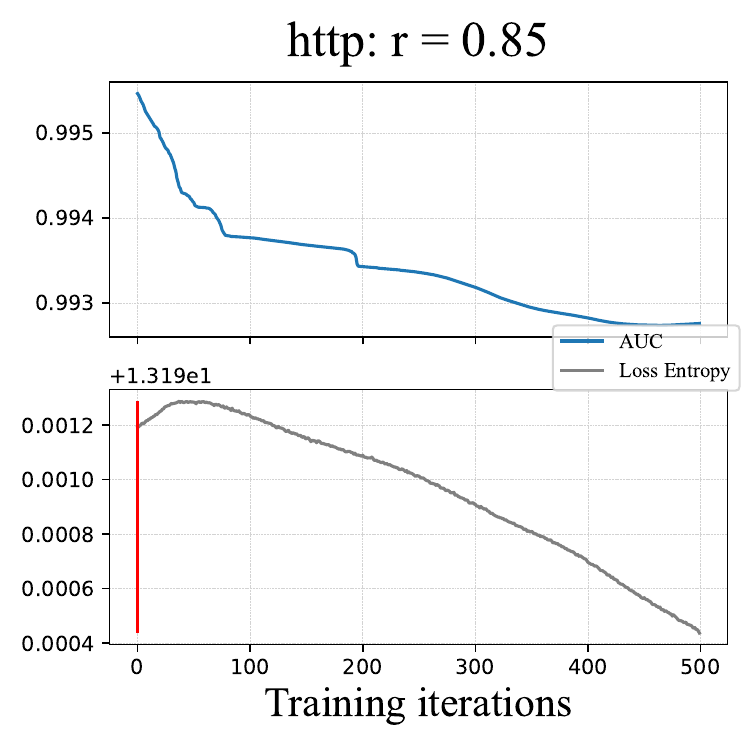}
\includegraphics[width=0.32\textwidth]{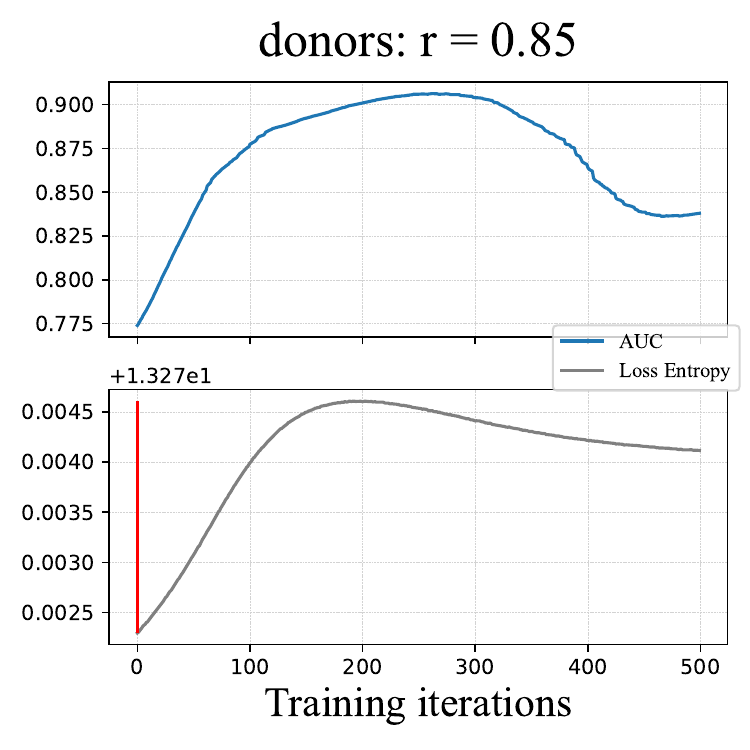}
\includegraphics[width=0.32\textwidth]{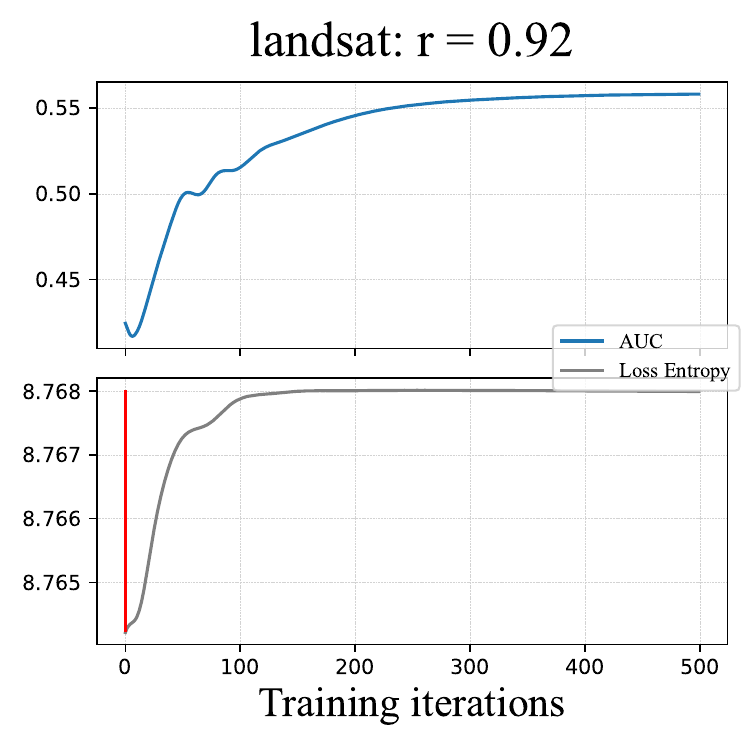}
\includegraphics[width=0.32\textwidth]{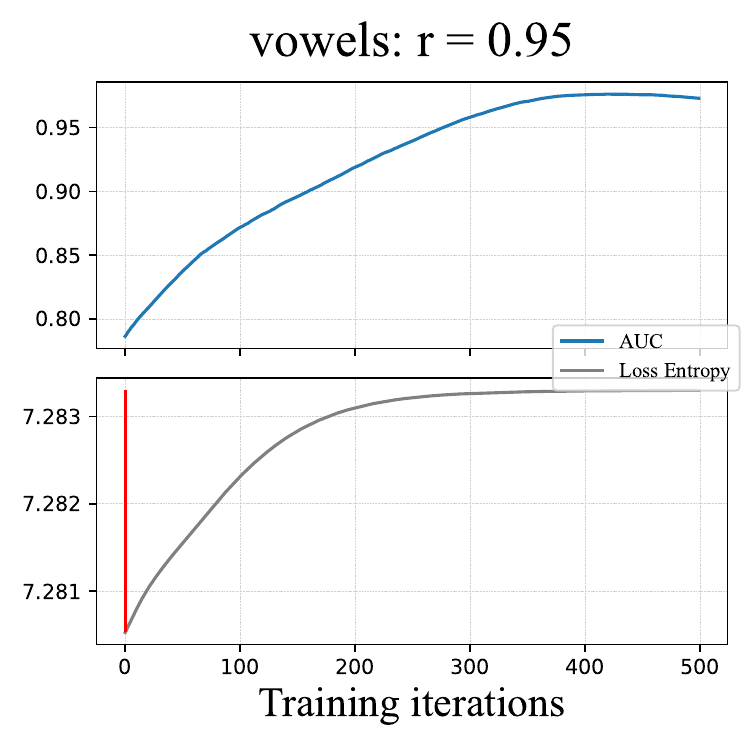}
  \caption{DeepSVDD: AUC curves vs  $H_L$ curves. The red vertical line is the epoch selected by $EntropyStop$. $r$ denotes the Pearson correlation coefficient between AUC and $H_L$.}
  \label{Fig:svdd-all-curve-4}
\end{figure*}

\clearpage
\section{Limitation analysis and case study}

\begin{table}[b]
\small
  \centering
  \caption{The limitation study.}
    \begin{tabular}{l|ccc}
    \toprule
           \textbf{Dataset} & \textbf{\thead{Pearson\\coefficient}} & \textbf{\thead{Label\\Misleading}}& \textbf{\thead{AUC\\Convergence}}\\
    \midrule
 campaign	& 0.92 		& 				&\checkmark	\\
 ALOI		& 0.83 		&\checkmark 	&			\\
 pendigits	& 0.59 		&\checkmark		&			\\
 celeba		& 0.46 		&\checkmark 	&			\\
 yeast		& 0.45 		&\checkmark		&			\\
 fault		& 0.35 		&\checkmark     &			\\
 Pima		& 0.32		&\checkmark 	&			\\
 glass		& 0.28 		&\checkmark 	&			\\
 vertebral	& 0.25 		&\checkmark 	&			\\
 http		& 0.20 		& 				&\checkmark	\\
 PageBlocks	& 0.10 		&\checkmark 	&			\\
 satimage-2	& 0.04 		&\checkmark 	&			\\
 skin		& -0.12 	&\checkmark		&			\\

    \bottomrule
    \end{tabular}%
  \label{Limitation analysis}
\end{table}%

\label{pseudo inlier-study}

In the experiment described in Sec \ref{sec:correlation-exp}, we assess the negative correlation between loss entropy $H_L$ and AUC utilizing the Pearson correlation coefficient, abbreviated as $r$ for clarity.
Fig. \ref{Fig:all-curve-1}, \ref{Fig:all-curve-2}, \ref{Fig:all-curve-3}, and \ref{Fig:all-curve-4} illustrate the evolution of AUC and entropy curves of AE throughout the training period for 47 datasets, ranked by descending order of their negative correlation strength. Notably, while the loss entropy $H_L$ demonstrates a strong negative correlation with AUC across several datasets, there are still some datasets that exhibit weak or even positive correlations, such as ALOI. We attribute this primarily to the following two reasons. We categorized the datasets in Table \ref{Limitation analysis} on which entropy stop does not perform well.
\begin{itemize}
    \item \textbf{Label misleading}: The existence of a large number of pseudo inliers in these datasets. These pseudo inliers exhibit an outlier pattern while being labeled as inliers.
    \item \textbf{The convergence of AUC}: \newtext{The AUC is nearly stationary throughout the entire training process. In such cases, the influence of zigzag fluctuation of AUC and entropy curve outweighs the macroscopical correlation, showing a weak correlation. Then, the entropy could not reflect the changes in AUC. 
     In this case, the ineffectiveness of $H_L$ actually does not influence the final performance, while the training time may still saved by early stopping.}
\end{itemize}
Note that label misleading may occur because the labels only mark one type of anomaly, or due to a mismatch between the model's anomaly assumption and the type of anomalies identified by the labels. These two scenarios are interconnected, and we categorize them collectively under the term "label misleading".

To quantitatively analyze these two factors, we define the following measurement. 

\subsection{\textbf{Measurement for Label Misleading}} Firstly, we define pseudo inliers as those inliers whose loss values are greater than the expected outlier loss, i.e., $\{v_i | v_i > \mathcal{L}_{out}, v_i \in V^-\}$. Here, $V^-$ and $V^+$  are the sets of inlier losses and outlier losses, respectively, while $$\mathcal{L}_{out} = \frac{\sum_{v_i \in V^+} v_i}{|V^+|}$$ is the average loss value of outliers.

\vspace{1mm}
\noindent \textbf{Pseudo Inlier Ratio $R_{pi}$:} To quantify the proportion of pseudo inliers relative to labeled outliers in the dataset, we propose the following metric:
$$ R_{pi} = \frac{|\{ v_i |v_i > \mathcal{L}_{out}, v_i \in V^-\}|}{|V^{+}|}$$
This metric $R_{pi}$ reflects the number of pseudo inliers relative to labeled outliers in the dataset. For example, given $n$ outliers in the dataset, then the $R_{pi}=2$  indicates $2n$ pseudo inliers whose losses are greater than $\mathcal{L}_{out}$.
Essentially, $R_{pi}$ measures the amount of potential anomalies that come from other types and have not been labeled.

The overall outlier ratio is also important. For example, when $R_{pi}$=1 and outlier ratio is 30\%, the proportion of both labeled outliers and pseudo inliers in the dataset could account for 60\%. This leaves inliers unable to provide sufficient learning signals for the model, thus weakening or even breaking inlier priority.

\vspace{1mm}
\noindent \textbf{The Trend of  $R_{pi}$:} The change in $R_{pi}$ during training can also reflect the the existence of label misleading. If $R_{pi}$ is small at the initial training stage and continues to increase with training, it suggests that increasingly more inliers' losses exceed $\mathcal{L}_{out}$, to some extent indicating that the labeled outliers are more like inliers compared to the pseudo inliers. Under this circumstance, inlier priority fails and such UOD model may not be suitable for this dataset. On the other hand, if we observe a significant decrease on $R_{pi}$ during the training, which may suggest the signals that the model learnt from the majority can be generailze to these pseudo inliers. In this case, a large $R_{pi}$ at the initial stage may not cause problem.

In this case, we believe that the phenomenon of label misleading can be identified from two perspectives:
\begin{itemize}
    \item $R_{pi}$ is high and does not decrease.
    \item The overall proportion of pseudo inliers and outlier ratio in the dataset is large, for example, greater than 50\% of the dataset ratio.

\end{itemize}

To sum up, the existence of these pseudo inlier weakens the dependency between AUC and entropy, as AUC is based on labeled outliers, while entropy takes both pseudo inliers and labeled outliers into account.

\subsection{The Measurement of the Converged AUC}
We regard the AUC as converged or having minor changes throughout the entire training process if the changes of AUC is less than 0.05, i.e., $max(AUC) - min(AUC) \le 0.05$. In this case, regardless of whether the strong negative correlation exists, it has minimal impact on the final performance of the model.

\subsection{Case Study}
we explain the reasons for invalidity of $H_L$ on these datasets with the worst negative correlation, including: \textit{campaign, ALOI, pendigits, celeba, yeast}. 

\noindent \textbf{campaign:} 
\begin{figure}[!htbp]
    \centering
    \includegraphics[width=0.35\textwidth]{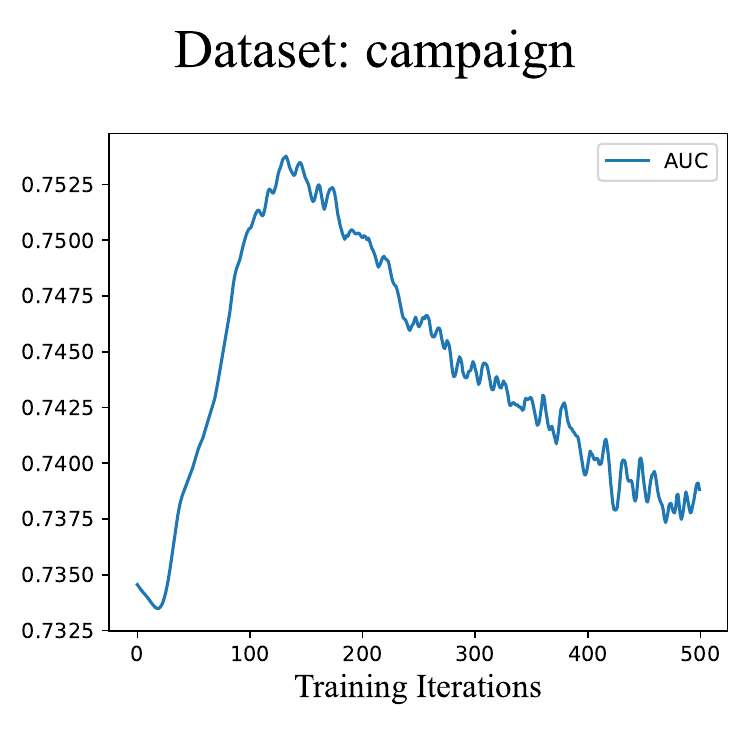}
    \caption{The AUC curve for AE training on campaign.}
    \label{fig:campaign-ana}
\end{figure}
As shown in Fig. \ref{fig:campaign-ana}, \newtext{the maximum AUC is 0.752 while the minimum AUC is 0.732.}
 Therefore, its AUC is nearly stationary  during the training of AE on Dataset, which meets our analysis of \textit{the convergence of AUC}. Thus, the positive relationship between $H_L$ and AUC is not a significant issue.

\noindent \textbf{ALOI:}
\begin{figure}[!htbp]
    \centering
    \includegraphics[width=0.35\textwidth]{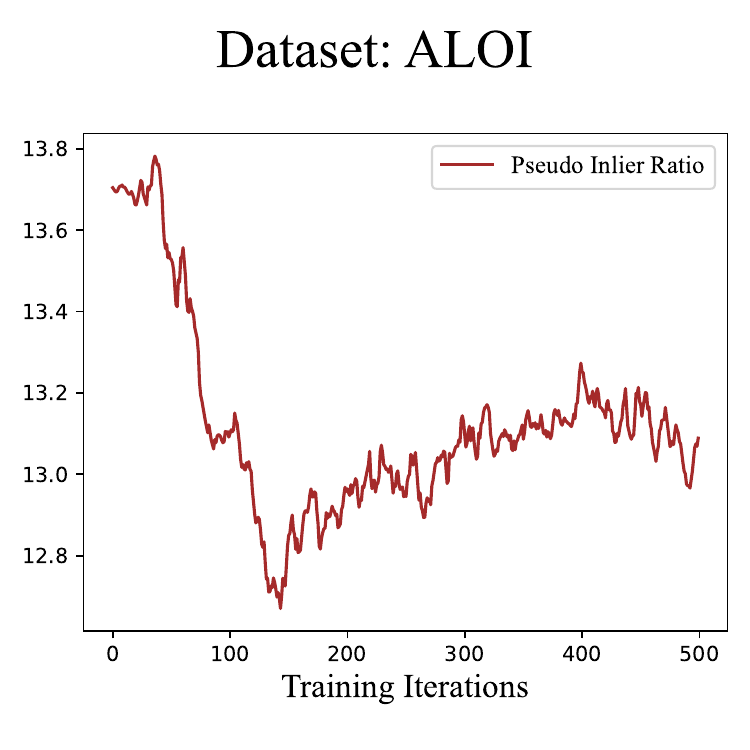}
    \caption{The $R_{pi}$ curve for AE training on ALOI.}
    \label{fig:ALOI-ana}
\end{figure}
As shown in Fig. \ref{fig:ALOI-ana}, we see that the $R_{pi}$ always remains greater than 13. Although it decreases slightly, the number of pseudo outliers always far exceeds the number of labeled outliers, which leads to the failure of $H_L$.

\noindent \textbf{pendigits and celeba:}
As shown in Fig. \ref{fig:two-ana}, we observe a rapid increase in indicators on two datasets, indicating that there are more and more pseudo inliers in the dataset, indicating the existence of label misleading.
\begin{figure}[!htbp]
    \centering
    \includegraphics[width=0.235\textwidth]{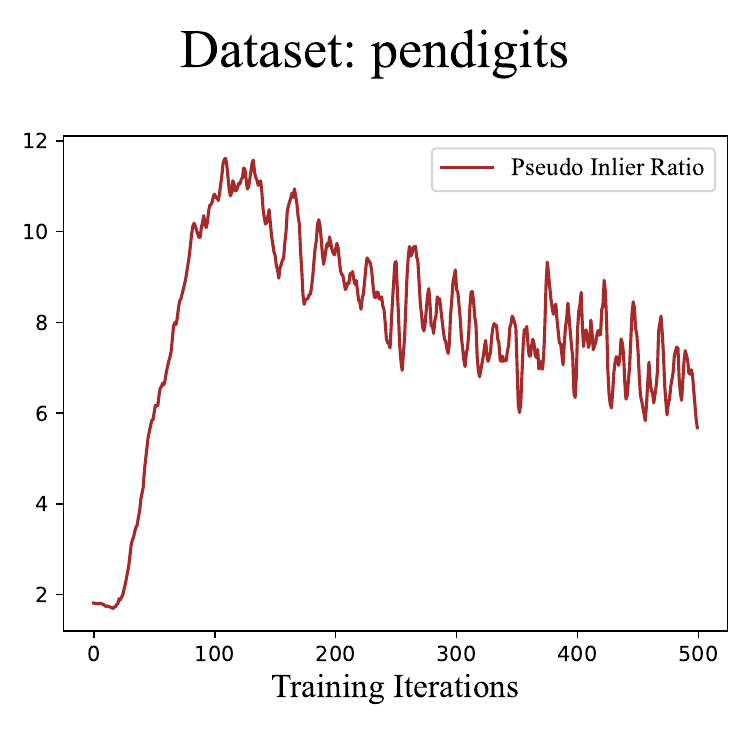}
    \includegraphics[width=0.235\textwidth]{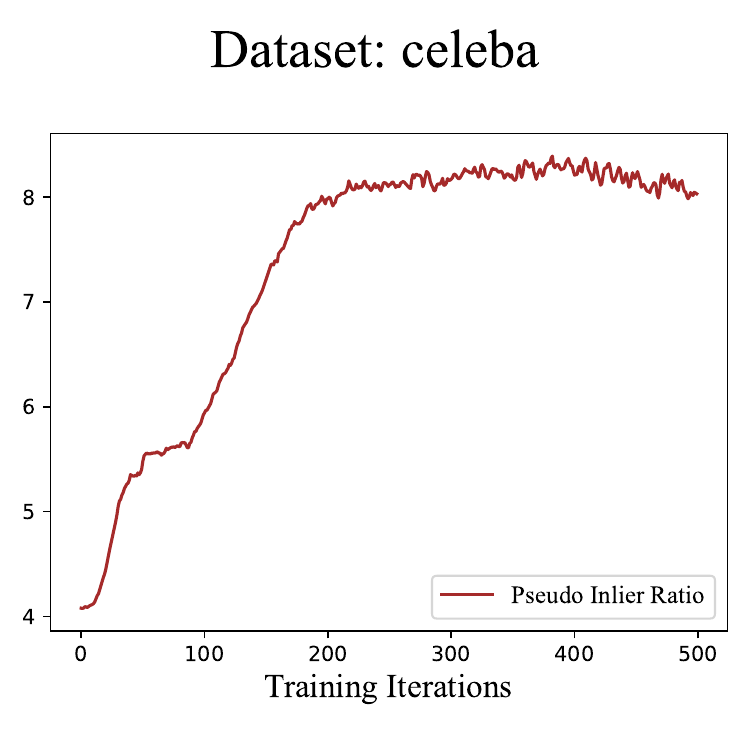}
    \caption{The $R_{pi}$ curves for AE training on pendigits and celeba.}
    \label{fig:two-ana}
\end{figure}

\noindent \textbf{yeast:}
Although the pseudo outlier ratio on yeast is not high in Fig. \ref{fig:yeast-ana}, we found that the labeled-outlier ratio of yeast accounts for approximately 34\%, which means that a coefficient of 1 will cause the total proportion of the pseudo outlier ratio and labeled-outlier ratio to reach 70\% of the data. The remaining 30\% of inliers are not enough to provide enough learning signals for the model to learn.
\begin{figure}[!htbp]
    \centering
    \includegraphics[width=0.35\textwidth]{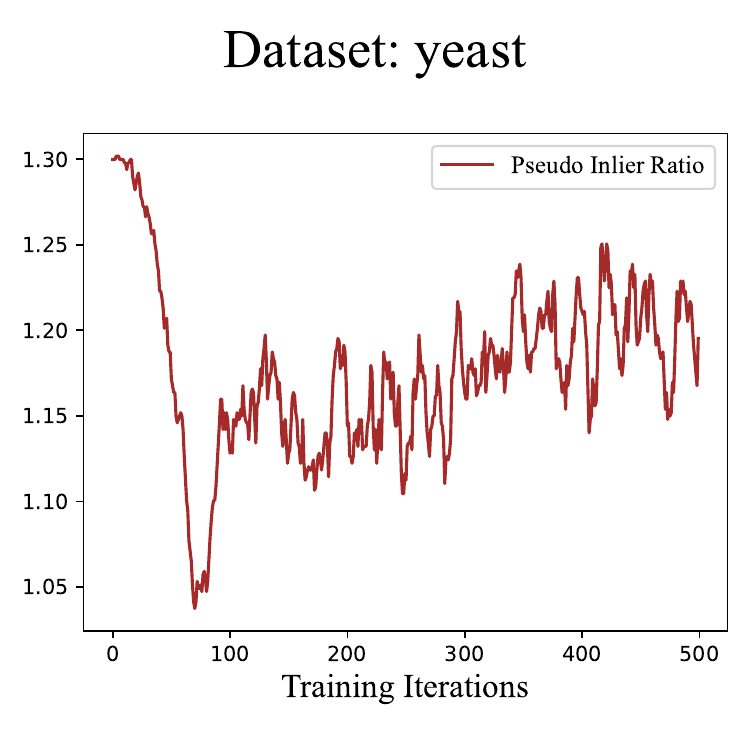}
    \caption{The $R_{pi}$ curve for AE training on yeast.}
    \label{fig:yeast-ana}
\end{figure}

\clearpage
\section{Parameter Study of  \textit{EntropyStop}}
\subsection{The Guidelines for tuning parameteres of \textit{EntropyStop}}
\label{appx:guide-for-tuning}
In this  section, we provide guidelines on how to tune the hyperparameters (HPs) of EntropyStop when working with unlabeled data. The three key parameters are the learning rate, 
$k$, and $R_{down}$. The learning rate is a crucial factor as it significantly impacts the training time. $k$ represents the patience for finding the optimal iteration, with a larger value improving accuracy but also resulting in a longer training time. $R_{down}$ sets the requirement for the significance of the downtrend.

\begin{figure}[!htbp]
  \centering
  \includegraphics[width=0.20\textwidth]{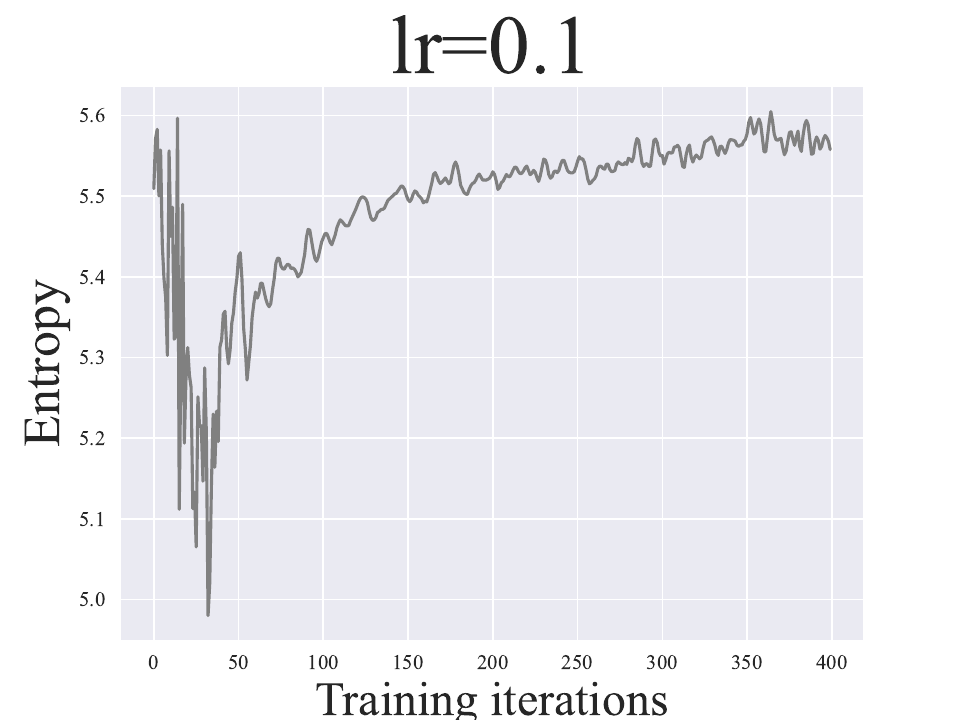}
  \includegraphics[width=0.20\textwidth]{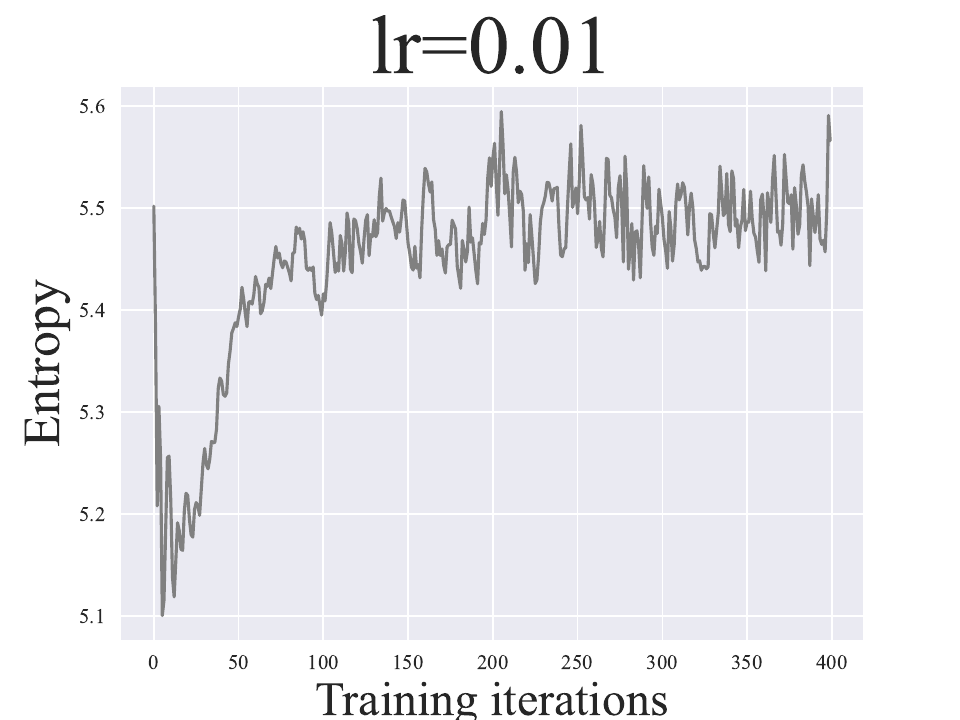}
  \includegraphics[width=0.20\textwidth]{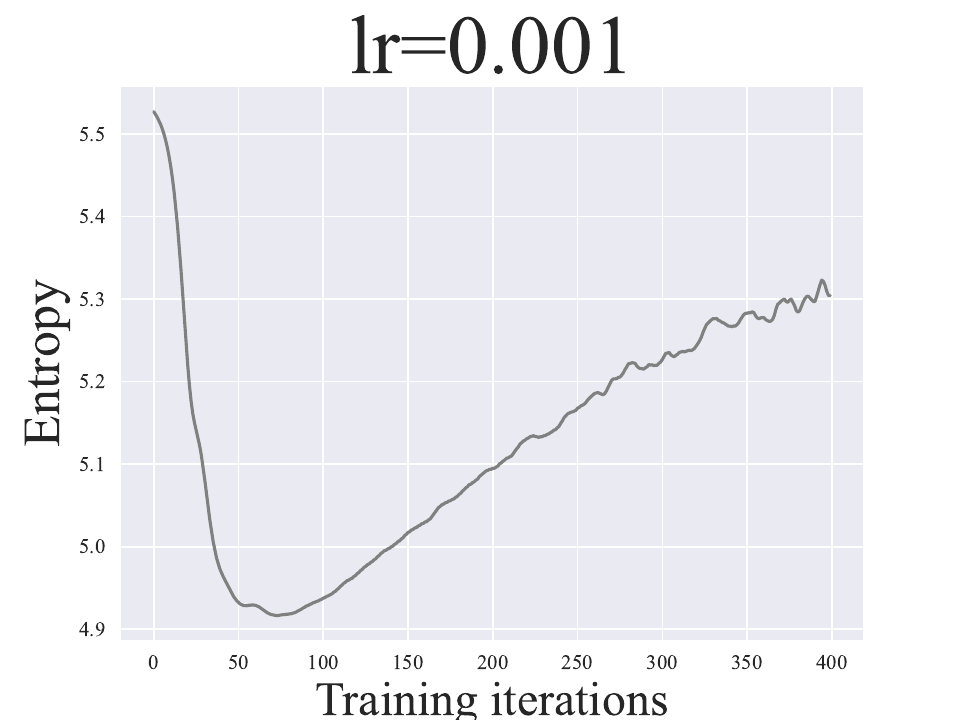}
  \includegraphics[width=0.20\textwidth]{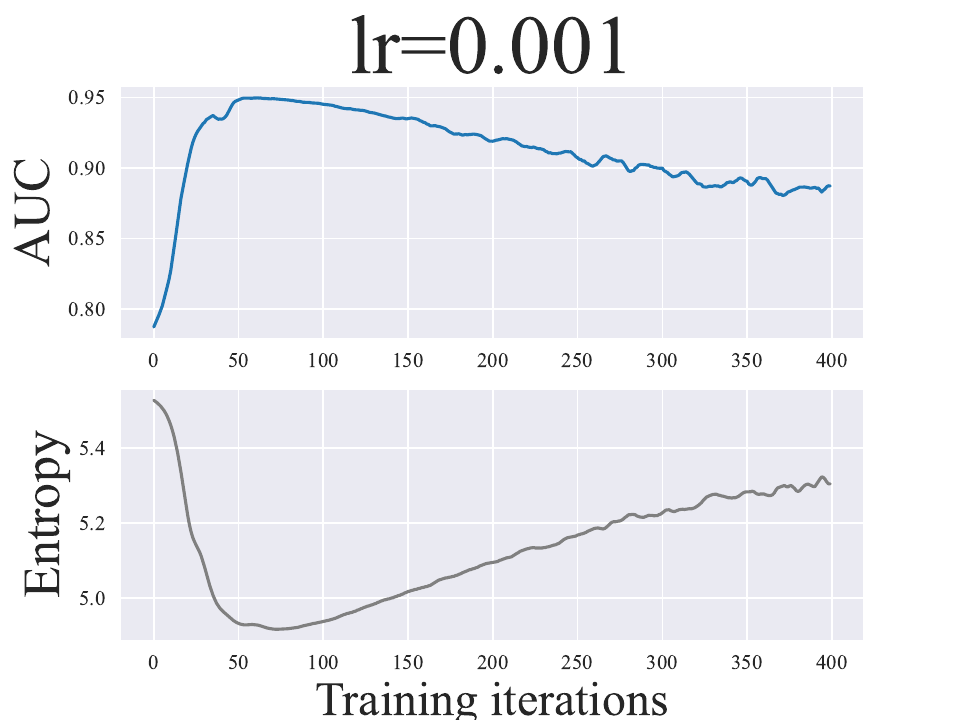}
  \caption{The loss entropy curve of training Autoencoder (AE) on dataset \textit{Ionosphere} with different learning rate.}
  \label{Fig:tune-lr-ae-iono}
\end{figure}

\textbf{Tuning learning rate.} When tuning these parameters, the learning rate should be the first consideration, as its value will determine the shape of the entropy curve, as shown in Fig. \ref{Fig:tune-lr-ae-iono}. For illustration purposes, we first set a large learning rate, such as 0.1, which is too large for  training  autoencoder (AE). This will result in a sharply fluctuating entropy curve, indicating that the learning rate is too large. By reducing the learning rate to 0.01, a less fluctuating curve during the first 50 iterations is obtained, upon which an obvious trend of first falling and then rising can be observed.
Based on the observed entropy curve, we can infer that the training process reaches convergence after approximately 50 iterations. Meanwhile, the optimal iteration for achieving the best performance may occur within the first 25 iterations. However, the overall curve remains somewhat jagged, indicating that the learning rate may need to be further reduced. After reducing the learning rate to 0.001, we observe a significantly smoother curve compared to the previous two, suggesting that the learning rate is now at an appropriate level.

A good practice for tuning learning rate is to begin with a large learning rate to get a overall view of the whole training process while the optimal iteration can be located. For the example in Fig. \ref{Fig:tune-lr-ae-iono}, it is large enough to set learning rate to 0.01 for AE model. Then, zoom out the learning rate to obtain a smoother curve and employ EntropyStop to automatically select the optimal iteration.

\begin{figure}[!htbp]
  \centering
  \includegraphics[width=0.3\textwidth]{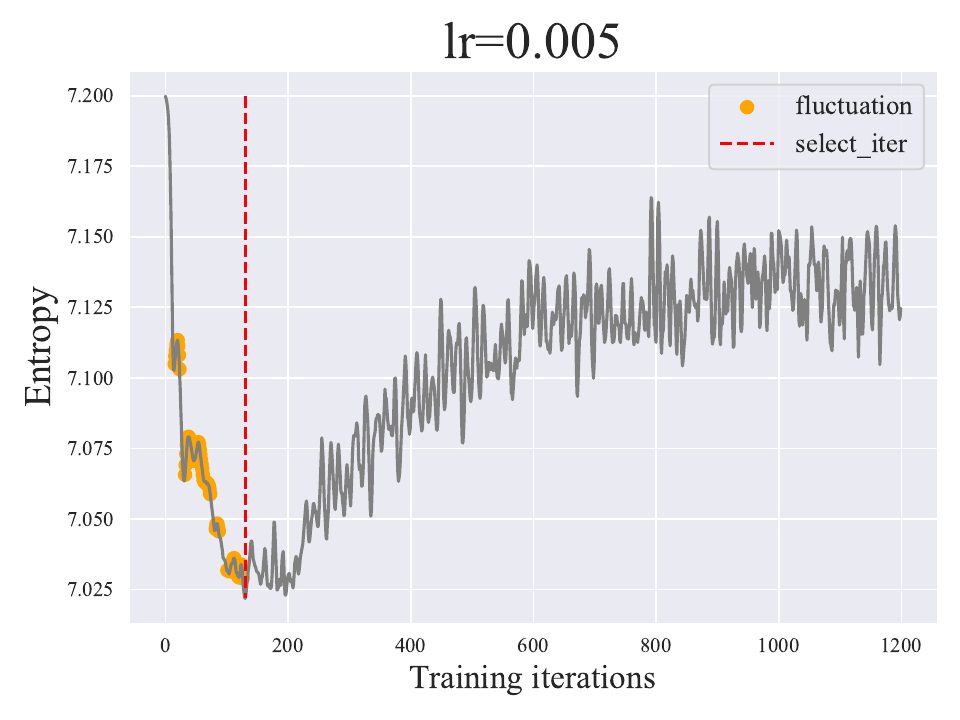}
  \caption{The example of fluctuations (or rises) during the downtrend of entropy curve when training  AE on  dataset \textit{vowels}. The value of $k$ should be set larger than the width of all fluctuations.}
  \label{Fig:fluctuation}
\end{figure}
\textbf{Tuning $k$ and $R_{down}$.} 
After setting the learning rate, the next step is to tune $k$. If the entropy curve is monotonically decreasing throughout the downtrend, then $k=1$ and $R_{down}=1$ will suffice. However, this is impossible for most cases. Thus, an important role of $k$ and $R_{down}$ is to tolerate the existence of small rise or fluctuation during the downtrend of curve. Essentially, the value of $k$ is determined by the maximum width of the fluctuations or small rises before encounting the opitmal iteration. As shown in Fig. \ref{Fig:fluctuation}, the orange color marks the fluctuation area of the curve before our target iteration. The value of $k$ should be set larger than the width of all these fluctuations. For  the example in Fig. \ref{Fig:fluctuation}, as long as $k \geq 50$ , EntropyStop can select the target iteration.

\begin{figure}[!htbp]
  \centering
  \includegraphics[width=0.3\textwidth]{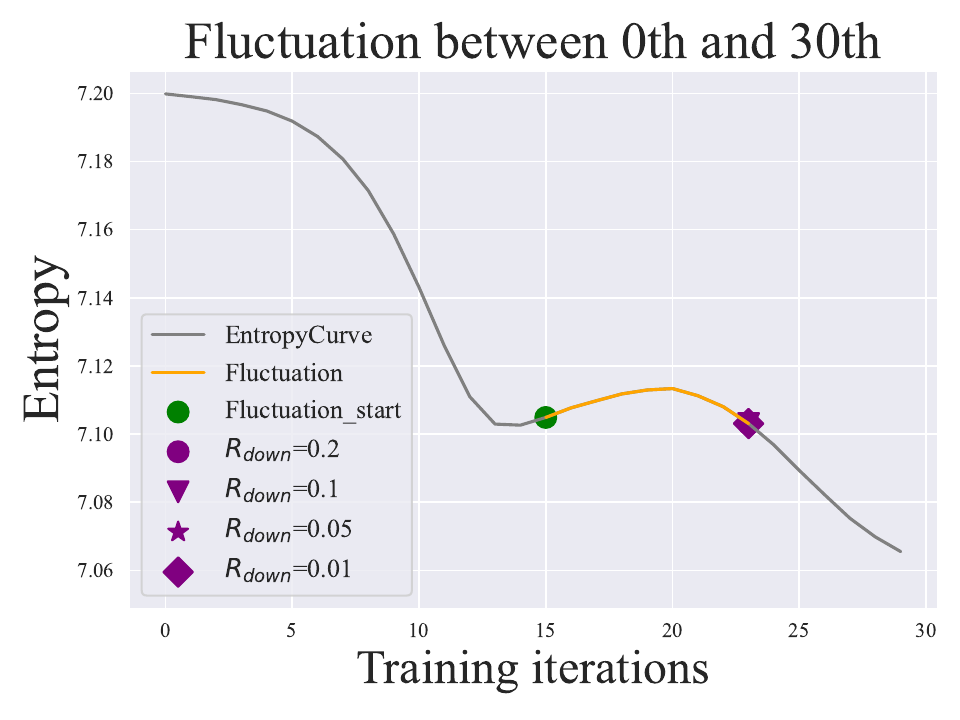}
  \includegraphics[width=0.3\textwidth]{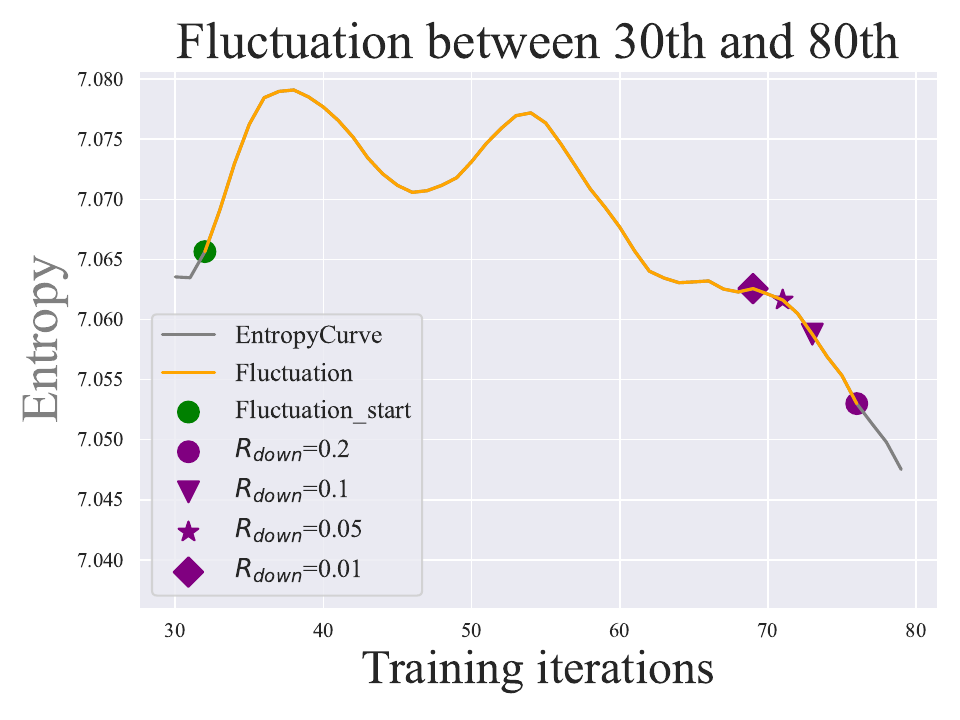}
  \caption{Explanation of the effect of $R_{down}$ in tolerating the existence of fluctuations in the entropy curve shown in Fig. \ref{Fig:fluctuation}.}
  \label{Fig:r-down-explanation}
\end{figure}

Regarding $R_{down}$, a visualization of the effect of $R_{down}$ is depicted in Fig. \ref{Fig:r-down-explanation}. When a small fluctuation (or rise) occurs during the downtrend of the curve, suppose $e_i$ is the start of this fluctuation. Then, the new lowest entropy points $e_q$ that satisfies the downtrend test of varying $R_{down}$ is close to each other. This explains the robustness of EntropyStop to $R_{down}$.

\begin{figure}[!htbp]
  \centering
    \includegraphics[width=0.25\textwidth]{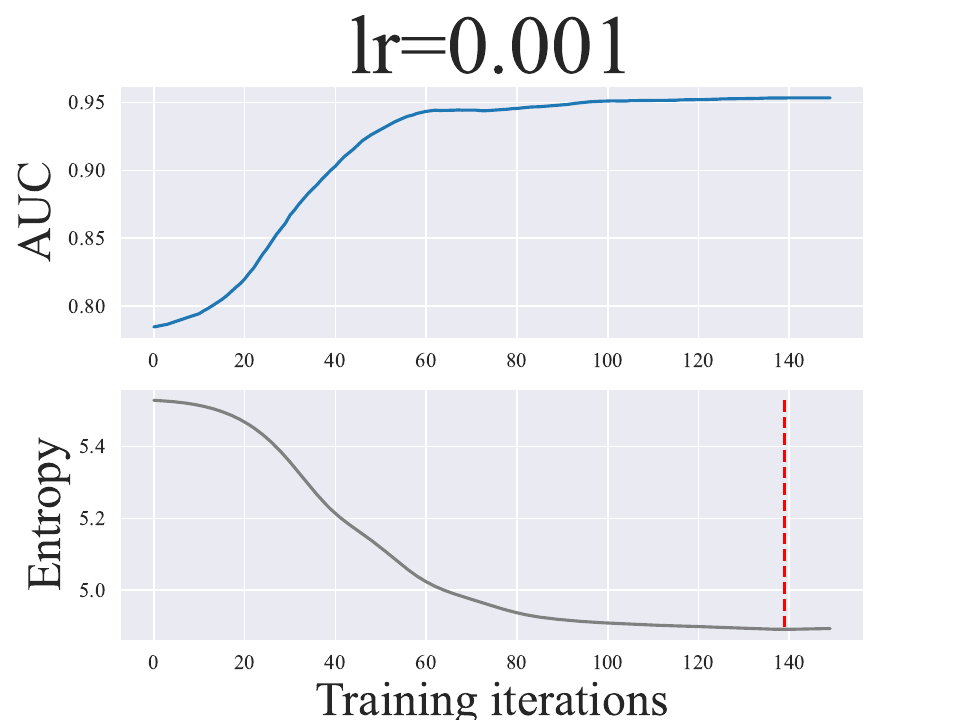}
  \includegraphics[width=0.25\textwidth]{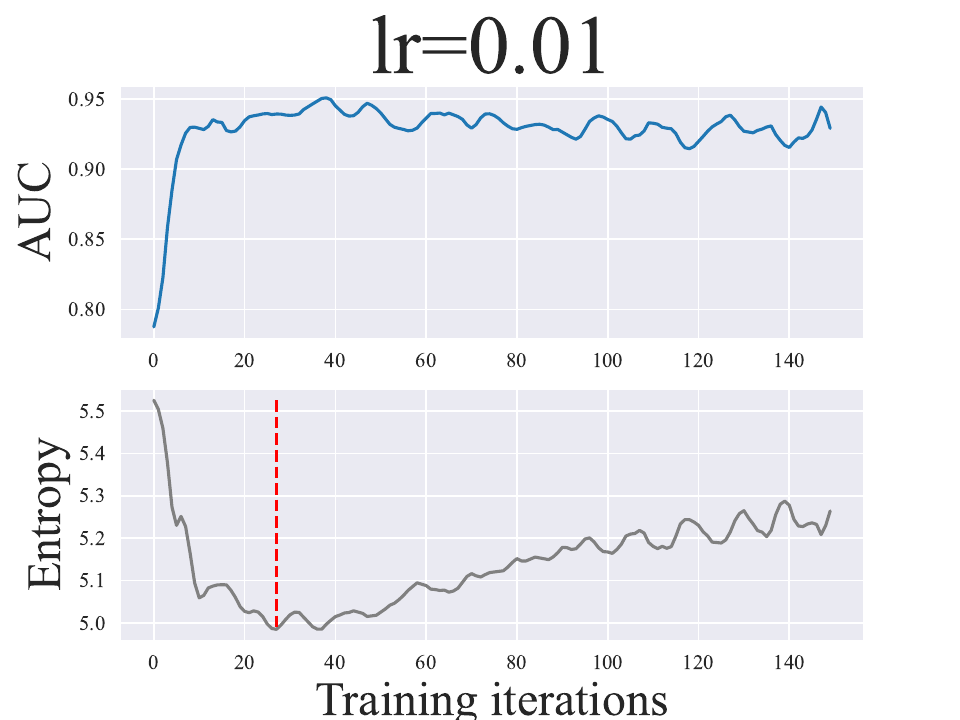}
    \includegraphics[width=0.25\textwidth]{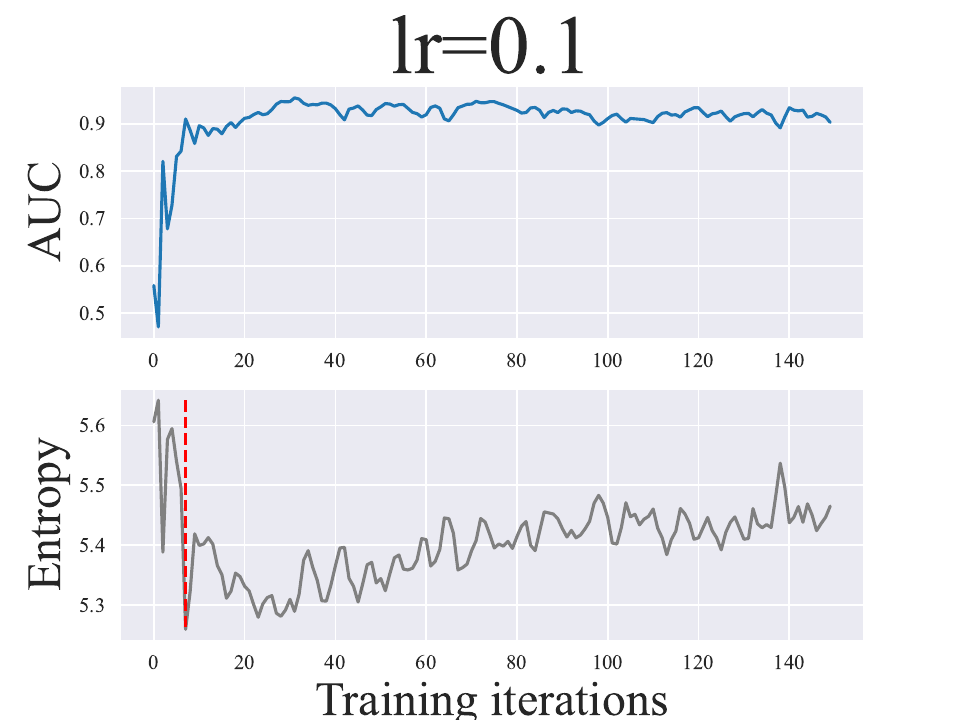}
  \caption{The effect of tolerating the fluctuations when $k$ is set to 50 and $R_{down}$ is set to 0.1 for the training of AE on the dataset \textit{Ionosphere}. The displayed training process only includes the first 150 iterations. The red dashed line marks the iteration selected by \textit{EntropyStop}.}
  \label{Fig:tolerate-to-lr}
\end{figure}

Owning to the effectiveness of $k$ and $R_{down}$ in tolerating the fluctuations, even the entropy curve is not smooth enough due to a large learning rate, the target iteration can still be selected by EntropyStop.(see Fig. \ref{Fig:tolerate-to-lr}). Nevertheless, we still recommend fine-tuning the learning rate to achieve a smooth entropy curve, which will ensure a stable and reliable training process.


\subsection{Parameter Sensitive Study}
\label{entropystop-hp-study}
We study the sensitivity of our approach to  \textit{batch size} and $R_{down}$.
Generally,  the larger \textit{batch size} can result a more stable gradient for optimization. In this case, we set  \textit{batch size} $= 1024$ in our experiments for improvement study in Sec. \ref{sec: ensemble-cmp-exp}. Here, we keep all the hyperparameters (HPs) of the AE exactly the same, except for \textit{batch size} and $R_{down}$, to precisely assess the sensitivity to these two parameters.

\subsubsection{\textbf{batch size}:}
We conduct experiments with two batch\_size, i.e., 1024 and 256. As results shown in Table \ref{tab: bsize-table}, different batch\_size does not bring significant influence.

  \begin{table}[!htbp]
  \centering
  \caption{Impact of \textit{batch size} on EntropyAE Performance}
    \begin{tabular}{r|r|r}
    \toprule
    \multicolumn{1}{l|}{batch\_size} & \multicolumn{2}{c}{EntropyAE} \\
\cmidrule{2-3}          & \multicolumn{1}{c|}{AUC} & \multicolumn{1}{c}{AP} \\
    \midrule
    256   & 0.7687  & 0.3621  \\
    1024  & 0.7689  & 0.3611  \\
    \bottomrule
    \end{tabular}%
  \label{tab: bsize-table}%
\end{table}%

\begin{figure}[!htbp]
    \centering
\includegraphics[width=0.35\textwidth]{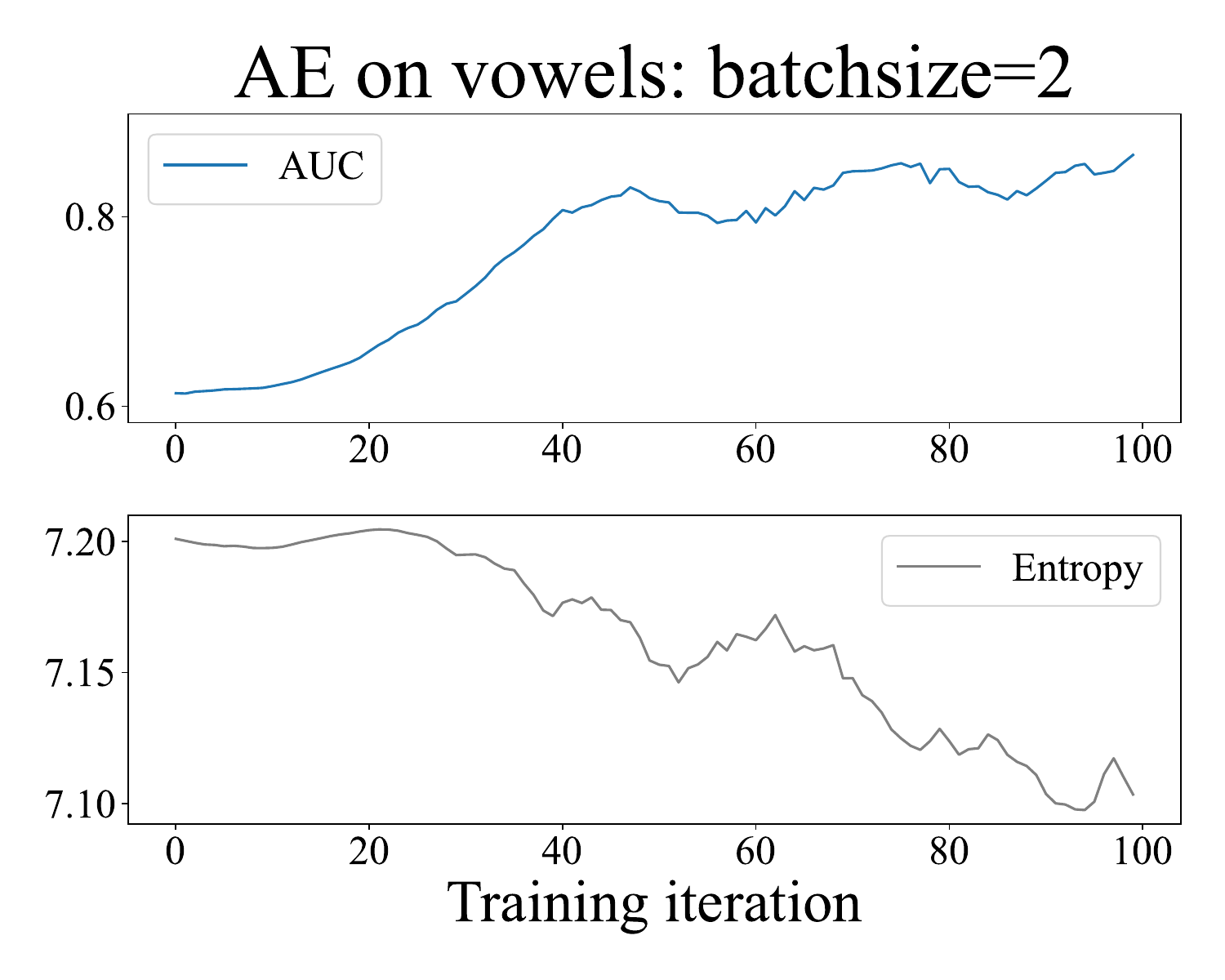}
  \caption{The loss entropy curves of the  training process of AE on the \textit{vowels} dataset with batch size = 2.}
  \label{Fig:appx-ae-batchsize}
\end{figure}

To further investigate, we reduced the \textit{batch size} to 2 to precisely examine the relationship of the AUC and $H_L$ curves. As shown in Fig. \ref{Fig:appx-ae-batchsize}, the result reveals that the AUC and $H_L$ curves still exhibit a strong negative correlation. The primary effect of reducing the\textit{ batch size} is the introduction of additional fluctuations in AUC and $H_L$, attributable to the less stable optimization of the loss.

 \subsubsection{\textbf{$R_{down}$:}}
 Although we have provided a framework for unsupervised adjustment of the $R_{down}$ parameter in Appx. \ref{appx:guide-for-tuning}, it is useful to illustrate that our approach exhibits low sensitivity to variations in $R_{down}$.
We adjusted $R_{down}$ to 0.1 and 0.01 to investigate the performance impact on EntropAE. 

\begin{table}[!htbp]
  \centering
  \caption{Impact of $R_{down}$ on EntropyAE Performance}
    \begin{tabular}{r|r|r}
    \toprule
    \multicolumn{1}{l|}{$R_{down}$} & \multicolumn{2}{c}{EntropyAE} \\
\cmidrule{2-3}          & \multicolumn{1}{c|}{AUC} & \multicolumn{1}{c}{AP} \\
    \midrule
    0.1   & 0.7687  & 0.3621  \\
    0.01  &  0.7735  & 0.3601  \\
    \bottomrule
    \end{tabular}%
    \label{tab: rdown-res}%
\end{table}%

As illustrated in Table \ref{tab: rdown-res}, our findings suggest that a smaller $R_{down}$ tends to yield a marginally higher AUC and a slightly lower AP, although the differences are not statistically significant.

\subsubsection{\textbf{Conclusion}}
Our experiments demonstrate that smaller values of batch size can work effectively. Additionally, \(R_{down}\) has a certain impact on AUC and AP, but the effect is not significantly pronounced.

\clearpage
\begin{table*}[htbp]
  \centering
  \caption{AUC of four AE models on 47 datasets}
    \begin{tabular}{ccccc}
    \toprule
    Dataset & VanillaAE & EntropyAE & ROBOD & RandNet \\
    \midrule
    4\_breastw & 0.891  & \textbf{0.928 } & 0.897  & 0.695  \\
    37\_Stamps & 0.775  & 0.905  & \textbf{0.910 } & 0.898  \\
    22\_magic.gamma & \textbf{0.754 } & 0.667  & 0.618  & 0.598  \\
    44\_Wilt & 0.659  & \textbf{0.769 } & 0.395  & 0.471  \\
    10\_cover & 0.896  & 0.898  & \textbf{0.971 } & \textbf{0.971 } \\
    14\_glass & \textbf{0.788 } & 0.707  & 0.667  & 0.721  \\
    16\_http & 0.991  & 0.995  & \textbf{0.996 } & \textbf{0.996 } \\
    38\_thyroid & 0.913  & 0.934  & 0.967  & \textbf{0.969 } \\
    12\_fault & \textbf{0.654 } & 0.616  & 0.504  & 0.484  \\
    2\_annthyroid & \textbf{0.725 } & 0.691  & 0.707  & 0.705  \\
    36\_speech & \textbf{0.497 } & 0.477  & 0.472  & 0.474  \\
    21\_Lymphography & 0.972  & 0.996  & 0.996  & \textbf{0.998 } \\
    42\_WBC & 0.948  & \textbf{0.993 } & 0.989  & 0.989  \\
    29\_Pima & 0.594  & \textbf{0.640 } & 0.581  & 0.481  \\
    47\_yeast & \textbf{0.431 } & 0.401  & 0.429  & 0.429  \\
    40\_vowels & 0.872  & \textbf{0.878 } & 0.688  & 0.552  \\
    28\_pendigits & 0.801  & 0.819  & \textbf{0.933 } & 0.932  \\
    6\_cardio & 0.802  & 0.949  & 0.956  & \textbf{0.957 } \\
    23\_mammography & 0.775  & \textbf{0.866 } & 0.752  & 0.733  \\
    45\_wine & 0.608  & \textbf{0.807 } & 0.560  & 0.646  \\
    13\_fraud & 0.949  & 0.951  & 0.951  & 0.951  \\
    25\_musk & 0.994  & 0.998  & \textbf{1.000 } & \textbf{1.000 } \\
    27\_PageBlocks & 0.893  & 0.915  & \textbf{0.920 } & 0.900  \\
    9\_census & \textbf{0.682 } & 0.677  & 0.661  & 0.659  \\
    30\_satellite & \textbf{0.638 } & 0.624  & 0.743  & 0.740  \\
    18\_Ionosphere & 0.918  & \textbf{0.927 } & 0.861  & 0.863  \\
    24\_mnist & 0.819  & 0.842  & 0.903  & \textbf{0.904 } \\
    20\_letter & \textbf{0.871 } & 0.846  & 0.595  & 0.524  \\
    46\_WPBC & \textbf{0.494 } & 0.481  & 0.452  & 0.447  \\
    35\_SpamBase & 0.528  & \textbf{0.550 } & 0.508  & 0.499  \\
    8\_celeba & \textbf{0.792 } & 0.784  & 0.756  & 0.756  \\
    15\_Hepatitis & 0.651  & \textbf{0.747 } & 0.727  & 0.750  \\
    41\_Waveform & 0.622  & 0.638  & \textbf{0.682 } & 0.648  \\
    1\_ALOI & 0.552  & \textbf{0.567 } & 0.545  & 0.544  \\
    33\_skin & 0.503  & \textbf{0.691 } & 0.486  & 0.545  \\
    5\_campaign & 0.747  & \textbf{0.738 } & 0.733  & 0.735  \\
    7\_Cardiotocography & 0.542  & 0.683  & 0.704  & \textbf{0.713 } \\
    19\_landsat & 0.484  & 0.543  & \textbf{0.549 } & 0.545  \\
    34\_smtp & \textbf{0.905 } & 0.887  & 0.829  & 0.773  \\
    3\_backdoor & \textbf{0.910 } & \textbf{0.910 } & 0.893  & 0.892  \\
    43\_WDBC & 0.930  & \textbf{0.986 } & 0.973  & 0.978  \\
    11\_donors & \textbf{0.801 } & 0.726  & 0.608  & 0.596  \\
    26\_optdigits & 0.445  & \textbf{0.531 } & 0.476  & 0.487  \\
    39\_vertebral & 0.461  & 0.385  & \textbf{0.494 } & 0.486  \\
    31\_satimage-2 & 0.952  & 0.971  & \textbf{0.982 } & 0.979  \\
    32\_shuttle & 0.935  & 0.987  & \textbf{0.993 } & 0.992  \\
    17\_InternetAds & 0.564  & \textbf{0.615 } & 0.614  & 0.611  \\
    \bottomrule
    \end{tabular}%
  \label{tab:ae-auc}%
\end{table*}%

\newpage

\begin{table*}[htbp]
  \centering
  \caption{AP of four AE models on 47 datasets}
    \begin{tabular}{ccccc}
    \toprule
    Dataset & VanillaAE & EntropyAE & ROBOD & RandNet \\
    \midrule
    4\_breastw & 0.761  & 0.842  & \textbf{0.874 } & 0.698  \\
    37\_Stamps & 0.221  & 0.344  & \textbf{0.355 } & 0.339  \\
    22\_magic.gamma & \textbf{0.677 } & 0.591  & 0.578  & 0.562  \\
    44\_Wilt & 0.077  & \textbf{0.181 } & 0.041  & 0.048  \\
    10\_cover & 0.074  & 0.084  & 0.145  & \textbf{0.147 } \\
    14\_glass & \textbf{0.130 } & 0.113  & 0.103  & 0.109  \\
    16\_http & 0.325  & 0.463  & 0.355  & \textbf{0.473 } \\
    38\_thyroid & 0.199  & 0.276  & 0.426  & \textbf{0.455 } \\
    12\_fault & \textbf{0.469 } & 0.443  & 0.372  & 0.356  \\
    2\_annthyroid & 0.192  & 0.179  & 0.224  & \textbf{0.227 } \\
    36\_speech & \textbf{0.024 } & 0.019  & 0.019  & 0.018  \\
    21\_Lymphography & 0.545  & 0.931  & 0.931  & \textbf{0.948 } \\
    42\_WBC & 0.543  & \textbf{0.924 } & 0.853  & 0.845  \\
    29\_Pima & 0.423  & \textbf{0.465 } & 0.421  & 0.360  \\
    47\_yeast & \textbf{0.305 } & 0.295  & 0.303  & 0.302  \\
    40\_vowels & \textbf{0.279 } & 0.272  & 0.096  & 0.053  \\
    28\_pendigits & 0.083  & 0.081  & 0.205  & \textbf{0.216 } \\
    6\_cardio & 0.369  & 0.607  & \textbf{0.661 } & 0.659  \\
    23\_mammography & 0.091  & \textbf{0.182 } & 0.152  & 0.157  \\
    45\_wine & 0.104  & \textbf{0.238 } & 0.102  & 0.140  \\
    13\_fraud & 0.106  & 0.131  & \textbf{0.156 } & \textbf{0.156 } \\
    25\_musk & 0.883  & 0.954  & \textbf{1.000 } & \textbf{1.000 } \\
    27\_PageBlocks & 0.478  & 0.522  & \textbf{0.565 } & 0.546  \\
    9\_census & \textbf{0.095 } & 0.092  & 0.086  & 0.086  \\
    30\_satellite & 0.497  & 0.565  & \textbf{0.695 } & 0.693  \\
    18\_Ionosphere & 0.906  & \textbf{0.924 } & 0.803  & 0.798  \\
    24\_mnist & 0.369  & 0.377  & 0.442  & \textbf{0.445 } \\
    20\_letter & \textbf{0.361 } & 0.273  & 0.108  & 0.089  \\
    46\_WPBC & \textbf{0.231 } & 0.227  & 0.213  & 0.211  \\
    35\_SpamBase & 0.399  & \textbf{0.410 } & 0.391  & 0.389  \\
    8\_celeba & 0.076  & \textbf{0.112 } & 0.107  & 0.107  \\
    15\_Hepatitis & 0.289  & \textbf{0.343 } & 0.329  & 0.341  \\
    41\_Waveform & 0.047  & 0.045  & \textbf{0.054 } & 0.048  \\
    1\_ALOI & 0.038  & \textbf{0.039 } & 0.037  & 0.037  \\
    33\_skin & 0.199  & \textbf{0.284 } & 0.184  & 0.203  \\
    5\_campaign & \textbf{0.292 } & 0.279  & 0.283  & 0.288  \\
    7\_Cardiotocography & 0.334  & 0.417  & 0.454  & \textbf{0.461 } \\
    19\_landsat & 0.195  & 0.215  & \textbf{0.222 } & \textbf{0.222 } \\
    34\_smtp & 0.165  & 0.348  & 0.366  & \textbf{0.368 } \\
    3\_backdoor & \textbf{0.547 } & 0.543  & 0.520  & 0.515  \\
    43\_WDBC & 0.204  & \textbf{0.556 } & 0.469  & 0.497  \\
    11\_donors & \textbf{0.132 } & 0.105  & 0.087  & 0.086  \\
    26\_optdigits & 0.024  & \textbf{0.029 } & 0.025  & 0.025  \\
    39\_vertebral & 0.120  & 0.099  & \textbf{0.124 } & 0.118  \\
    31\_satimage-2 & 0.375  & 0.572  & \textbf{0.778 } & 0.776  \\
    32\_shuttle & 0.620  & 0.853  & \textbf{0.918 } & 0.917  \\
    17\_InternetAds & 0.225  & \textbf{0.295 } & 0.293  & 0.288  \\
    \bottomrule
    \end{tabular}%
  \label{tab:ae-ap}%
\end{table*}%

\clearpage

\begin{table*}[htbp]
  \centering
  \caption{Average Training Time (Compared to VanillaAE) of four AE models on 47 datasets}
    \begin{tabular}{ccccc}
    \toprule
    Dataset & VanillaAE & EntropyAE & ROBOD & RandNet \\
    \midrule
    4\_breastw & 1.00  & 0.08  & 1.18  & 7.61  \\
    37\_Stamps & 1.00  & 0.22  & 3.69  & 21.58  \\
    22\_magic.gamma & 1.00  & 0.00  & 3.66  & 21.63  \\
    44\_Wilt & 1.00  & 0.02  & 3.77  & 21.44  \\
    10\_cover & 1.00  & 0.00  & 3.71  & 21.72  \\
    14\_glass & 1.00  & 0.15  & 3.07  & 18.53  \\
    16\_http & 1.00  & 0.00  & 3.68  & 21.29  \\
    38\_thyroid & 1.00  & 0.02  & 3.31  & 19.01  \\
    12\_fault & 1.00  & 0.07  & 3.50  & 19.86  \\
    2\_annthyroid & 1.00  & 0.01  & 3.27  & 18.96  \\
    36\_speech & 1.00  & 0.02  & 3.33  & 33.63  \\
    21\_Lymphography & 1.00  & 0.17  & 3.04  & 19.96  \\
    42\_WBC & 1.00  & 0.13  & 3.07  & 19.35  \\
    29\_Pima & 1.00  & 0.16  & 3.12  & 19.13  \\
    47\_yeast & 1.00  & 0.04  & 3.15  & 19.19  \\
    40\_vowels & 1.00  & 0.11  & 3.17  & 19.22  \\
    28\_pendigits & 1.00  & 0.02  & 3.22  & 19.18  \\
    6\_cardio & 1.00  & 0.04  & 3.32  & 19.53  \\
    23\_mammography & 1.00  & 0.01  & 3.16  & 18.99  \\
    45\_wine & 1.00  & 0.33  & 4.37  & 28.90  \\
    13\_fraud & 1.00  & 0.00  & 3.67  & 20.79  \\
    25\_musk & 1.00  & 0.02  & 3.84  & 28.92  \\
    27\_PageBlocks & 1.00  & 0.02  & 3.69  & 21.35  \\
    9\_census & 1.00  & 0.00  & 3.53  & 40.85  \\
    30\_satellite & 1.00  & 0.02  & 3.25  & 21.64  \\
    18\_Ionosphere & 1.00  & 0.20  & 3.31  & 23.02  \\
    24\_mnist & 1.00  & 0.05  & 3.91  & 26.89  \\
    20\_letter & 1.00  & 0.08  & 3.91  & 22.05  \\
    46\_WPBC & 1.00  & 0.13  & 3.71  & 22.00  \\
    35\_SpamBase & 1.00  & 0.05  & 3.98  & 25.08  \\
    8\_celeba & 1.00  & 0.00  & 3.95  & 22.99  \\
    15\_Hepatitis & 1.00  & 0.86  & 2.66  & 18.01  \\
    41\_Waveform & 1.00  & 0.01  & 3.68  & 21.85  \\
    1\_ALOI & 1.00  & 0.00  & 3.89  & 22.02  \\
    33\_skin & 1.00  & 0.00  & 3.65  & 21.28  \\
    5\_campaign & 1.00  & 0.00  & 3.75  & 25.48  \\
    7\_Cardiotocography & 1.00  & 0.04  & 3.64  & 22.28  \\
    19\_landsat & 1.00  & 0.01  & 3.87  & 23.79  \\
    34\_smtp & 1.00  & 0.00  & 3.60  & 21.48  \\
    3\_backdoor & 1.00  & 0.00  & 3.90  & 30.01  \\
    43\_WDBC & 1.00  & 0.09  & 3.68  & 21.54  \\
    11\_donors & 1.00  & 0.00  & 3.51  & 21.84  \\
    26\_optdigits & 1.00  & 0.02  & 3.90  & 25.68  \\
    39\_vertebral & 1.00  & 0.38  & 3.46  & 21.39  \\
    31\_satimage-2 & 1.00  & 0.01  & 3.79  & 22.23  \\
    32\_shuttle & 1.00  & 0.00  & 3.52  & 21.35  \\
    17\_InternetAds & 1.00  & 0.03  & 4.02  & 58.91  \\
    \bottomrule
    \end{tabular}%
  \label{tab:ae-time}%
\end{table*}%

\end{document}